% This is LLNCS.DEM the demonstration file of
% the LaTeX macro package from Springer-Verlag
% for Lecture Notes in Computer Science,
% version 2.4 for LaTeX2e as of 16. April 2010
%
\documentclass[runningheads, envcountsame, a4paper]{llncs}
\usepackage{makeidx}  % allows for indexgeneration

\bibliographystyle{apalike}  % Use the "unsrtnat" BibTeX style for formatting the Bibliography
\usepackage[square, authoryear, comma, sort&compress]{natbib}  % Use the "Natbib" style for the references in the Bibliography

\usepackage{etoolbox}
\patchcmd{\bibliography}{\chapter*}{\section*}{}{}

% Default packages
\usepackage{glossaries}
\usepackage[utf8]{inputenc} % allow utf-8 input
\usepackage[T1]{fontenc}    % use 8-bit T1 fonts
\usepackage{url}             % simple URL typesetting
\usepackage{booktabs}       % professional-quality tables
\usepackage{amsfonts}       % blackboard math symbols
\usepackage{nicefrac}       % compact symbols for 1/2, etc.
\usepackage{microtype}      % microtypography

% My Packages
\usepackage{acronym}
\usepackage{amsmath}
\usepackage{amssymb}
\usepackage{mathbbol}
\usepackage{mathtools}
\usepackage{mathrsfs}
\usepackage{vector}
\usepackage{cleveref}
\usepackage{bm}
\usepackage[dvipsnames]{xcolor}
\usepackage{multirow}
\usepackage{algorithm}
\usepackage{algorithmic}
\newcommand{\argmax}{\operatornamewithlimits{argmax}}

\newacronym{CME}{CME}{conditional mean embedding}
\newacronym{MCE}{MCE}{multiclass conditional embedding}
\newacronym{CEN}{CEN}{conditional embedding network}
\newacronym{RKHS}{RKHS}{reproducing kernel Hilbert space}
\newacronym{SVM}{SVM}{support vector machine}
\newacronym{SVC}{SVC}{support vector classifier}
\newacronym{GP}{GP}{Gaussian process}
\newacronym{GPs}{GPs}{Gaussian processes}
\newacronym{GPC}{GPC}{Gaussian process classifier}
\newacronym{KRR}{KRR}{kernel ridged regressor}
\newacronym{RLSC}{RLSC}{regularized least squares classifier}
\newacronym{ERM}{ERM}{empirical risk minimization}
\newacronym{MMD}{MMD}{maximum mean discrepancy}
\newacronym{RCB}{RCB}{Rademacher complexity bound}
\newacronym{OVA}{OVA}{one versus all}
\newacronym{OVO}{OVO}{one versus one}
\newacronym{CNN}{CNN}{convolutional neural network}
\newacronym{ARD}{ARD}{automatic relevance determination}
\newacronym{HS}{HS}{Hilbert-Schmidt}

\usepackage[pdftex]{graphicx}
\usepackage{epstopdf}
\usepackage[misc]{ifsym}

\title{Hyperparameter Learning \\ for Conditional Kernel Mean Embeddings \\ with Rademacher Complexity Bounds}
\toctitle{Hyperparameter Learning for Conditional Kernel Mean Embeddings with Rademacher Complexity Bounds}

\titlerunning{Kernel Hyperparameter Learning with Rademacher Complexity Bounds}
% abbreviated title (for running head), also used for the TOC unless \toctitle is used

\author{Kelvin Hsu \inst{1, 2} \and
		Richard Nock \inst{1, 2, 3} \and
		Fabio Ramos \inst{1, 2}}
\authorrunning{K. Hsu, R. Nock, and F. Ramos} % abbreviated author list (for running head)
%
%%%% list of authors for the TOC (use if author list has to be modified)
\tocauthor{Kelvin~Hsu, Richard~Nock, Fabio~Ramos}
\institute{	University of Sydney, Sydney, Australia \\ \and
			Data61, CSIRO, Sydney, Australia \\ \and
			Australian National University, Canberra, Australia}

\begin{document}
	\frontmatter          % for the preliminaries
	\maketitle \setcounter{footnote}{0}          % typeset the title of the contribution

	\begin{abstract}
		Conditional kernel mean embeddings are nonparametric models that encode conditional expectations in a reproducing kernel Hilbert space. While they provide a flexible and powerful framework for probabilistic inference, their performance is highly dependent on the choice of kernel and regularization hyperparameters. Nevertheless, current hyperparameter tuning methods predominantly rely on expensive cross validation or heuristics that is not optimized for the inference task. For conditional kernel mean embeddings with categorical targets and arbitrary inputs, we propose a hyperparameter learning framework based on Rademacher complexity bounds to prevent overfitting by balancing data fit against model complexity. Our approach only requires batch updates, allowing scalable kernel hyperparameter tuning without invoking kernel approximations. Experiments demonstrate that our learning framework outperforms competing methods, and can be further extended to incorporate and learn deep neural network weights to improve generalization.\footnote{Source code available at: \texttt{https://github.com/Kelvin-Hsu/cake}}
		
		\keywords{Hyperparameter Learning, Kernel Hyperparameters, Conditional Mean Embeddings, Kernel Mean Embeddings, Kernel Methods, Hilbert Space Embeddings, Reproducing Kernel Hilbert Space, Nonparametric Inference, Rademacher Complexity, Learning Theoretic Bounds}
	\end{abstract}
	
	\section{Introduction}
	\label{sec:introduction}
	
		% Paragraph: What are conditional mean embeddings and why are they useful?
		\Glspl{CME} are attractive because they encode conditional expectations in a \gls{RKHS}, bypassing the need for a parametrized distribution \citep{song2013kernel}. They are part of a broader class of techniques known as kernel mean embeddings, where nonparametric probabilistic inference can be carried out entirely within the \gls{RKHS} because difficult marginalization integrals become simple linear algebra \citep{muandet2016kernel}. This very general framework is core to modern kernel probabilistic methods, including kernel two-sample testing \citep{gretton2007kernel}, kernel Bayesian inference  \citep{fukumizu2013kernel}, density estimation \citep{song2008tailoring, kanagawa2014recovering}, component analysis \citep{muandet2013domain}, dimensionality reduction \citep{fukumizu2004dimensionality}, feature discovery \citep{jitkrittum2016interpretable}, and state space filtering \citep{kanagawa2016filtering}.
		
		% Paragraph: Why do we want to learn their hyperparameters?
		Nevertheless, like most kernel based models, their performance is highly dependent on the hyperparameters chosen. For these models, the model selection process usually begins by selecting a kernel, whose parameters become part of the model \textit{hyperparameters}, which may further include noise or regularization hyperparameters. Given a set of hyperparameters, training is performed by solving either a convex optimization problem, such as the case in \glspl{SVM} \citep{scholkopf2002learning}, or a set of linear equations, such as the case in \gls{GPs} \citep{rasmussen2006gaussian}, \glspl{RLSC} \citep{rifkin2003regularized}, and \glspl{CME}. Unfortunately, hyperparameter tuning is not straight forward, and often cross validation \citep{song2013kernel} or median length heuristics \citep{muandet2016kernel} remain as the primary approaches for this task. The former can be computationally expensive and sensitive to the selection and number of validation sets, while the latter heuristic only applies to hyperparameters with a length scale interpretation and makes no reference to the conditional inference problem involved as it does not make use of the targets.
		
		% Paragraph: Success story of Gaussian processes as an example of what we would like to achieve
		One notable success story in this domain are \gls{GPs}, which employ their marginal likelihood as an objective for hyperparameter learning. The marginal likelihood arises from its Bayesian formulation, and exhibits certain desirable properties -- in particular, the ability to automatically balance between data fit and model complexity. On the other hand, \glspl{CME} are not necessarily Bayesian, and hence they do not benefit from a natural marginal likelihood formulation, yet such a balance is critical when generalizing the model beyond known examples.
		
		% Paragraph: Our main contributions
		Can we formulate a learning objective for \glspl{CME} to balance data fit and model complexity, similar to the marginal likelihood of \gls{GPs}? For \glspl{CME} with categorical targets and arbitrary input, we present such a learning objective as our main contribution. In particular, we: (1) derive a data-dependent model complexity measure $r(\theta, \lambda)$ for a \gls{CME} with hyperparameters $(\theta, \lambda)$ based on the Rademacher complexity of a relevant class of \glspl{CME}, (2) propose a novel learning objective based on this complexity measure to control generalization risk by balancing data fit against model complexity, and (3) design a scalable hyperparameter learning algorithm under this objective using stochastic batch gradient updates. We show that this learning objective produces \glspl{CME} that generalize better than that learned from cross validation, \gls{ERM}, and median length heuristics on standard benchmarks, and apply such an algorithm to incorporate and learn neural network weights to improve generalization accuracy.
	
	\section{Background and Related Work}
	\label{sec:background}
	
		\subsection{Conditional Mean Embeddings}
	
			To construct a conditional mean embedding operator $\mathcal{U}_{Y | X}$ corresponding to the distribution $\mathbb{P}_{Y | X}$, where $X : \Omega \to \mathcal{X}$ and $Y: \Omega \to \mathcal{Y}$ are measurable random variables, we first choose a kernel $k : \mathcal{X} \times \mathcal{X} \to \mathbb{R}$ for the input space $\mathcal{X}$ and another kernel $l : \mathcal{Y} \times \mathcal{Y} \to \mathbb{R}$ for the output space $\mathcal{Y}$. These kernels $k$ and $l$ each describe how similarity is measured within their respective domains $\mathcal{X}$ and $\mathcal{Y}$, and are symmetric positive definite such that they uniquely define the \gls{RKHS} $\mathcal{H}_{k}$ and $\mathcal{H}_{l}$. The conditional mean embedding operator $\mathcal{U}_{Y | X}$ is then the operator $\mathcal{U} : \mathcal{H}_{k} \to \mathcal{H}_{l}$ for which $\mu_{Y | X = x} = \mathcal{U} k(x, \cdot)$, where $\mu_{Y | X = x} := \mathbb{E}[l(Y, \cdot) | X = x]$ is the \gls{CME} \citep{song2009hilbert}. In this sense, it sweeps out a family of conditional mean embeddings $\mu_{Y | X = x}$ in $\mathcal{H}_{l}$, each indexed by the input variable $x \in \mathcal{X}$. We then define cross covariance operators $C_{YX} := \mathbb{E}[l(Y, \cdot) \otimes k(X, \cdot)] : \mathcal{H}_{k} \to \mathcal{H}_{l}$ and $C_{XX} := \mathbb{E}[k(X, \cdot) \otimes k(X, \cdot)] : \mathcal{H}_{k} \to \mathcal{H}_{k}$. Alternatively, they can be seen as elements within the tensor product space $C_{YX} \in \mathcal{H}_{l} \otimes \mathcal{H}_{k}$ and $C_{XX} \in \mathcal{H}_{k} \otimes \mathcal{H}_{k}$.
		
			Under the assumption that $k(x, \cdot) \in \mathrm{image}(C_{XX})$, it can be shown that $\mathcal{U}_{Y | X} = C_{YX} C_{XX}^{-1}$. While this assumption is satisfied for finite domains $\mathcal{X}$ with a characteristic kernel $k$, it does not necessarily hold when $\mathcal{X}$ is a continuous domain \citep{fukumizu2004dimensionality}, which is the case for many classification problems. In this case, $C_{YX} C_{XX}^{-1}$ becomes only an approximation to $\mathcal{U}_{Y | X}$, and we instead regularize the inversion and use $\mathcal{U}_{Y | X} = C_{YX} (C_{XX} + \lambda I)^{-1}$, which also serves to avoid overfitting \citep{song2013kernel}. \glspl{CME} are useful for probabilistic inference since conditional expectations of a function $g \in \mathcal{H}_{l}$ can be expressed as inner products with the \gls{CME}, $\mathbb{E}[g(Y) | X = x] = \langle \mu_{Y | X = x}, g \rangle$, provided that $\mathbb{E}[g(Y) | X = \cdot] \in \mathcal{H}_{k}$ \citep[Theorem 4]{song2009hilbert}.
		
			Furthermore, as both $C_{YX}$ and $C_{XX}$ are defined via expectations, we can estimate them with their respective empirical means to derive a nonparametric estimate for $\mathcal{U}_{Y | X}$ based on finite collection of observations $\{x_{i}, y_{i}\} \in \mathcal{X} \times \mathcal{Y}$, $i \in \mathbb{N}_{n} := \{1, \dots, n\}$,
			\begin{equation}
				\hat{\mathcal{U}}_{Y | X} = \Psi (K + n \lambda I)^{-1} \Phi^{T},
			\label{eq:empirical_conditional_embedding}
			\end{equation}
			where $K_{ij} := k(x_{i}, x_{j})$, $\Phi := \begin{bmatrix} \phi(x_{1}) & \dots & \phi(x_{n}) \end{bmatrix}$, $\Psi := \begin{bmatrix} \psi(y_{1}) & \dots & \psi(y_{n}) \end{bmatrix}$, $\phi(x) := k(x, \cdot)$, and $\psi(y) := l(y, \cdot)$ \citep{song2013kernel}. The empirical \gls{CME} defined by $\hat{\mu}_{Y | X = x} := \hat{\mathcal{U}}_{Y | X} k(x, \cdot)$ then stochastically converges to the \gls{CME} $\mu_{Y | X = x}$ in the RKHS norm at a rate of $O_{p}((n \lambda)^{-\frac{1}{2}} + \lambda^{\frac{1}{2}})$, under the assumption that $k(x, \cdot) \in \mathrm{image}(C_{XX})$ \cite[Theorem 6]{song2009hilbert}. This allows us to approximate the conditional expectation with $\langle \hat{\mu}_{Y | X = x}, g \rangle$ instead, 
			\begin{equation}
				\mathbb{E}[g(Y) | X = x] \approx \langle \hat{\mu}_{Y | X = x}, g \rangle = \bvec{g}^{T} (K + n \lambda I)^{-1} \bvec{k}(x),
			\label{eq:empirical_conditional_expectation}
			\end{equation}
			where $\bvec{g} := \{g(y_{i})\}_{i = 1}^{n}$ and $\bvec{k}(x) := \{k(x_{i}, x)\}_{i = 1}^{n}$.
	
		\subsection{Hyperparameter Learning}
	
			Hyperparameter learning for \glspl{CME} is particularly difficult compared to marginal or joint embeddings, since the kernel $k = k_{\theta}$ with hyperparameters $\theta \in \Theta$ is to be learned jointly with a regularization hyperparameter $\lambda \in \Lambda = \mathbb{R}_{+}$. \cite{grunewalder2012conditional} proposed to hold out a validation set $\{k(x_{t_{j}}, \cdot), l(y_{t_{j}}, \cdot)\}_{j = 1}^{J}$ and minimize $\frac{1}{J} \sum_{j = 1}^{J} \big\| l(y_{t_{j}}, \cdot) - \hat{\mathcal{U}}_{Y | X} k(x_{t_{j}}, \cdot) \big\|_{\mathcal{H}_{l}}^{2}$ where $\hat{\mathcal{U}}_{Y | X}$ is estimated from the remaining training set using \eqref{eq:empirical_conditional_embedding}. This could also be repeated over multiple folds for cross validation. \citet[p. 15]{song2013kernel} also uses this cross validation approach, but adds regularization $\lambda \| \mathcal{U} \|^{2}_{HS}$ to the validation objective. Validation sets are necessary for improving generalization to unseen examples. This is because the \gls{CME} is already the solution that minimizes the objective from \cite{grunewalder2012conditional} over the operator space, so further optimization over the hyperparmeters using the same training set would lead to overfitting. Moreoever, the cross validation objective changes depending on the particular split and number of folds. Additionally, by fitting a separate model for each fold during learning, they incur a large computational cost of $O(J n^{3})$ for $J$ folds, and become prohibitive with large datasets. This spells a need for an alternative hyperparameter learning framework using a different objective.
	
			When cross validation is too expensive, length scales can be set by the median heuristic \citep{muandet2016kernel} via $\ell = \mathrm{median}_{i, j}(\| x_{i} - x_{j} \|_{2})$ for many stationary kernels. However, they cannot be used to set hyperparameters other than length scales, such as $\lambda$. In the setting of two sample testing, \cite{gretton2012optimal} note that they can possibly lead to poor performance. In the context of \glspl{CME}, they are also unable to leverage supervision from labels. \cite{flaxman2016bayesian} proposed a Bayesian learning framework for marginal mean embeddings via inducing points, although it is unclear how this can be extended to \glspl{CME}. \cite{fukumizu2009kernel} also investigated the choice of kernel bandwidth for stationary kernels in the setting of binary classification and two sample testing using \gls{MMD}, but has yet to generalize to \glspl{CME} or multiclass settings.
	
		\subsection{Rademacher Complexity}
	
			Rademacher complexity \citep{bartlett2002rademacher} measures the expressiveness of a function class $F$ by its ability to shatter, or fit, noise. They are data-dependent measures, and are thus particularly well suited to learning tasks where generalization is vital, since complexity penalties that are not data dependent cannot be universally effective \citep{kearns1997experimental}. The Rademacher complexity \citep[Definition 2]{bartlett2002rademacher} of a function class $F$ is defined by $\mathcal{R}_{n}(F) := \mathbb{E}[\sup_{f \in F} \| \frac{2}{n} \sum_{i = 1}^{n} \sigma_{i} f(X_{i}) \|]$, where $\{\sigma_{i}\}_{i = 1}^{n}$ are \textit{iid} Rademacher random variables, taking values in $\{-1, 1\}$ with equal probability, and $\{X_{i}\}_{i = 1}^{n}$ are \textit{iid} random variables from the same distribution $\mathbb{P}_{X}$. Since $\{\sigma_{i}\}_{i = 1}^{n}$ are distributed independently without knowledge of $f$, the intuition is to interpret $\{\sigma_{i}\}_{i = 1}^{n}$ as labels that are simply noise. For a given set of inputs $\{X_{i}\}_{i = 1}^{n}$, the term inside the norm is high when the sign of $f(X_{i})$ matches the signs of $\sigma_{i}$ averaged across $i \in \mathbb{N}_{n}$, meaning that $f$ has managed to fit the noise well. We take this as the defining feature of what it means for a model $f$ to be complex. The suprenum then finds the $f$ within $F$ that fits the noise the best, intuitively representing the most complex $f$ within $F$. The final expectation then averages this quantity across realizations of $\{X_{i}\}_{i = 1}^{n}$ from $\mathbb{P}_{X}$.
	
			Rademacher complexities are usually applied in the context where classifiers are trained by minimizing some empirical loss within a class of classifiers whose Rademacher complexity is bounded. In the context of multi-label learning, \cite{yu2014large} used trace norm regularization to bound the Rademacher complexity, achieving tight generalization bounds. \cite{xu2016local} extends the trace norm regularization approach by considering the local Rademacher complexity on a subset of the predictor class, where they instead minimize the tail sum of the predictor singular values. Local Rademacher complexity has also been employed for multiple kernel learning \citep{kloft2011local, cortes2013learning} to learn convex combinations of fixed kernels for \glspl{SVM}. Similarly, \cite{pontil2013excess} also used trace norm regularization to bound the Rademacher complexity and minimize the truncated hinge loss. Nevertheless, while Rademacher complexities have been employed to restrict the function class considered for training weight parameters, they have not been applied to learn kernel hyperparameters itself.

	\section{Multiclass Conditional Embeddings}
	\label{sec:multiclass_conditional_embedding}
	
		In this section we present a particular type of \glspl{CME} that are suitable for prediction tasks with categorical targets. We show that for \glspl{CME} with categorical targets and arbitrary inputs, we can further infer conditional probabilities directly, and not just conditional expectations. As there can be more than two target categories, we refer to these \glspl{CME} as \glspl{MCE}.
	
		For categorical targets, the output label space is finite and discrete, taking values only in $\mathcal{Y} = \mathbb{N}_{m} := \{1, \dots, m\}$. Naturally, we choose the Kronecker delta kernel $\delta : \mathbb{N}_{m} \times \mathbb{N}_{m} \to \{0, 1\}$ as the output kernel $l$, where labels that are the same have unit similarity and labels that are different have no similarity. That is, for all pairs of labels $y_{i}, y_{j} \in \mathcal{Y}$, $\delta(y_{i}, y_{j}) = 1$ only if $y_{i} = y_{j}$ and is $0$ otherwise. As $\delta$ is an integrally strictly positive definite kernel on $\mathbb{N}_{m}$, it is therefore characteristic \citep[Theorem 7]{sriperumbudur2010hilbert}. Therefore, by definition \citep{fukumizu2004dimensionality}, $\delta$ uniquely defines a \gls{RKHS}  $\mathcal{H}_{\delta} = \overline{\mathrm{span}\{\delta(y, \cdot) : y \in \mathcal{Y}\}}$, which is the closure of the span of its kernel induced features \citep{xu2009refinement}. For $\mathcal{Y} = \mathbb{N}_{m}$, this means that any $g : \mathbb{N}_{m} \to \mathbb{R}$ that is bounded on its discrete domain $\mathbb{N}_{m}$ is in the \gls{RKHS} of $\delta$, because we can always write $g = \sum_{y = 1}^{m} g(y) \delta(y, \cdot) \in \mathrm{span}\{\delta(y, \cdot) : y \in \mathcal{Y}\} \subseteq \mathcal{H}_{\delta}$. In particular, indicator functions on $\mathbb{N}_{m}$ are in $\mathcal{H}_{\delta}$, since $\mathbb{1}_{c}(y) := \mathbb{1}_{\{c\}}(y) = \delta(c, y)$, so that $\mathbb{1}_{c} = \delta(c, \cdot)$ are simply the canonical features of $\mathcal{H}_{\delta}$. Such properties do not necessarily hold for continuous target domains in general. For discrete target domains, this convenient property enables consistent estimations of decision probabilities.
	
		Let $p_{c}(x) := \mathbb{P}[Y = c | X = x]$ be the \textit{decision probability function} for class $c \in \mathbb{N}_{m}$, which is the probability of the class label $Y$ being $c$ when the example $X$ is $x$. Importantly, note that there are no restrictions on the input domain $\mathcal{X}$ as long as a kernel $k$ can be defined on it. For example, $\mathcal{X}$ could be the continuous Euclidean space $\mathbb{R}^{d}$, the space of images, or the space of strings. We begin by writing this probability as an expectation of indicator functions,
		\begin{equation}
			p_{c}(x) := \mathbb{P}[Y = c | X = x] = \mathbb{E}[\mathbb{1}_{c}(Y) | X = x].
		\label{eq:decision_probability}
		\end{equation}
		
		With $\mathbb{1}_{c} \in \mathcal{H}_{\delta}$, we let $g = \mathbb{1}_{c}$ in \eqref{eq:empirical_conditional_expectation} and $\bvec{1}_{c} := \{\mathbb{1}_{c}(y_{i})\}_{i = 1}^{n}$ to estimate the right hand side of \eqref{eq:decision_probability} by
		\begin{equation}
			\hat{p}_{c}(x) = f_{c}(x) := \bvec{1}_{c}^{T} (K + n \lambda I)^{-1} \bvec{k}(x).
		\label{eq:empirical_decision_probability}
		\end{equation}
		
		Let $\bvec{Y} := \begin{bmatrix} \bvec{1}_{1} & \bvec{1}_{2} & \cdots & \bvec{1}_{m} \end{bmatrix} \in \{0, 1\}^{n \times m}$ be the one hot encoded labels of $\{y_{i}\}_{i = 1}^{n}$. The vector of empirical decision probabilities over the classes $c \in \mathbb{N}_{m}$ is then
		\begin{equation}
			\hat{\bvec{p}}(x) = \bvec{f}(x) := \bvec{Y}^{T} (K + n \lambda I)^{-1} \bvec{k}(x) \in \mathbb{R}^{m}.
		\label{eq:empirical_decision_probability_vector}
		\end{equation}
		
		Since $\mathcal{U} = \hat{\mathcal{U}}_{Y | X}$ \eqref{eq:empirical_conditional_embedding} is the solution to a regularized least squares problem in the RKHS from $k(x, \cdot) \in \mathcal{H}_{k}$ to $l(y, \cdot) \in \mathcal{H}_{l}$ \citep{grunewalder2012conditional}, \glspl{CME} are essentially \glspl{KRR} with targets in the \gls{RKHS}. In this case, because $\mathcal{Y} = \mathbb{N}_{m}$ is discrete, $\mathcal{H}_{\delta}$ can be identified with $\mathbb{R}^{m}$. As a result, the rows of the \gls{MCE} can also be seen as $m$ \glspl{KRR} \citep{friedman2001elements} on binary $\{0, 1\}$-targets, where they all share the same input kernel $k$. Because they all share the same kernel to form the \gls{MCE}, we can show that the empirical decision probabilities \eqref{eq:empirical_decision_probability} do converge to the population decision probability.
	
		\begin{theorem}[Convergence of Empirical Decision Probability Function]
			\label{thm:probability_convergence_copy}
			Assuming that $k(x, \cdot)$ is in the image of $C_{XX}$, the empirical decision probability function $\hat{p}_{c} : \mathcal{X} \to \mathbb{R}$ \eqref{eq:empirical_decision_probability} converges uniformly to the true decision probability $p_{c} : \mathcal{X} \to [0, 1]$ \eqref{eq:decision_probability} at a stochastic rate of at least $O_{p}((n \lambda)^{-\frac{1}{2}} + \lambda^{\frac{1}{2}})$ for all $c \in \mathcal{Y} = \mathbb{N}_{m}$. See appendix for proof, including for all subsequent theorems.
		\end{theorem}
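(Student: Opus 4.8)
The plan is to reduce the convergence of the scalar decision probabilities to the already-established RKHS-norm convergence of the empirical \gls{CME}, exploiting that the output kernel is the Kronecker delta. The starting observation is that, because $\mathbb{1}_{c} = \delta(c, \cdot) \in \mathcal{H}_{\delta}$ is a canonical feature, the reproducing property lets me write both the true and the empirical decision probabilities as inner products against the same element, $p_{c}(x) = \langle \mu_{Y | X = x}, \mathbb{1}_{c} \rangle$ and $\hat{p}_{c}(x) = \langle \hat{\mu}_{Y | X = x}, \mathbb{1}_{c} \rangle$. Subtracting expresses the pointwise error as a single inner product,
\begin{equation*}
\hat{p}_{c}(x) - p_{c}(x) = \langle \hat{\mu}_{Y | X = x} - \mu_{Y | X = x}, \mathbb{1}_{c} \rangle .
\end{equation*}

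First I would apply Cauchy--Schwarz and use that the canonical feature has unit norm, $\| \mathbb{1}_{c} \|_{\mathcal{H}_{\delta}} = \sqrt{\delta(c, c)} = 1$, to obtain
\begin{equation*}
| \hat{p}_{c}(x) - p_{c}(x) | \le \| \hat{\mu}_{Y | X = x} - \mu_{Y | X = x} \|_{\mathcal{H}_{\delta}} .
\end{equation*}
This bound is identical for every class $c \in \mathbb{N}_{m}$, so uniformity over the finite label set is immediate: the right-hand side does not depend on $c$. Invoking the embedding convergence result of \citet[Theorem 6]{song2009hilbert}, valid under the assumption $k(x, \cdot) \in \mathrm{image}(C_{XX})$, the right-hand side is $O_{p}((n \lambda)^{-\frac{1}{2}} + \lambda^{\frac{1}{2}})$, which already yields the claimed rate pointwise in $x$ and uniformly in $c$.

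The remaining, and main, step is to upgrade this from pointwise to uniform convergence over the input $x$. Here I would factor the embedding error through the operator by writing $\hat{\mu}_{Y | X = x} - \mu_{Y | X = x} = (\hat{\mathcal{U}}_{Y | X} - \mathcal{U}_{Y | X}) k(x, \cdot)$, so that
\begin{equation*}
\| \hat{\mu}_{Y | X = x} - \mu_{Y | X = x} \|_{\mathcal{H}_{\delta}} \le \| \hat{\mathcal{U}}_{Y | X} - \mathcal{U}_{Y | X} \|_{\mathrm{op}} \, \sqrt{k(x, x)} .
\end{equation*}
Under a bounded-kernel assumption $\sup_{x} k(x, x) \le \kappa^{2}$, the factor $\sqrt{k(x,x)}$ is uniformly bounded by $\kappa$, and it suffices to bound $\| \hat{\mathcal{U}}_{Y | X} - \mathcal{U}_{Y | X} \|_{\mathrm{op}}$ by a quantity with $x$-independent constants. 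This I would control through the same bias--variance decomposition underlying \citet[Theorem 6]{song2009hilbert}: a regularization term $C_{YX}((C_{XX} + \lambda I)^{-1} - C_{XX}^{-1})$ contributing the $\lambda^{\frac{1}{2}}$ rate, and an estimation term comparing the empirical and population covariance operators contributing the $(n \lambda)^{-\frac{1}{2}}$ rate, both now in operator norm rather than pointwise.

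I expect this operator-norm control to be the crux of the argument, since the naive $\mathcal{U}_{Y | X} = C_{YX} C_{XX}^{-1}$ need not be a bounded operator on an infinite-dimensional $\mathcal{H}_{k}$; the image assumption $k(x, \cdot) \in \mathrm{image}(C_{XX})$ is precisely what tames the regularization bias, and it must be leveraged with constants uniform in $x$ (for instance, via a uniform bound on the preimages $\beta_{x}$ satisfying $k(x, \cdot) = C_{XX} \beta_{x}$). Once the operator-norm rate is in hand, combining it with the unit-norm output features and the bounded kernel closes the argument, giving $\sup_{x} \max_{c} | \hat{p}_{c}(x) - p_{c}(x) | = O_{p}((n \lambda)^{-\frac{1}{2}} + \lambda^{\frac{1}{2}})$ as required.
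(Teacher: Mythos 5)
Your first step is exactly the paper's proof: write $\hat{p}_{c}(x) - p_{c}(x) = \langle \hat{\mu}_{Y | X = x} - \mu_{Y | X = x}, \mathbb{1}_{c} \rangle_{\mathcal{H}_{\delta}}$, apply Cauchy--Schwarz, and use $\| \mathbb{1}_{c} \|_{\mathcal{H}_{\delta}} = \sqrt{\delta(c, c)} = 1$. Where you diverge is in what you believe remains to be done afterwards. The paper's notion of ``uniform'' convergence, formalized in \cref{thm:pointwise_uniform_convergence}, is precisely that the factor $\gamma(x)$ multiplying $\| \hat{\mu}_{Y | X = x} - \mu_{Y | X = x} \|_{\mathcal{H}_{\delta}}$ in the pointwise error bound can be taken independent of $x$; since here $\gamma(x) = 1$, the paper is finished at this point, inheriting the rate from \citet[Theorem 6]{song2009hilbert} with an $x$-independent multiplicative constant. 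Under that reading, your proof is already complete after the Cauchy--Schwarz step, and the ``remaining, and main, step'' you identify is not part of the paper's argument at all.

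What you attempt next --- bounding $\sup_{x} \max_{c} | \hat{p}_{c}(x) - p_{c}(x) |$ through $\| \hat{\mathcal{U}}_{Y | X} - \mathcal{U}_{Y | X} \|_{\mathrm{op}} \sqrt{k(x, x)}$ --- targets a genuinely stronger conclusion (an $O_{p}$ bound on the supremum over $x$) than what the paper establishes. Your instinct that this demands more is mathematically sound: pointwise stochastic bounds with $x$-independent constants do not by themselves control the supremum, so the paper's ``uniform'' is weaker than sup-norm convergence. But as written, this part of your proposal is a program rather than a proof: it requires $\sup_{x \in \mathcal{X}} k(x, x) < \infty$, which is not among the theorem's hypotheses; it requires uniform control of the preimages $\beta_{x}$ with $k(x, \cdot) = C_{XX} \beta_{x}$, again not assumed; and it requires re-deriving the bias--variance analysis of \citet[Theorem 6]{song2009hilbert} in operator norm, which you do not carry out. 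So the verdict splits on interpretation: read as the paper intends, your argument coincides with the paper's and the extra step is unnecessary; read as sup-norm convergence, your proof has a concrete gap exactly at the operator-norm step you yourself flag as the crux.
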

		
		In particular, the assumption $k(x, \cdot) \in \mathrm{image}(C_{XX})$ is a statement on the input kernel $k$, not the output kernel $l$, which is a Kronecker delta $l = \delta$ for \glspl{MCE}. It is worthwhile to note that this assumption is common for \glspl{CME}, and is not as restrictive as it may first appear, as it can be relaxed through introducing the regularization hyperparameter $\lambda$ \eqref{eq:empirical_conditional_embedding} in practice \citep[p.74-75, Sec. 3 and 3.1 \textit{resp.}]{song2009hilbert, song2013kernel, muandet2016kernel}.
		
		Note that for finite $n$ the probability estimates \eqref{eq:empirical_decision_probability} may not necessarily lie in the range $[0,1]$ nor form a normalized distribution for finite $n$. Nonetheless, \cref{thm:probability_convergence_copy} guarantees that they approach one with increasing sample size. When normalized distributions are required, clip-normalized estimates can be used,
		\begin{equation}
			\tilde{p}_{c}(x) := \frac{\max\{\hat{p}_{c}(x), 0\}}{\sum_{j = 1}^{m} \max\{\hat{p}_{j}(x), 0\}}.
		\label{eq:empirical_decision_probability_clip_normalized}
		\end{equation}
		
		This does not change the resulting prediction, since $\hat{y}(x) = \argmax_{c \in \mathbb{N}_{m}} \hat{p}_{c}(x) = \argmax_{c \in \mathbb{N}_{m}} \tilde{p}_{c}(x)$. \Cref{thm:probability_convergence_copy} also implies that eventually the effect of clip-normalization vanishes, where $\tilde{p}_{c}(x)$ approaches to both $\hat{p}_{c}(x)$ and thus $p_{c}(x)$ with increasing sample sizes.
		
		Importantly, this enables \glspl{MCE} to be naturally applied to perform probabilistic classification in multiclass settings with categorical targets. In contrast, in terms of probabilistic classification, \glspl{SVC} do not output probabilities and probabilistic extensions require difficult calibration, while \glspl{GPC} require posterior approximations. Furthermore, in terms of the multiclass setting, multiclass extensions to \glspl{SVC} and \glspl{GPC} often employ the \gls{OVA} or \gls{OVO} scheme \citep{aly2005survey}, resulting in multiple separately trained binary classifiers with no guarantees of coherence between their outputs. Instead, training a single \gls{MCE} is sufficient for producing consistent multiclass probabilistic estimates.
		
		Similar to \gls{RLSC}, \glspl{MCE} are solutions to a regularized least squares problem in a \gls{RKHS} \citep{grunewalder2012conditional}, resulting in a similar system of linear equations. Nevertheless, \glspl{RLSC} primarily differ in the way they handle the labels, in which binary labels $\{-1, 1\}$ appear directly in the squared loss instead of its kernel feature $\delta(y_{i}, \cdot)$ or, equivalently, its one hot encoded form $\bvec{y}_{i}$. Consequently, multiclass extensions for \gls{RLSC} either require using the \gls{OVA} scheme \citep{rifkin2003regularized} which suffers from computational and coherence issues, or alternatively minimize the total loss across all binarized tasks for the overall least squares problem \citep{pahikkala2012unsupervised}. Although the latter attempts to link the classifiers together through its loss, both approaches still produce separate classifiers for each class. As a result, multiclass \gls{RLSC} does not produce consistent estimates of class probabilities as in \cref{thm:probability_convergence_copy} for \glspl{MCE}.
	
	\section{Hyperparameter Learning with Rademacher Complexity Bounds}
	\label{sec:hyperparameter_learning}
	
		In this section we derive learning theoretic bounds that motivate our proposed hyperparameter learning algorithm, and discuss how it can be extended in various ways to enhance scalability and performance. From here onwards, we denote $\theta$ as the kernel hyperparameters of the kernel $k = k_{\theta}$.
		
		We begin by defining a loss function as a measure for performance. For decision functions of the form $\bvec{f} : \mathcal{X} \to \mathcal{A} = \mathbb{R}^{m}$ whose entries are probability estimates, we employ a modified cross entropy loss,
		\begin{equation}
			\mathcal{L}_{\epsilon}(y, \bvec{f}(x)) := - \log{ [\bvec{y}^{T} \bvec{f}(x)]_{\epsilon}^{1} } = - \log{ [f_{y}(x)]_{\epsilon}^{1} },
		\label{eq:cross_entropy_loss_copy}
		\end{equation}
		to express risk, where we use the notation $[\;\cdot\;]_{\epsilon}^{1} := \min\{\max\{\;\cdot\;, \epsilon\}, 1\}$ for $\epsilon \in (0, 1)$. It is worthwhile to point out that this choice only makes sense due to \cref{thm:probability_convergence_copy}, as it allows us to interpret the outputs of the \gls{CME} as asymptotic probability estimates. Note that we employ the loss on the original probability estimates \eqref{eq:empirical_decision_probability_vector}, not the clip-normalized version \eqref{eq:empirical_decision_probability_clip_normalized}. We employ this loss in virtue of \cref{thm:probability_convergence_copy}, where we expect $\bvec{f}(x)$ \eqref{eq:empirical_decision_probability_vector} to be approximations to the population decision probabilities. In contrast, direct outputs from \glspl{SVC}, \glspl{GPC}, or \glspl{RLSC} are not consistent probability estimates and cannot take advantage of \eqref{eq:cross_entropy_loss_copy} easily.
		
		However, simply minimizing the empirical loss $\frac{1}{n} \sum_{i = 1}^{n} \mathcal{L}_{\epsilon}(y_{i}, \bvec{f}_{\theta, \lambda}(x_{i}))$ over the hyperparameters $(\theta, \lambda)$ could lead to an overfitted model. We therefore employ Rademacher complexity bounds to control the model complexity of \glspl{MCE}.
	
		Let $\Theta$ and $\Lambda$ be a space of kernel and regularization hyperparameters respectively. We define the class of \glspl{MCE} over these hyperparameter spaces by	
		\begin{equation}
			F_{n}(\Theta, \Lambda) := \{ \bvec{f}_{\theta, \lambda}(x) : \theta \in \Theta, \lambda \in \Lambda \}.
		\label{eq:predictor_class_copy}
		\end{equation}
		We denote $W_{\theta, \lambda}^{T} \equiv \hat{\mathcal{U}}^{(\theta, \lambda)}_{Y | X}$ so that $\| W_{\theta, \lambda} \|_{\mathrm{tr}} = \| \hat{\mathcal{U}}^{(\theta, \lambda)}_{Y | X} \|_{HS}$ to reflect the dependence on $(\theta, \lambda)$ and also to emphasize the role it plays as the weights of the decision function. We first restrict the space of hyperparameters by the norms of $W_{\theta, \lambda}$ and $k_{\theta}(x, x)$ to obtain an upper bound to the Rademacher complexity of $F_{n}(\Theta, \Lambda)$.
	
		\begin{theorem}[\gls{MCE} Rademacher Complexity Bound]
			\label{thm:rademacher_complexity_bound_copy}
			Suppose that the trace norm $\| W_{\theta, \lambda} \|_{\mathrm{tr}} \leq \rho$ is bounded for all $\theta \in \Theta, \lambda \in \Lambda$. Further suppose that the canonical feature map is bounded in \gls{RKHS} norm $\| \phi_{\theta}(x) \|_{\mathcal{H}_{k_{\theta}}}^{2} = k_{\theta}(x, x) \leq \alpha^{2}$, $\alpha > 0$, for all $x \in \mathcal{X}, \theta \in \Theta$. For any set of training observations $\{x_{i}, y_{i}\}_{i = 1}^{n}$, the Rademacher complexity of the class of \glspl{MCE} $F_{n}(\Theta, \Lambda)$ \eqref{eq:predictor_class_copy} is bounded by
			\begin{equation}
			\mathcal{R}_{n}(F_{n}(\Theta, \Lambda)) \leq 2 \alpha \rho.
			\label{eq:rademacher_complexity_bound_copy}
			\end{equation}
		\end{theorem}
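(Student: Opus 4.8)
The plan is to exploit the fact that every member of $F_n(\Theta, \Lambda)$ is a fixed linear operator applied to a feature map. Writing $\bvec{f}_{\theta, \lambda}(x) = W_{\theta, \lambda}^{T} \phi_{\theta}(x)$ with $\phi_{\theta}(x) = k_{\theta}(x, \cdot) \in \mathcal{H}_{k_{\theta}}$, the Rademacher sum factors as $\frac{2}{n}\sum_{i} \sigma_{i} \bvec{f}_{\theta, \lambda}(X_{i}) = \frac{2}{n} W_{\theta, \lambda}^{T}\big(\sum_{i} \sigma_{i} \phi_{\theta}(X_{i})\big)$. First I would pull the operator $W_{\theta, \lambda}^{T}$ out of the $\mathbb{R}^{m}$-norm by the operator-norm inequality $\|W_{\theta, \lambda}^{T} v\| \leq \|W_{\theta, \lambda}^{T}\|_{\mathrm{op}} \|v\|_{\mathcal{H}_{k_{\theta}}}$, and then bound the operator norm by the trace (nuclear) norm, $\|W_{\theta, \lambda}^{T}\|_{\mathrm{op}} = \|W_{\theta, \lambda}\|_{\mathrm{op}} \leq \|W_{\theta, \lambda}\|_{\mathrm{tr}} \leq \rho$, which holds uniformly over $\Theta \times \Lambda$ by hypothesis.

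It then remains to bound $\big\|\sum_{i} \sigma_{i} \phi_{\theta}(X_{i})\big\|_{\mathcal{H}_{k_{\theta}}}$ uniformly over $\theta \in \Theta$. The key obstacle is that the feature map $\phi_{\theta}$ itself carries the hyperparameter $\theta$, so the supremum over $\Theta$ sits \emph{inside} the Rademacher expectation and cannot simply be interchanged with it. For a single fixed kernel one would expand the squared norm into $\sum_{i, j} \sigma_{i} \sigma_{j} k_{\theta}(X_{i}, X_{j})$, take the Rademacher expectation so that the off-diagonal terms vanish, and invoke Jensen to obtain the familiar $O(n^{-1/2})$ rate; but that second-moment cancellation is unavailable here precisely because the sup over $\theta$ is free to align the kernel with any realized sign pattern. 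I would therefore abandon the variance-reduction step and bound pathwise by the triangle inequality, $\big\|\sum_{i} \sigma_{i} \phi_{\theta}(X_{i})\big\| \leq \sum_{i} \|\phi_{\theta}(X_{i})\| = \sum_{i} \sqrt{k_{\theta}(X_{i}, X_{i})} \leq n \alpha$, using the feature bound $k_{\theta}(x, x) \leq \alpha^{2}$. This estimate is uniform in both $\theta$ and the sign vector $\sigma$.

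Combining the two steps gives, for every realization of $\sigma$, $\sup_{\theta, \lambda} \frac{2}{n} \|W_{\theta, \lambda}\|_{\mathrm{tr}} \big\|\sum_{i} \sigma_{i} \phi_{\theta}(X_{i})\big\| \leq \frac{2}{n} \cdot \rho \cdot n \alpha = 2 \alpha \rho$. Since the right-hand side is a deterministic constant, the outer expectation over the Rademacher variables is trivial and $\mathcal{R}_{n}(F_{n}(\Theta, \Lambda)) \leq 2 \alpha \rho$ follows. The main obstacle, as indicated, is conceptual rather than computational: recognizing that learning the kernel forbids the usual off-diagonal cancellation and that one must settle for the coarser triangle-inequality estimate, which is exactly what accounts for the absence of a $1/\sqrt{n}$ factor in the stated bound.
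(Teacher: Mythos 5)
Your proof is correct and takes essentially the same route as the paper's: both factor the Rademacher sum as $W_{\theta,\lambda}^{T}$ acting on $\sum_{i=1}^{n}\sigma_{i}\phi_{\theta}(X_{i})$, bound the weight factor uniformly by $\rho$ via a norm inequality, and bound the feature term by a deterministic pathwise estimate of $n\alpha$, so that the supremum over hyperparameters and the Rademacher expectation become trivial---exactly the reason the bound carries no $1/\sqrt{n}$ factor, as you correctly diagnose. The only cosmetic difference is in that feature-term estimate: you use the triangle inequality $\sum_{i}\|\phi_{\theta}(X_{i})\|\leq n\alpha$, whereas the paper stacks the features into an operator $\Phi_{\theta}$ and uses $\|\Phi_{\theta}\sigma\|\leq\|\Phi_{\theta}\|_{\mathrm{tr}}\|\sigma\|\leq(\sqrt{n}\,\alpha)(\sqrt{n})=n\alpha$, which is the same pathwise bound with the same constant.
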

	
		\cite{bartlett2002rademacher} showed that the expected risk can be bounded with high probability using the empirical risk and the Rademacher complexity of the loss composed with the function class. For a Lipchitz loss, \citet{ledoux2013probability} further showed that the latter quantity can be bounded using the Rademacher complexity of the function class itself. We use these two results to arrive at the following probabilistic upper bound to our expected loss.

		\begin{theorem}[\gls{MCE} $\epsilon$-Specific Expected Risk Bound]
			\label{thm:specific_expected_loss_bound_for_multiclass_conditional_embedding_copy}
			Assume the same assumptions as \cref{thm:rademacher_complexity_bound_copy}. For any integer $n \in \mathbb{N}_{+}$, any $\epsilon \in (0, e^{-1})$, and any set of training observations $\{x_{i}, y_{i}\}_{i = 1}^{n}$, with probability of at least $1 - \beta$ over \textit{iid} samples $\{X_{i}, Y_{i}\}_{i = 1}^{n}$ of length $n$ from $\mathbb{P}_{X Y}$, every $f \in F_{n}(\Theta, \Lambda)$ satisfies
			\begin{equation}
			\begin{aligned}
			\mathbb{E}[\mathcal{L}_{e^{-1}}(Y, f(X))] \leq \frac{1}{n} \sum_{i = 1}^{n} \mathcal{L}_{\epsilon}(Y_{i}, f(X_{i})) + 4 e \; \alpha \rho + \sqrt{\frac{8}{n} \log{\frac{2}{\beta}}}.
			\end{aligned}
			\label{eq:specific_expected_loss_bound_for_multiclass_conditional_embedding_copy}
			\end{equation}
		\end{theorem}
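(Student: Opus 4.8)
The plan is to chain a generic Rademacher generalization bound with a contraction argument, feeding in the bound $\mathcal{R}_n(F_n(\Theta,\Lambda)) \leq 2\alpha\rho$ from \cref{thm:rademacher_complexity_bound_copy}. The first observation is that the clipping level $e^{-1}$ on the expected-risk side is chosen precisely so that the loss is bounded in $[0,1]$: writing $\phi(v) := -\log[v]_{e^{-1}}^{1}$, the clip confines the argument of $-\log$ to $[e^{-1},1]$, on which it takes values in $[0,1]$. This boundedness is what produces the concentration term $\sqrt{(8/n)\log(2/\beta)}$ and lets us apply the \cite{bartlett2002rademacher} bound to the induced scalar loss class $\{(x,y)\mapsto \mathcal{L}_{e^{-1}}(y,\bvec{f}(x)) : \bvec{f}\in F_n(\Theta,\Lambda)\}$, yielding with probability at least $1-\beta$ that $\mathbb{E}[\mathcal{L}_{e^{-1}}(Y,\bvec{f}(X))]$ is at most the empirical $e^{-1}$-risk plus $\mathcal{R}_n(\mathcal{L}_{e^{-1}}\circ F_n)$ plus the concentration term.

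Next I would control $\mathcal{R}_n(\mathcal{L}_{e^{-1}}\circ F_n)$ by contraction. Since $\phi$ has derivative $-1/v$ on the interior $(e^{-1},1)$ and is constant outside, it is globally $e$-Lipschitz, with the slope steepest and equal to $e$ at $v=e^{-1}$. Talagrand's contraction principle \citep{ledoux2013probability} then gives $\mathcal{R}_n(\mathcal{L}_{e^{-1}}\circ F_n) \leq 2e\,\mathcal{R}_n(F_n)$, and substituting $\mathcal{R}_n(F_n)\leq 2\alpha\rho$ yields the stated $4e\,\alpha\rho$ term. The final step reconciles the two clipping levels: because $\epsilon < e^{-1}$, the clip to $[\epsilon,1]$ never exceeds the clip to $[e^{-1},1]$, so $[v]_{\epsilon}^{1} \leq [v]_{e^{-1}}^{1}$ for every $v$ and therefore $\mathcal{L}_\epsilon \geq \mathcal{L}_{e^{-1}}$ pointwise. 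Replacing the empirical $e^{-1}$-risk by the (larger) empirical $\epsilon$-risk preserves the inequality and delivers the theorem exactly as stated.

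The step I expect to be most delicate is the contraction, because the composed loss is $\mathcal{L}_{e^{-1}}(y,\bvec{f}(x)) = \phi(\bvec{y}^{T}\bvec{f}(x))$, i.e.\ $\phi$ acts on a label-dependent coordinate of the vector-valued output $\bvec{f}(x)\in\mathbb{R}^{m}$, whereas $\mathcal{R}_n(F_n)$ is defined through an $\mathbb{R}^{m}$-valued norm. I would therefore need to justify reducing to the scalar contraction principle, either by exploiting the one-hot structure of $\bvec{y}$ so that $\bvec{y}^{T}\bvec{f}(x)$ is a unit-norm linear functional of $\bvec{f}(x)$, or by treating the coordinate selection uniformly over classes, while being careful to retain the factor of $2$ from the contraction lemma and to verify that the Lipschitz constant is exactly $e$ rather than a looser value. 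Everything else is bookkeeping once these constants are pinned down.
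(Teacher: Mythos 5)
Your proposal is correct and follows essentially the same route as the paper's proof: Theorem 8 of \cite{bartlett2002rademacher} applied to the bounded clipped-loss class, the Ledoux--Talagrand contraction principle with Lipschitz constant $e$ at clipping level $e^{-1}$, the bound $\mathcal{R}_{n}(F_{n}(\Theta, \Lambda)) \leq 2 \alpha \rho$, and finally the pointwise monotonicity $\mathcal{L}_{e^{-1}} \leq \mathcal{L}_{\epsilon}$ for $\epsilon \in (0, e^{-1})$ to swap the empirical term. The only organizational difference is that the paper first proves a general-$\epsilon$ lemma using the normalized and \emph{centered} loss (centering is what makes $\psi(0) = 0$ so the contraction principle applies, and it is the detail your sketch glosses over---your argument should run on $\mathcal{L}_{e^{-1}}(y, \bvec{f}(x)) - 1$, which has the same Lipschitz constant) and then specializes to $\epsilon = e^{-1}$, whereas you work at $e^{-1}$ directly.
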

	
		However, for hyperparameter learning, we would require a risk bound for specific choice of hyperparameters, not just for a set of hyperparameters. For some $\tilde{\theta} \in \Theta$ and $\tilde{\lambda} \in \Lambda$, we construct a subset of hyperparameters $\Xi(\tilde{\theta}, \tilde{\lambda}) \subseteq \Theta \times \Lambda$ defined by $\Xi(\tilde{\theta}, \tilde{\lambda}) := \{ (\theta, \lambda) \in \Theta \times \Lambda : \| W_{\theta, \lambda} \|_{\mathrm{tr}} \leq \| W_{\tilde{\theta}, \tilde{\lambda}} \|_{\mathrm{tr}}, \; \sup_{x \in \mathcal{X}} k_{\theta}(x, x) \leq \alpha^{2}(\tilde{\theta}) := \sup_{x \in \mathcal{X}} k_{\tilde{\theta}}(x, x) \}$. Clearly, this subset is non-empty, since $(\tilde{\theta}, \tilde{\lambda}) \in \Xi(\tilde{\theta}, \tilde{\lambda})$ is itself an element of this subset. Thus, we can assert that $\| W_{\theta, \lambda} \|_{\mathrm{tr}} \leq \rho = \| W_{\tilde{\theta}, \tilde{\lambda}} \|_{\mathrm{tr}}$ is bounded for all $(\theta, \lambda) \in \Xi(\tilde{\theta}, \tilde{\lambda})$, and that $\| \phi_{\theta}(x) \|_{\mathcal{H}_{k_{\theta}}}^{2} = k_{\theta}(x, x) \leq \alpha^{2} = \sup_{x \in \mathcal{X}} k_{\tilde{\theta}}(x, x)$ is bounded for all $x \in \mathcal{X}, (\theta, \lambda) \in \Xi(\tilde{\theta}, \tilde{\lambda})$.
	
		We can now choose some arbitrary $\tilde{\theta} \in \Theta$, $\tilde{\lambda} \in \Lambda$ and apply \cref{thm:specific_expected_loss_bound_for_multiclass_conditional_embedding_copy} with $\rho = \| W_{\tilde{\theta}, \tilde{\lambda}} \|_{\mathrm{tr}}$ and $\alpha^{2} = \sup_{x \in \mathcal{X}} k_{\tilde{\theta}}(x, x)$ and by considering only the hyperparameters $(\theta, \lambda) \in \Xi(\tilde{\theta}, \tilde{\lambda})$. The probabilistic statement \eqref{eq:specific_expected_loss_bound_for_multiclass_conditional_embedding_copy} then only holds for $(\theta, \lambda) \in \Xi(\tilde{\theta}, \tilde{\lambda})$. In particular, since $(\tilde{\theta}, \tilde{\lambda}) \in \Xi(\tilde{\theta}, \tilde{\lambda})$, it holds for $(\theta, \lambda) = (\tilde{\theta}, \tilde{\lambda})$. Applying this choice, the only hyperparameters that remain in the statement are $(\tilde{\theta}, \tilde{\lambda})$. We then replace these symbols with $(\theta, \lambda)$ again to avoid cluttered notation. Since they were chosen arbitrarily from $ \Theta \times \Lambda$, we arrive at our final result.
		
		\begin{theorem}[\gls{MCE} Expected Risk Bound for Hyperparameters]
			\label{thm:expected_risk_bound_hyperparameter_learning_copy}
			For any integer $n \in \mathbb{N}_{+}$ and any set of training observations $\{x_{i}, y_{i}\}_{i = 1}^{n}$ used to define $\bvec{f}_{\theta, \lambda}$ \eqref{eq:empirical_decision_probability_vector}, with probability $1 - \beta$ over \textit{iid} samples $\{X_{i}, Y_{i}\}_{i = 1}^{n}$ of length $n$ from $\mathbb{P}_{X Y}$, every $\theta \in \Theta$, $\lambda \in \Lambda$, and $\epsilon \in (0, e^{-1})$ satisfies
			\begin{equation}
			\begin{aligned}
			\mathbb{E}[\mathcal{L}_{e^{-1}}(Y, \bvec{f}_{\theta, \lambda}(X))] \leq \frac{1}{n} \sum_{i = 1}^{n} \mathcal{L}_{\epsilon}(Y_{i}, \bvec{f}_{\theta, \lambda}(X_{i})) + 4 e \; r(\theta, \lambda) + \sqrt{\frac{8}{n} \log{\frac{2}{\beta}}},
			\label{eq:expected_risk_bound_hyperparameter_learning_copy}
			\end{aligned}
			\end{equation}
			where $r(\theta, \lambda) := \sqrt{\mathrm{trace}(V_{\theta, \lambda}^{T} K_{\theta} V_{\theta, \lambda}) \sup_{x \in \mathcal{X}} k_{\theta}(x, x)}$ and $V_{\theta, \lambda} := (K_{\theta} + n \lambda I)^{-1} \bvec{Y}$.	
		\end{theorem}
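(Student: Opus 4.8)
The plan is to reduce \cref{thm:expected_risk_bound_hyperparameter_learning_copy} to \cref{thm:specific_expected_loss_bound_for_multiclass_conditional_embedding_copy} by the localization argument already sketched before the statement, so that the only genuinely new computation is the identification of the complexity term $\alpha \rho$ with the data-dependent quantity $r(\theta, \lambda)$. Accordingly, the proof proceeds in two movements: first an algebraic identity for the norm of the weights, then the quantifier manipulation that turns a uniform-over-a-class bound into a pointwise-in-hyperparameter bound.

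First I would establish the key identity
\[
\| W_{\theta, \lambda} \|_{\mathrm{tr}}^{2} = \mathrm{trace}(V_{\theta, \lambda}^{T} K_{\theta} V_{\theta, \lambda}), \qquad V_{\theta, \lambda} = (K_{\theta} + n \lambda I)^{-1} \bvec{Y}.
\]
Since $\| W_{\theta, \lambda} \|_{\mathrm{tr}} = \| \hat{\mathcal{U}}^{(\theta, \lambda)}_{Y | X} \|_{HS}$ and $\hat{\mathcal{U}}^{(\theta, \lambda)}_{Y | X} = \Psi (K_{\theta} + n \lambda I)^{-1} \Phi^{T}$ by \eqref{eq:empirical_conditional_embedding}, I would compute the squared Hilbert--Schmidt norm as $\mathrm{trace}(\hat{\mathcal{U}}^{*} \hat{\mathcal{U}})$. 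Writing $A := (K_{\theta} + n \lambda I)^{-1}$ and using the Gram identities $\Phi^{T} \Phi = K_{\theta}$ and $\Psi^{T} \Psi = L$ (the output Gram matrix with entries $\delta(y_{i}, y_{j})$), together with cyclicity of the trace for this finite-rank operator, this reduces to $\mathrm{trace}(A L A K_{\theta})$. The observation specific to \glspl{MCE} is that the one-hot encoding satisfies $\bvec{Y} \bvec{Y}^{T} = L$, because $(\bvec{Y} \bvec{Y}^{T})_{ij} = \sum_{c} \mathbb{1}_{c}(y_{i}) \mathbb{1}_{c}(y_{j}) = \delta(y_{i}, y_{j})$; substituting this gives $\mathrm{trace}(V_{\theta, \lambda}^{T} K_{\theta} V_{\theta, \lambda}) = \mathrm{trace}(A K_{\theta} A L)$, and since $A$ commutes with $K_{\theta}$ both expressions equal $\mathrm{trace}(A^{2} K_{\theta} L)$. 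Hence choosing $\rho = \| W_{\theta, \lambda} \|_{\mathrm{tr}}$ and $\alpha^{2} = \sup_{x} k_{\theta}(x, x)$ yields exactly $\alpha \rho = r(\theta, \lambda)$.

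With this identity in hand, I would run the localization argument. Fix an arbitrary $(\tilde{\theta}, \tilde{\lambda}) \in \Theta \times \Lambda$; since the training set, and hence $\| W_{\tilde{\theta}, \tilde{\lambda}} \|_{\mathrm{tr}}$ and $\sup_{x} k_{\tilde{\theta}}(x, x)$, are fixed before drawing the fresh sample $\{X_{i}, Y_{i}\}$, these serve as legitimate deterministic constants $\rho$ and $\alpha^{2}$. Applying \cref{thm:specific_expected_loss_bound_for_multiclass_conditional_embedding_copy} to the subclass $\{ \bvec{f}_{\theta, \lambda} : (\theta, \lambda) \in \Xi(\tilde{\theta}, \tilde{\lambda}) \}$, whose Rademacher complexity is bounded by $2 \alpha \rho$ uniformly on $\Xi(\tilde{\theta}, \tilde{\lambda})$ by \cref{thm:rademacher_complexity_bound_copy}, the inequality \eqref{eq:specific_expected_loss_bound_for_multiclass_conditional_embedding_copy} holds simultaneously over that subclass with probability at least $1 - \beta$. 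Because $(\tilde{\theta}, \tilde{\lambda}) \in \Xi(\tilde{\theta}, \tilde{\lambda})$, it holds in particular at the single predictor $\bvec{f}_{\tilde{\theta}, \tilde{\lambda}}$, where the complexity term is $4 e \, \alpha \rho = 4 e \, r(\tilde{\theta}, \tilde{\lambda})$ by the identity above. Relabelling $(\tilde{\theta}, \tilde{\lambda})$ as $(\theta, \lambda)$ and noting the choice was arbitrary delivers \eqref{eq:expected_risk_bound_hyperparameter_learning_copy}.

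The step I expect to be the main obstacle is not the linear algebra but the self-referential nature of the localization: the radius $\rho$ of the class $\Xi(\tilde{\theta}, \tilde{\lambda})$ is defined by the very predictor at which the bound is ultimately evaluated. The delicate point is that this is legitimate only because $(\tilde{\theta}, \tilde{\lambda})$ is chosen, and $\rho, \alpha$ thereby determined, prior to the random draw of $\{X_{i}, Y_{i}\}$, so no uniformity over $(\theta, \lambda)$ is being smuggled into a single $1 - \beta$ event; each $(\theta, \lambda)$ receives its own application of \cref{thm:specific_expected_loss_bound_for_multiclass_conditional_embedding_copy} with its own complexity radius. I would state this explicitly so that the ``every $\theta, \lambda$'' in the conclusion is read in the pointwise sense of the construction rather than as a uniform-in-hyperparameter guarantee.
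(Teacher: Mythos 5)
Your proof is correct and takes essentially the same route as the paper: the paper likewise constructs the subset $\Xi(\tilde{\theta}, \tilde{\lambda})$, applies the $\epsilon$-specific expected risk bound with $\rho = \| W_{\tilde{\theta}, \tilde{\lambda}} \|_{\mathrm{tr}}$ and $\alpha^{2} = \sup_{x \in \mathcal{X}} k_{\tilde{\theta}}(x, x)$, evaluates the resulting uniform bound at the member $(\tilde{\theta}, \tilde{\lambda}) \in \Xi(\tilde{\theta}, \tilde{\lambda})$, and relabels. The only cosmetic difference is your algebraic detour through $\hat{\mathcal{U}}^{*}\hat{\mathcal{U}}$ and $\bvec{Y}\bvec{Y}^{T} = \Psi^{T}\Psi$, where the paper obtains $\| W_{\theta, \lambda} \|_{\mathrm{tr}}^{2} = \mathrm{trace}(V_{\theta, \lambda}^{T} K_{\theta} V_{\theta, \lambda})$ directly from $W_{\theta, \lambda} = \Phi_{\theta}(K_{\theta} + n\lambda I)^{-1}\bvec{Y}$ and $\Phi_{\theta}^{T}\Phi_{\theta} = K_{\theta}$; your closing caveat that the conclusion is pointwise in $(\theta, \lambda)$ rather than uniform over a single $1-\beta$ event is also the correct reading of what the construction actually delivers.
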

		
		In particular, $r(\theta, \lambda)$ is an upper bound to the Rademacher complexity of a relevant class of \glspl{MCE} based on the hyperparameters $\Xi(\theta, \lambda)$. We call $r(\theta, \lambda)$ the \gls{RCB} and use it to measure the model complexity of a \gls{MCE} with hyperparameters $(\theta, \lambda)$. Since the training set itself is a sample of length $n$ drawn from $\mathbb{P}_{X Y}$, the inequality \eqref{eq:expected_risk_bound_hyperparameter_learning_copy} holds with probability $1 - \beta$ when the random variables $(X_{i}, Y_{i})$ are realized as the training observations $(x_{i}, y_{i})$. Motivated by this, we employ this upper bound as the learning objective for hyperparameter learning,
		\begin{equation}
			q(\theta, \lambda) := \frac{1}{n} \sum_{i = 1}^{n} \mathcal{L}_{\epsilon}(y_{i}, \bvec{f}_{\theta, \lambda}(x_{i})) + 4 e \; r(\theta, \lambda).
		\label{eq:learning_objective}
		\end{equation}
	
		Importantly, the first term is an empirical risk that measures data fit, and the second term is the \gls{RCB} that measures model complexity. Together, this learning objective achieves a balance between data fit and model complexity, similar to the corresponding property of a negative log marginal likelihood learning objective.
	
		\begin{algorithm}[tb]
			\caption{\gls{MCE} Hyperparameter Learning with Stochastic Gradient Updates}
			\label{alg:multiclass_conditional_embedding_training}
			\begin{algorithmic}[1]
				\STATE {\bfseries Input:} kernel family $k_{\theta} : \mathcal{X} \times \mathcal{X} \to \mathbb{R}$, dataset $\{x_{i}, y_{i}\}_{i = 1}^{n}$, initial kernel hyperparameters $\theta_{0}$, initial regularization hyperparameters $\lambda_{0}$, learning rate $\eta$, cross entropy loss threshold $\epsilon$, batch size $n_{b}$
				\STATE $\theta \leftarrow \theta_{0}$, $\lambda \leftarrow \lambda_{0}$
				\REPEAT
				\STATE Sample the next batch $\mathcal{I}_{b} \subseteq \mathbb{N}_{n}$ s.t. $| \mathcal{I}_{b} | = n_{b}$ % \hspace{\fill} (For gradient descent, $n_{b} = n$ and $\mathcal{I}_{b} = \mathbb{N}_{n}$)
				\STATE $Y \leftarrow \{\delta(y_{i}, c) : i \in \mathcal{I}_{b}, c \in \mathbb{N}_{m}\} \hspace{\fill} \in \{0, 1\}^{n_{b} \times m}$
				\STATE $K_{\theta} \leftarrow \{k_{\theta}(x_{i}, x_{j}) : i \in \mathcal{I}_{b}, j \in \mathcal{I}_{b}\} \hspace{\fill} \in \mathbb{R}^{n_{b} \times n_{b}}$
				\STATE $L_{\theta, \lambda} \leftarrow \mathrm{cholesky}(K_{\theta} + n_{b} \lambda I_{n_{b}}) \hspace{\fill} \in \mathbb{R}^{n_{b} \times n_{b}}$
				\STATE $V_{\theta, \lambda} \leftarrow L_{\theta, \lambda}^{T} \backslash (L_{\theta, \lambda} \backslash Y) \hspace{\fill} \in \mathbb{R}^{n_{b} \times m}$
				\STATE $P_{\theta, \lambda} \leftarrow K_{\theta} V_{\theta, \lambda} \hspace{\fill} \in \mathbb{R}^{n_{b} \times m}$
				\STATE $r(\theta, \lambda) \leftarrow \alpha(\theta) \sqrt{\mathrm{trace}(V_{\theta, \lambda}^{T} K_{\theta} V_{\theta, \lambda})}$
				\STATE $q(\theta, \lambda) \leftarrow \frac{1}{n_{b}} \sum_{i = 1}^{n_{b}} \mathcal{L}_{\epsilon}((Y)_{i}, (P_{\theta, \lambda})_{i}) + 4 e \; r(\theta, \lambda)$
				\STATE $(\theta, \lambda) \leftarrow \mathrm{GradientBasedUpdate}(q, \theta, \lambda; \eta)$ % \hspace{\fill} (Or other gradient based updates such as Adam)
				\UNTIL{maximum iterations reached} % \hspace{\fill} (Stop if magnitude of all gradients are below $\epsilon$)
				\STATE {\bfseries Output:} kernel hyperparameters $\theta$, regularization $\lambda$
			\end{algorithmic}
		\end{algorithm}
		
		\subsection{Extensions}
		\label{sec:extensions}
		
			\paragraph{Batch Stochastic Gradient Update}
			
				Since \cref{thm:expected_risk_bound_hyperparameter_learning_copy} holds for any $n \in \mathbb{N}_{+}$ and any set of data $\{x_{i}, y_{i}\}_{i = 1}^{n}$ from $\mathbb{P}_{X Y}$, the bound \eqref{eq:expected_risk_bound_hyperparameter_learning_copy} also holds with high probability for a batch subset of the training data. However, the batch size cannot be too small, in order to keep the constant $\sqrt{8 \log{(2 / \beta)} / n}$ relatively small. We therefore propose to use only a random batch subset of the data to perform each gradient update. This enables scalable hyperparameter learning through batch stochastic gradient updates, where each gradient update stochastically improves a different probabilistic upper bound of the generalization risk. Note that without \cref{thm:expected_risk_bound_hyperparameter_learning_copy}, it is not straightforward to simply apply stochastic gradient updates to optimize $q$, since $r$ depends on the dataset but is not written in terms of a summation over the data. We present this scalable hyperparameter learning approach via batch stochastic gradient updates in \cref{alg:multiclass_conditional_embedding_training}, reducing the time complexity from $O(n^{3})$ to $O(n_{b}^{3})$, where $n_{b}$ is the batch size. The Cholesky decomposition for the full training set requires $O(n^{3})$ time and is necessary only for inference, instead of once every learning iteration. It can be further avoided by using random Fourier features \citep{rahimi2008random} or kernel herding \citep{chen2010super} to approximate the already learned \gls{MCE}. All further inference takes $O(n^{2})$ time, or potentially less with approximation, using back substitution. 
	
			\paragraph{Batch Validation}
			
				While we simply instantiated $(X_{i}, Y_{i})$ to be the training observations in \cref{thm:expected_risk_bound_hyperparameter_learning_copy} to obtain \eqref{eq:learning_objective}, this does not have to be the case for batch updates. Instead, in each learning iteration, we could further split the batch into two sub-batches -- one for training and one for validation. The training batch is used to form the \gls{MCE} $\bvec{f}_{\theta, \lambda}$ and \gls{RCB} $r(\theta, \lambda)$, while we evaluate the empirical risk on the validation batch,
				\begin{equation}
					q^{(V)}(\theta, \lambda) := \frac{1}{n_{(V)}} \sum_{i = 1}^{n_{(V)}} \mathcal{L}_{\epsilon}(y^{(V)}_{i}, \bvec{f}^{(T)}_{\theta, \lambda}(x^{(V)}_{i})) + \tau \; r^{(T)}(\theta, \lambda),
				\label{eq:validation_learning_objective}
				\end{equation}
				where $(T)$ and $(V)$ denotes training and validation. Importantly, in contrast to standard cross validation, not all data is required for each update due to the presence of the \gls{RCB}. Furthermore, although the multiplier on the \gls{RCB} is $4 e$, experiments show that generalization performance can improve if we use a smaller multiplier $\tau < 4 e$, suggesting an upper bound tighter than \eqref{eq:expected_risk_bound_hyperparameter_learning_copy} may exist. In practice, these two extensions work well together. Intuitively, by introducing a validation batch to measure empirical data fit, a smaller weight on the complexity penalty is required. 
	
			\paragraph{Conditional Embedding Network}
			
				Our learning algorithm does not restrict the way the kernel $k_{\theta}$ is constructed from its hyperparameters $\theta \in \Theta$. One particularly useful type of \glspl{MCE} are those where the input kernel $k_{\theta}(x, x') = \langle \varphi_{\theta}(x), \varphi_{\theta}(x') \rangle$ is constructed from neural networks $\varphi_{\theta} : \mathcal{X} \to \mathbb{R}^{p}$ explicitly. We refer to \glspl{MCE} constructed this way as \glspl{CEN}. In these cases, the weights and biases of the neural network become the kernel hyperparameters $\theta$ of the \glspl{CEN}. We can therefore learn network weights and biases jointly under \eqref{eq:learning_objective}. \glspl{CEN} can also scale easily, since the $n \times n$ Cholesky decomposition required for full gradient updates in \cref{alg:multiclass_conditional_embedding_training} can be transformed into a $p \times p$ decomposition by the Woodbury matrix inversion identity \citep{higham2002accuracy}, reducing the time complexity to $O(p^{3} + np^{2})$. This allows scalable learning for $n >> p$ even without using batch gradient updates. For inference, standard map reduce methods can be used. We direct the reader to appendix C for detailed discussion on the various \gls{MCE} architectures and their implementation as compared to \cref{alg:multiclass_conditional_embedding_training}. 
				
	\section{Experiments}
	\label{sec:experiments}
	
		\paragraph{Toy Example}
	
			\begin{figure*}[t]
				\centering
				\includegraphics[width=0.32\linewidth]{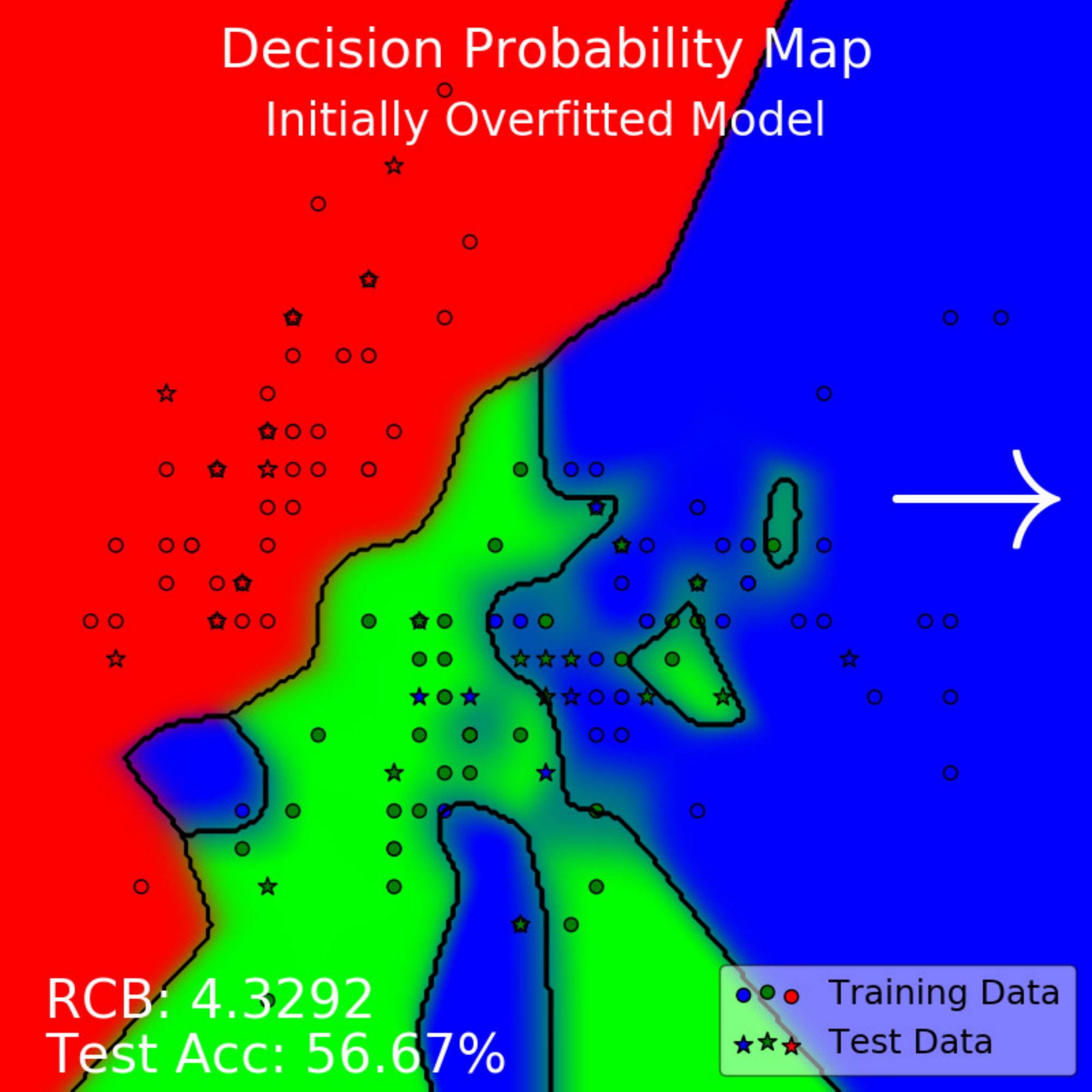}
				\includegraphics[width=0.32\linewidth]{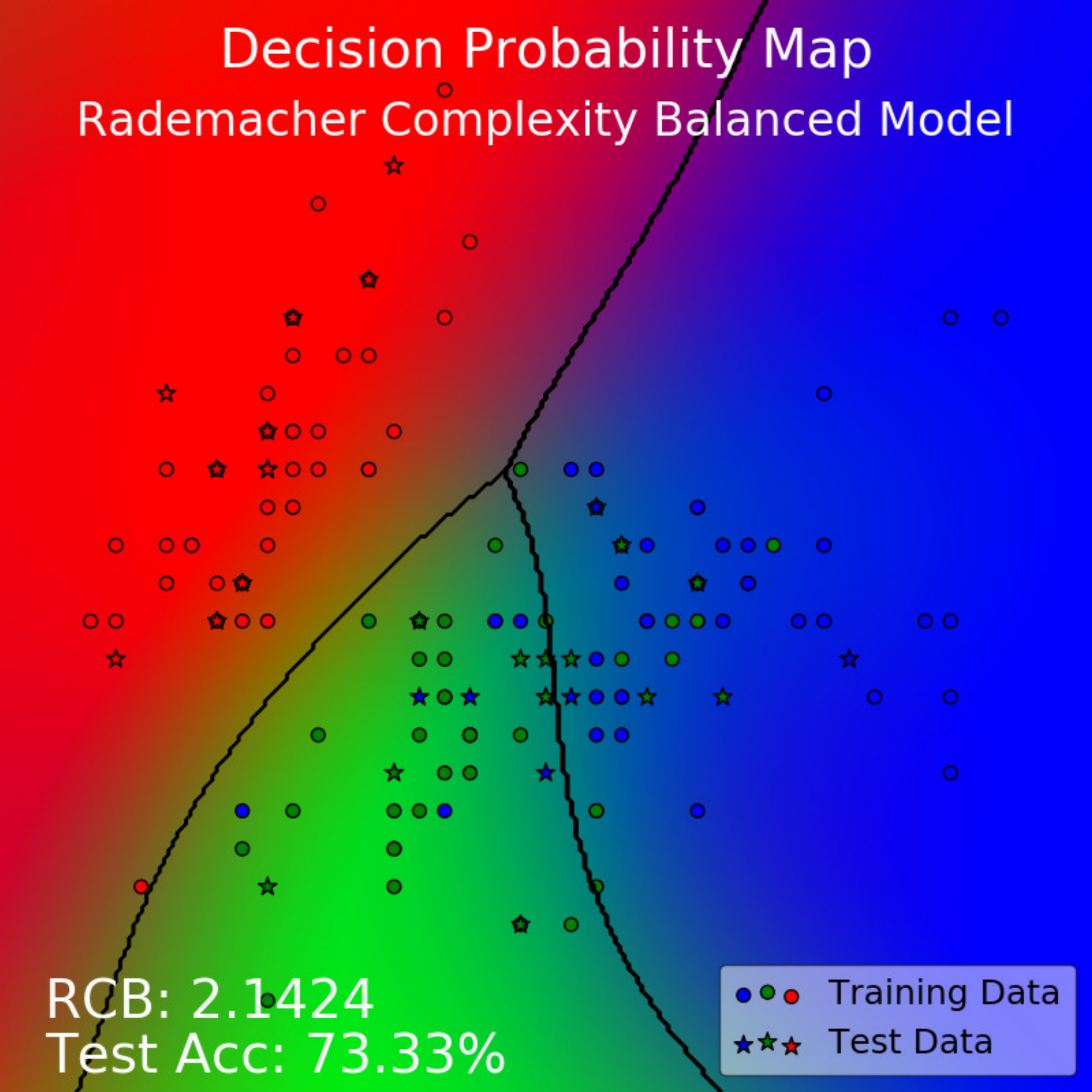}
				\includegraphics[width=0.32\linewidth]{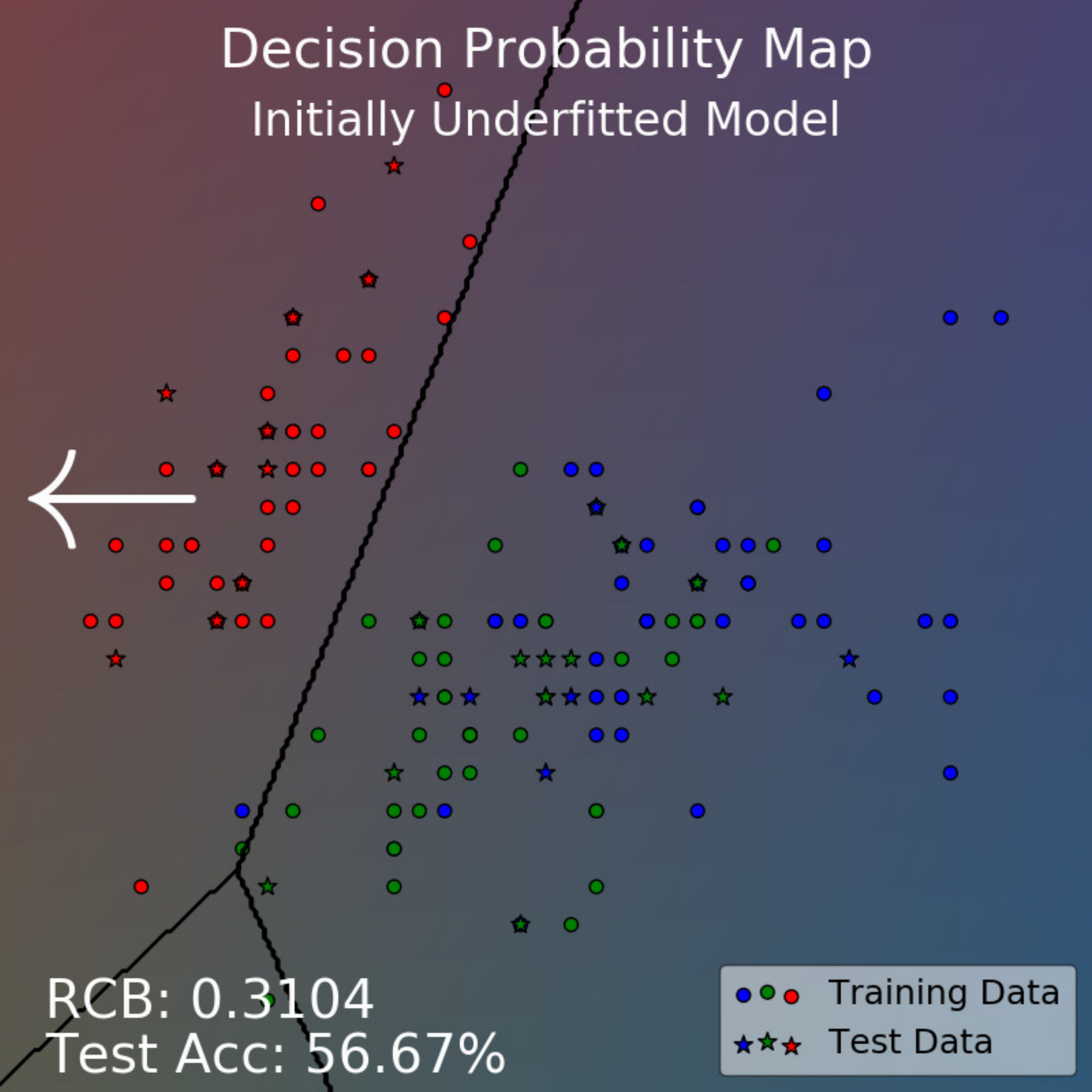}
				\includegraphics[width=0.48\linewidth]{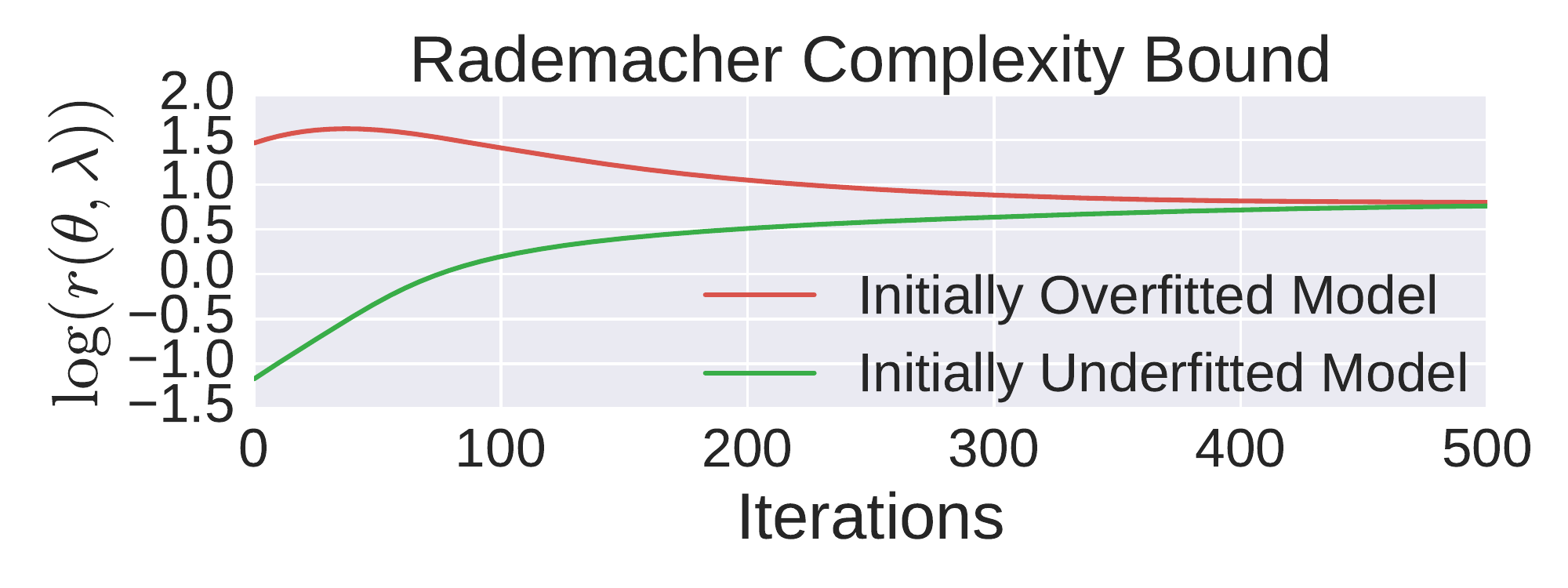}
				\includegraphics[width=0.48\linewidth]{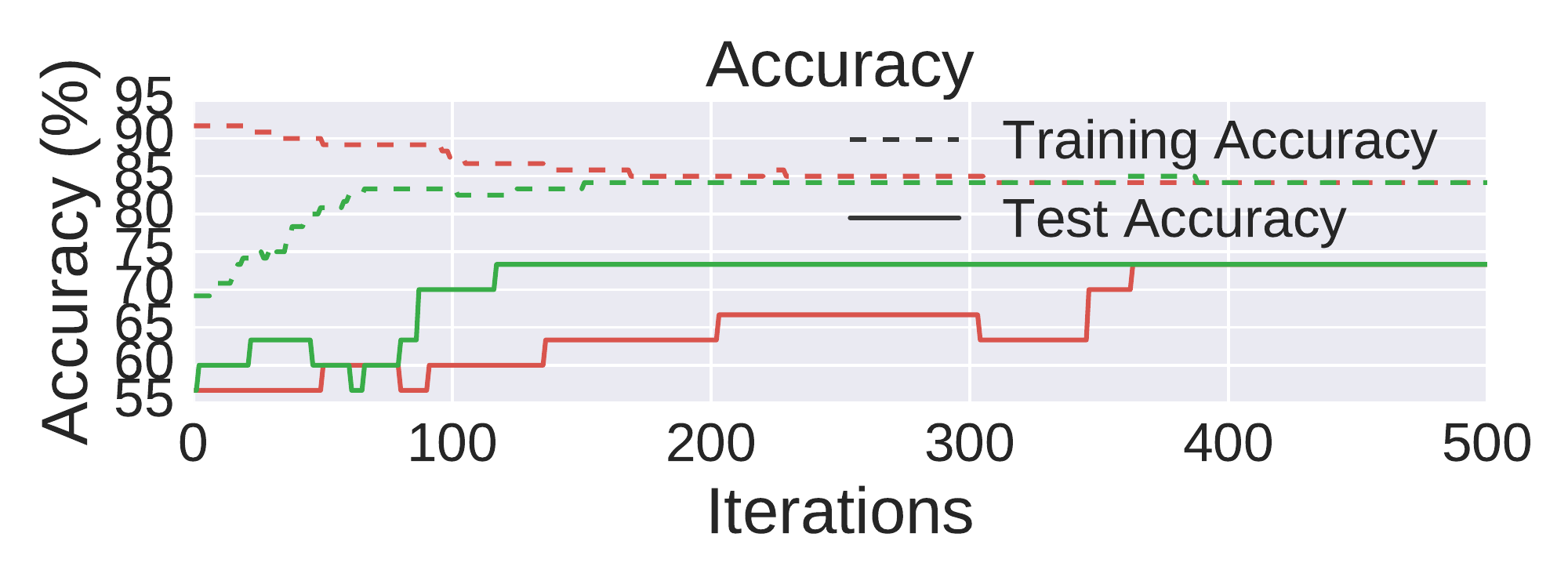}
				\caption{Rademacher complexity balanced learning of hyperparameters for an isotropic Gaussian \gls{MCE} using the first two attributes of the iris dataset}
				\label{fig:iris}
			\end{figure*}
				
			The first two of four total attributes of the iris dataset \citep{fisher1936use} are known to have class labels that are non-separable by any means, in that the same example $x \in \mathbb{R}^{2}$ may be assigned different output labels $y \in \mathbb{N}_{3} := \{1, 2, 3\}$. In these difficult scenarios, the notion of model complexity is extremely important, and the success of a learning algorithm greatly depends on how it balances training performance and model complexity to avoid both underfitting and overfitting. 
			
			\Cref{fig:iris} demonstrates \cref{alg:multiclass_conditional_embedding_training} with full gradient updates ($n_{b} = n$) to learn hyperparameters of the \gls{MCE} on the two attribute iris dataset. The kernel used is isotropic Gaussian with diagonal length scales $\Sigma = \ell^{2} I_{2}$ and sensitivity $\alpha = \sigma_{f}$, so that the hyperparameters are $\theta = (\alpha, \ell)$ and $\lambda$. We evaluate the performance of the learning algorithm on a withheld test set using 20\% of the available 150 data samples. Attributes are scaled into the unit range $[0, 1]$ and decision probability maps are plotted for the region $[-0.5, 1.05]^{2}$, where the red, green, and blue color channels represent the clip-normalized decision probability \eqref{eq:empirical_decision_probability_clip_normalized} for classes $c = 1, 2, 3$. We begin from two initial sets of hyperparameters, one originally overfitting and another underfitting the training data. Initially, both models perform sub-optimally with a test accuracy of 56.67\%. We see that the \gls{RCB} $r(\theta, \lambda)$ appropriately measures the amount of overfitting with high (resp. low) values for the overfitted (resp. underfitted) model. We then learn hyperparameters with \cref{alg:multiclass_conditional_embedding_training} for 500 iterations from both initializations at rate $\eta = 0.01$, where both models converges to a balanced model with a moderate \gls{RCB} and an improved test accuracy of 73.33\%. In particular, the initially overfitted model learns a simpler model at the expense of lower training performance, emphasizing the benefits of complexity based regularization, without which the learning would only maximize training performance at the cost of further overfitting. Meanwhile, the initially underfitted model learns to increase complexity to improve the sub-optimal performance on the training set.
	
		\paragraph{UCI Datasets}

			\begin{table*}[t]
				\caption{Test accuracy (\%) on UCI datasets}
				\label{tab:uci_experiments}
				\centering
				\begin{tabular}{lcccccc}	
					Method         & \texttt{banknote}       & \texttt{ecoli}          & \texttt{robot}          & \texttt{segment}        & \texttt{wine}           & \texttt{yeast}          \\
					\midrule
					G\gls{MCE}     & $\mathbf{99.9 \pm 0.2}$ & $\mathbf{87.5 \pm 4.4}$ & $\mathbf{96.7 \pm 0.9}$ & $\mathbf{98.4 \pm 0.8}$ & $\mathbf{97.2 \pm 3.7}$ & $52.5 \pm 2.1$          \\
					G\gls{MCE}-SGD & $98.8 \pm 0.9$          & $84.5 \pm 5.0$          & $95.5 \pm 0.9$          & $96.1 \pm 1.5$          & $93.3 \pm 6.0$          & $\mathbf{60.3 \pm 4.4}$ \\
					\gls{CEN}-1    & $99.5 \pm 1.0$          & $\mathbf{87.5 \pm 3.2}$ & $82.3 \pm 7.1$          & $94.6 \pm 1.6$          & $96.1 \pm 5.0$          & $55.8 \pm 5.0$          \\
					\gls{CEN}-2    & $99.4 \pm 0.9$          & $86.3 \pm 6.0$          & $94.5 \pm 0.8$          & $96.7 \pm 1.1$          & $97.2 \pm 5.1$          & $59.6 \pm 4.0$          \\
					ERM            & $\mathbf{99.9 \pm 0.2}$ & $72.1 \pm 20.5$         & $91.0 \pm 3.7$          & $98.1 \pm 1.1$          & $93.9 \pm 5.2$          & $45.9 \pm 6.4$          \\
					CV             & $\mathbf{99.9 \pm 0.2}$ & $73.8 \pm 23.8$         & $90.9 \pm 3.4$          & $98.3 \pm 1.3$          & $93.3 \pm 7.4$          & $58.0 \pm 5.8$          \\
					MED            & $92.0 \pm 4.3$          & $42.1 \pm 47.7$         & $81.1 \pm 6.2$          & $27.3 \pm 26.4$         & $93.3 \pm 7.8$          & $31.2 \pm 14.1$         \\
					Others         & 99.78\textsuperscript{a}& 81.1\textsuperscript{b} & 97.59\textsuperscript{c}& 96.83\textsuperscript{d}& 100\textsuperscript{e}  & 55.0\textsuperscript{b}
				\end{tabular}
			\end{table*}
		
			We demonstrate the average performance of learning anisotropic Gaussian kernels and kernels constructed from neural networks on standard UCI datasets \citep{bache2013uci}, summarized in \cref{tab:uci_experiments}. The former has a shallow but wide model architecture, while the latter has a deeper but narrower model architecture. The Gaussian kernel is learned with both full (G\gls{MCE}) and batch stochastic gradient updates (G\gls{MCE}-SGD) using a tenth ($n_{b} \approx \frac{n}{10}$) of the training set each training iteration, with sensitivity and length scales initialized to $1$. For \glspl{CEN}, we randomly select two simple fully connected architectures with 16-32-8 (\gls{CEN}-1) and 96-32 (\gls{CEN}-2) hidden units respectively, and learn the conditional mean embedding without dropout under ReLU activation. Biases and standard deviations of zero mean truncated normal distributed weights are initialized to $0.1$, and are to be learned with full gradient updates. For all experiments, $\lambda$ is initialized to $1$ and is learned jointly with the kernel. Optimization is performed with the Adam optimizer \citep{kingma2014adam} in TensorFlow \citep{abadi2016tensorflow} with a rate of $\eta = 0.1$ and $\epsilon = 10^{-15}$ under the learning objective $q(\theta, \lambda)$ \eqref{eq:learning_objective}. Learning is run for 1000 epochs to allow direct comparison. All attributes are scaled to the unit range. Each model is trained on 9 out of 10 folds and tested on the remaining fold, which are shuffled over all 10 combinations to obtain the test accuracy average and deviation. We compare our results to \glspl{MCE} whose hyperparameters are tuned by \gls{ERM} (without the \gls{RCB} term in \eqref{eq:learning_objective}), cross validation (CV), and the median heuristic (MED), as well as to other approaches using neural networks \citep[a; c]{kaya2016banknote, freire2009short}, probabilistic binary trees \citep[b]{horton1996probabilistic}, decision trees \citep[d]{zhou2004size}, and regularized discriminant analysis \citep[e]{aeberhard1992comparison}. 
	
			\Cref{tab:uci_experiments} shows that our learning algorithm outperforms other hyperparameter tuning algorithms, and performs similarly to competing methods. Our method achieves this without any case specific tuning or heuristics, but by simply placing a conditional mean embedding on training data and applying a complexity bound based learning algorithm. The stochastic gradient approach for Gaussian kernels performs similarly to the full gradient approach, supporting the claim of \cref{thm:expected_risk_bound_hyperparameter_learning_copy} for $n = n_{b}$. For \glspl{CEN}, we did not attempt to choose an optimal architecture for each dataset. The learning algorithm is tasked to train the same simple network for different datasets using 1000 epochs to achieve comparable performance. 
	
		\paragraph{Learning pixel relevance}
	
			We apply \cref{alg:multiclass_conditional_embedding_training} to learn length scales of anisotropic Gaussian, or \gls{ARD}, kernels on pixels of the MNIST digits dataset \citep{lecun1998gradient}. In the top left plot of \cref{fig:mnist_experiments}, we train on datasets of varying sizes, from 50 to 5000 images, and show the accuracy on the standard test set of 10000 images. All hyperparameters are initialized to 1 before learning. We train both \glspl{SVC} and \glspl{GPC} under the \gls{OVA} scheme, and use a Laplace approximation for the \gls{GPC} posterior. In all cases \glspl{MCE} outperform \glspl{SVC} as it cannot learn hyperparameters without expensive cross validation. \glspl{MCE} also outperform \glspl{GPC} as more data becomes available. Under the \gls{OVA} scheme, the \gls{GPC} approach learns a set of kernel hyperparameters for each class, while our approach learns a consistent set of hyperparameters for all classes. Consequently, for 5000 data points, the computational time required for hyperparameter learning of \glspl{GPC} is on the order of days even for isotropic Gaussian kernels, while \cref{alg:multiclass_conditional_embedding_training} is on the order of hours for anistropic Gaussian kernels even without batch updates. We also compare hyperparameter learning with and without the \gls{RCB}. For small $n$ below 750 samples, the latter outperforms the former (e.g. 86.69\% and 86.96\% for $n = 500$), while for large $n$ the former outperforms the latter (e.g. 96.05\% and 95.3\% for $n= 5000$). This verifies that complexity based regularization becomes especially important as data size grows, when overfitting starts to decrease generalization performance. The images at the bottom of \cref{fig:mnist_experiments} show the pixel length scales learned through batch stochastic gradient updates ($n_{b} = 1200$) over all available training images the groups of digits shown, demonstrating the most discriminative regions.
	
		\paragraph{Learning convolutional layers}
	
			\begin{figure*}[t]
				\centering 
				\includegraphics[width=0.49\linewidth]{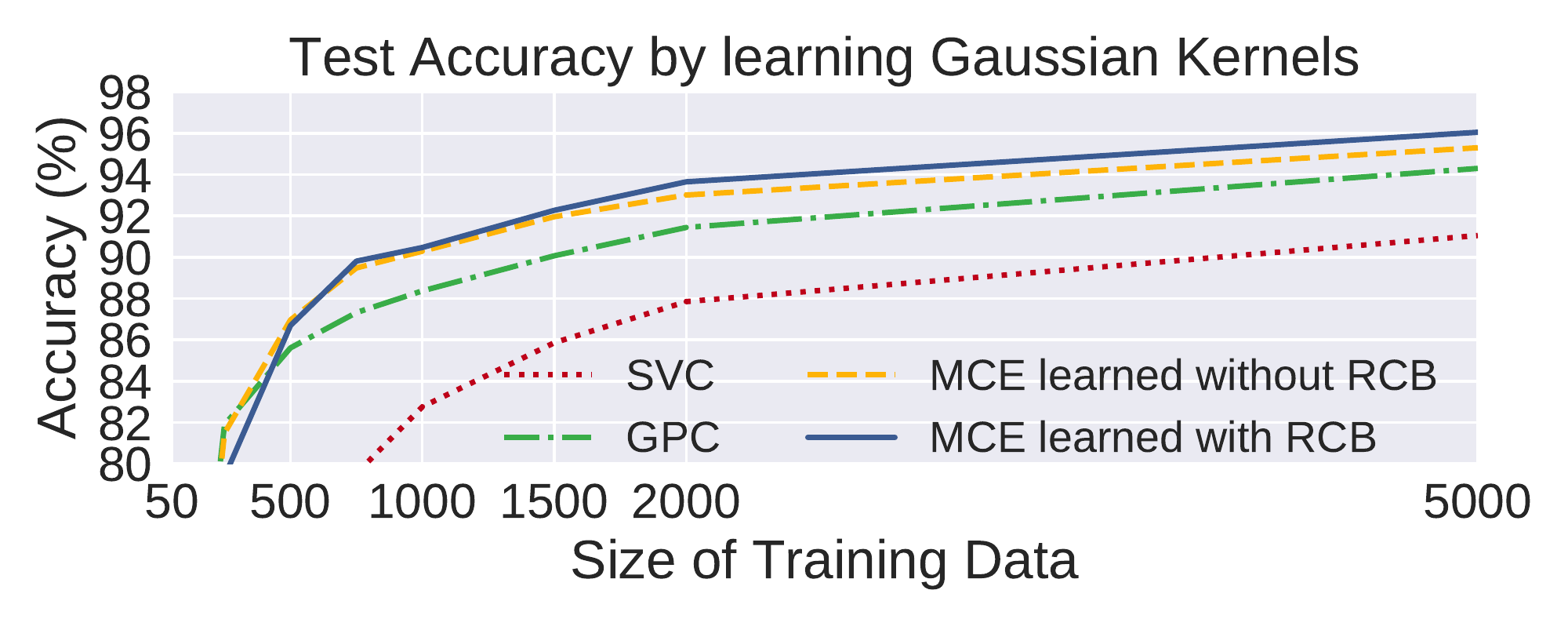}
				\includegraphics[width=0.49\linewidth]{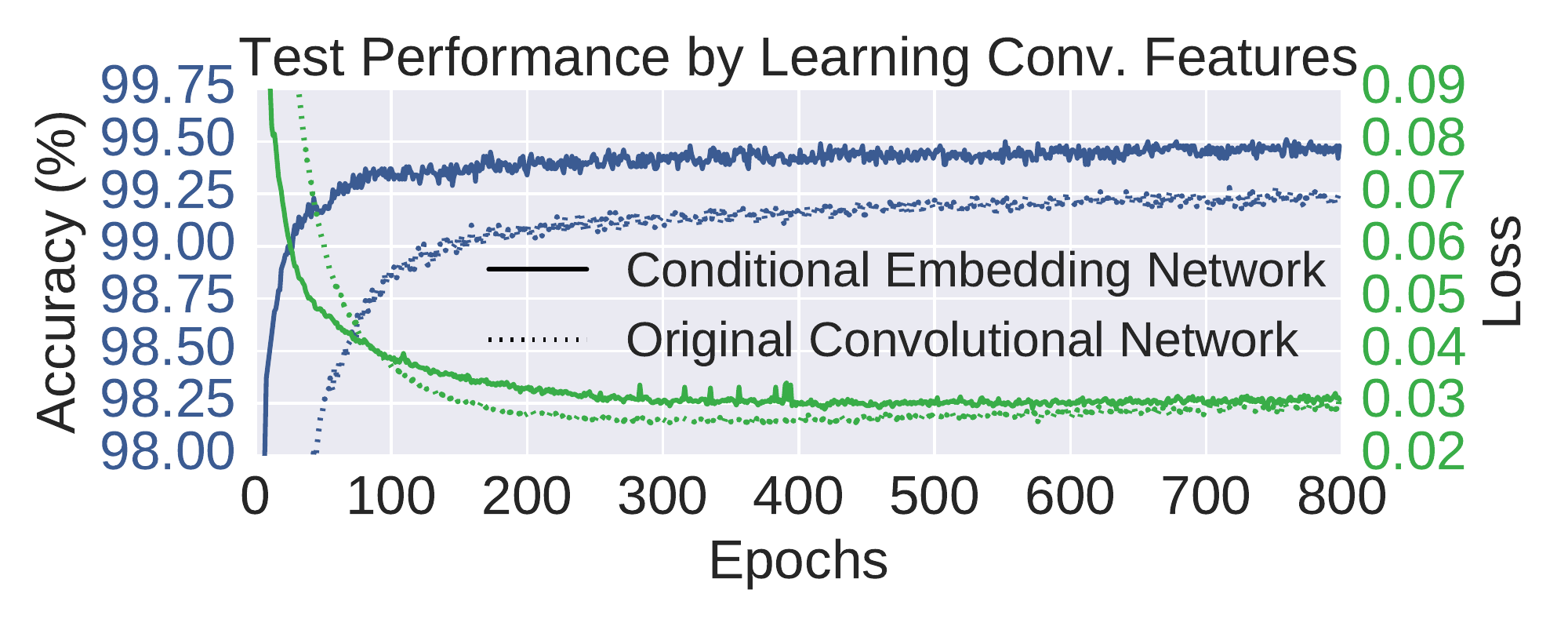}
				\includegraphics[width=0.1\linewidth]{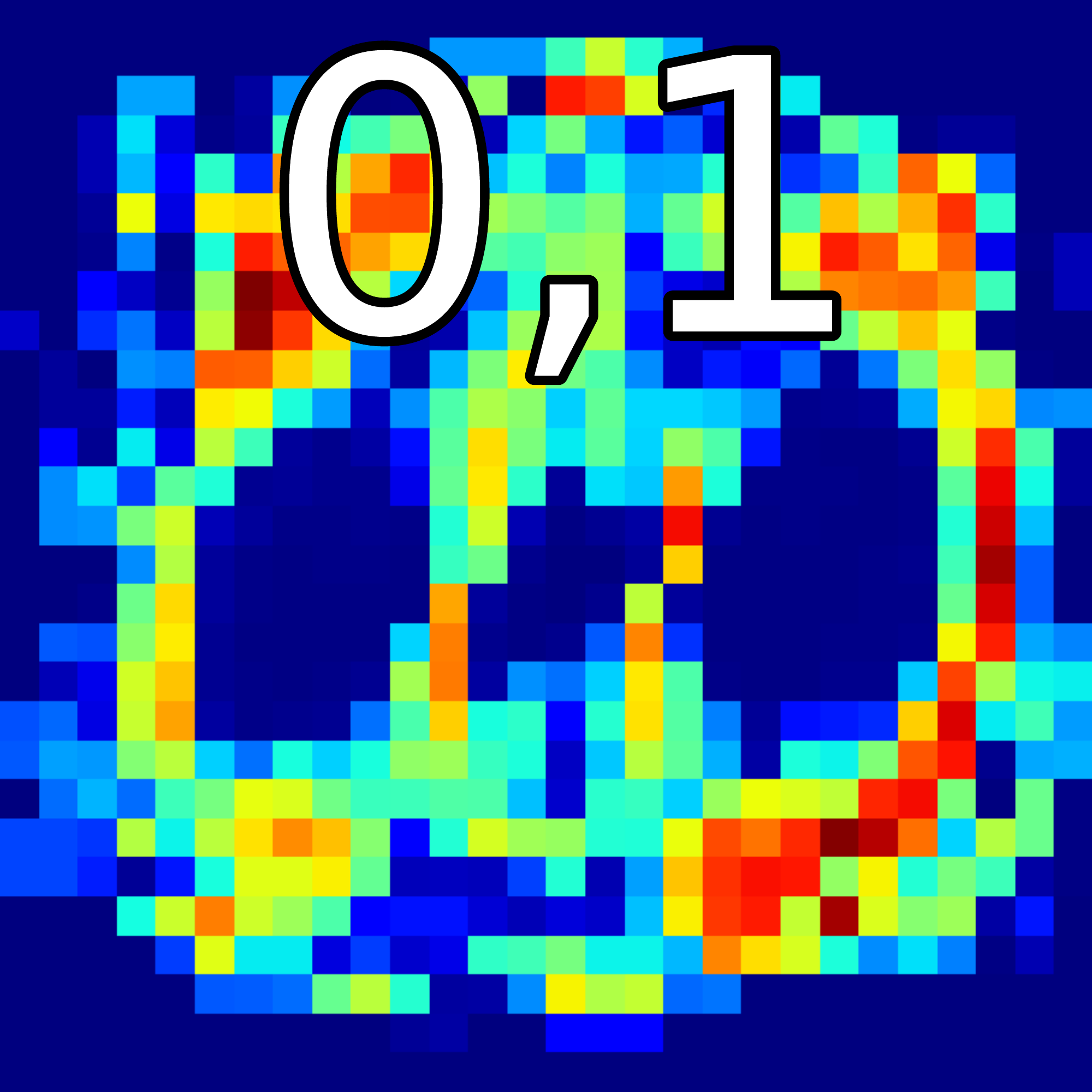}
				\includegraphics[width=0.1\linewidth]{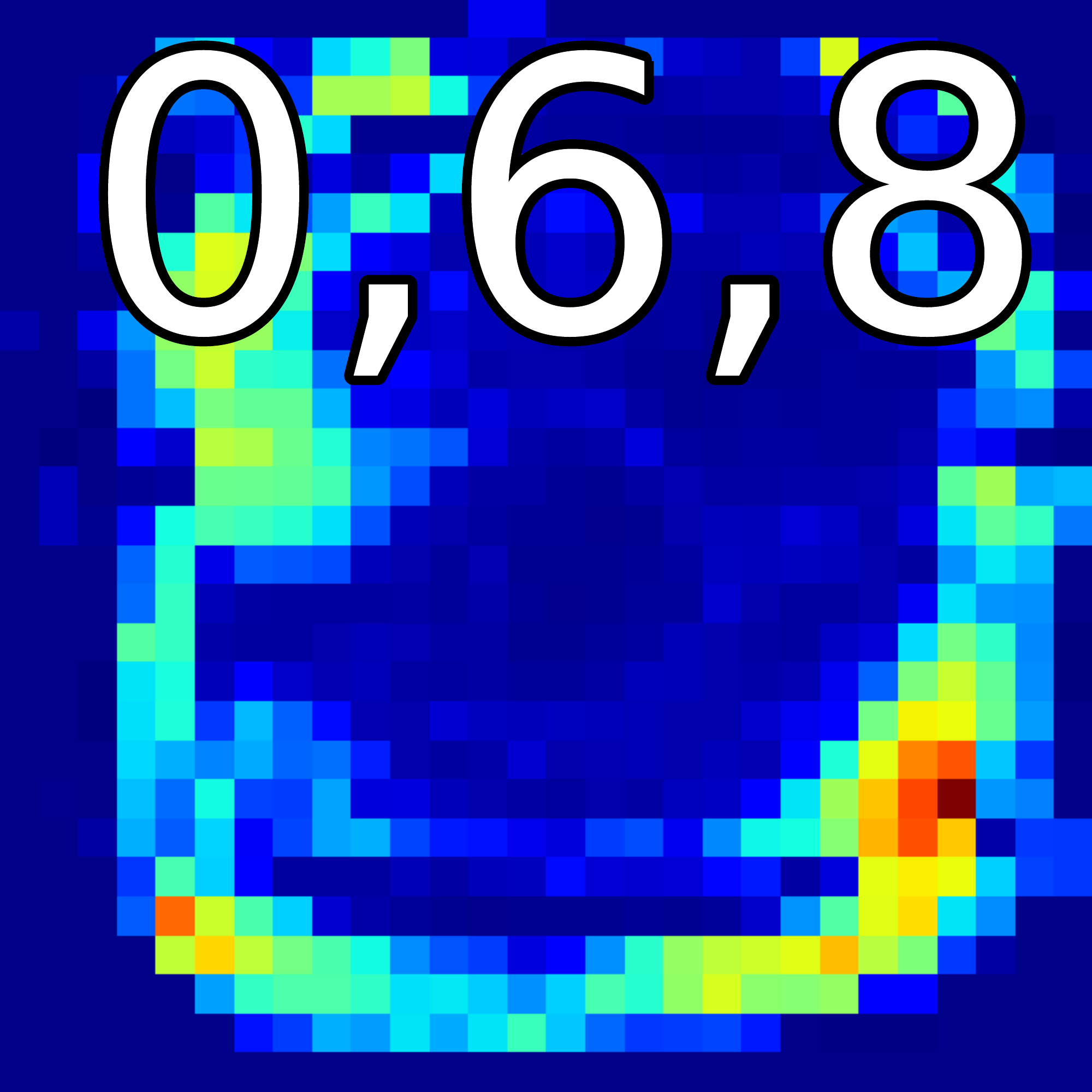}
				\includegraphics[width=0.1\linewidth]{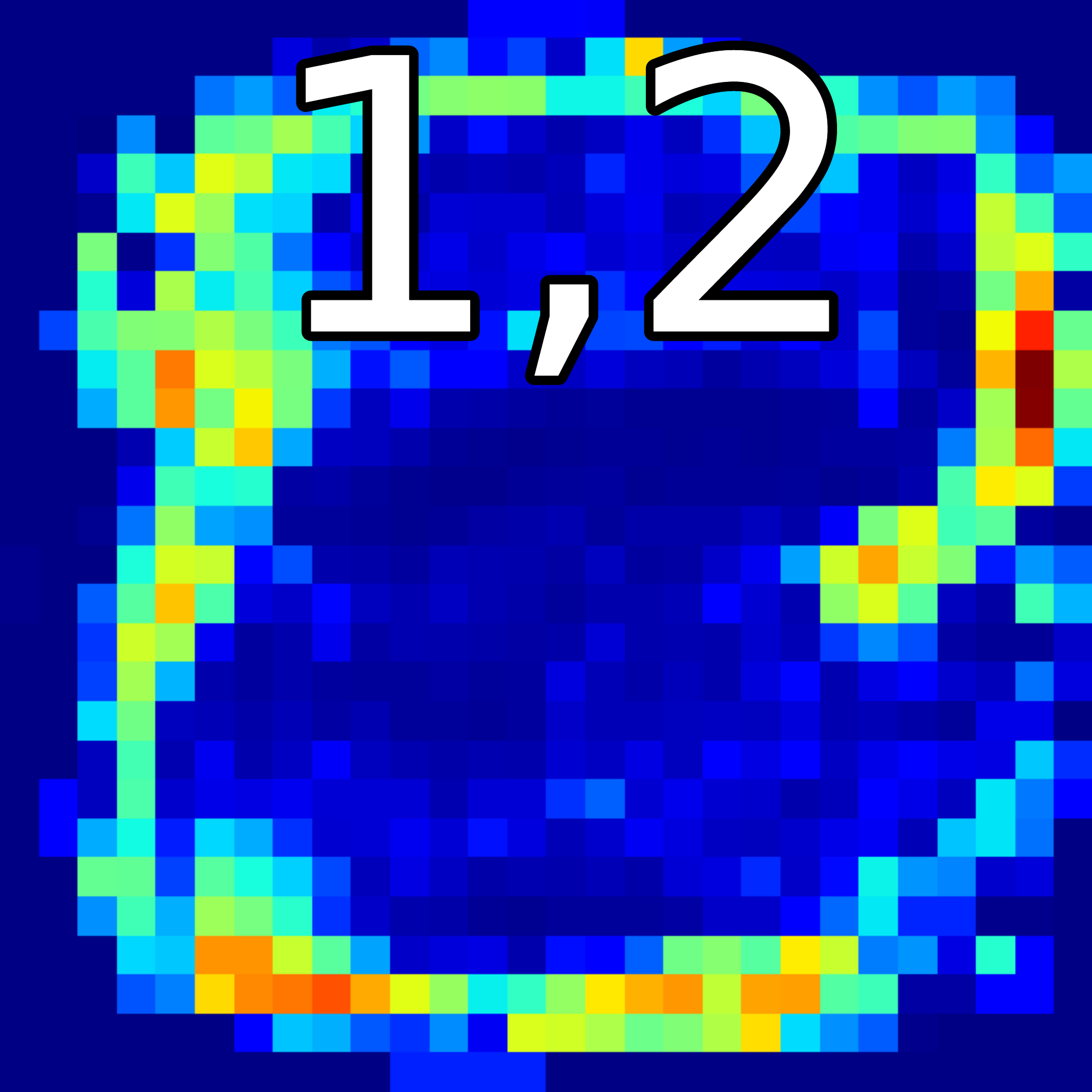}
				\includegraphics[width=0.1\linewidth]{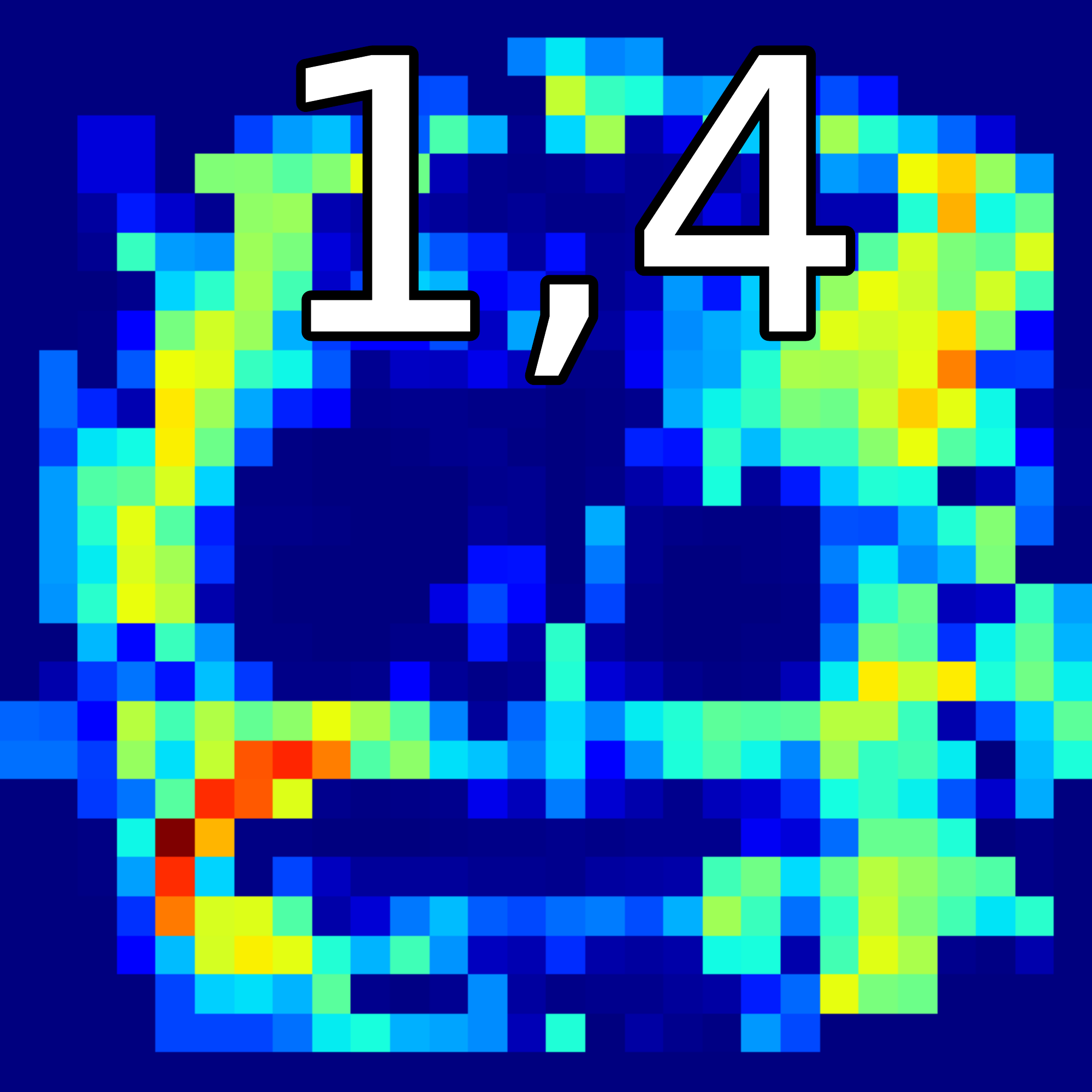}
				\includegraphics[width=0.1\linewidth]{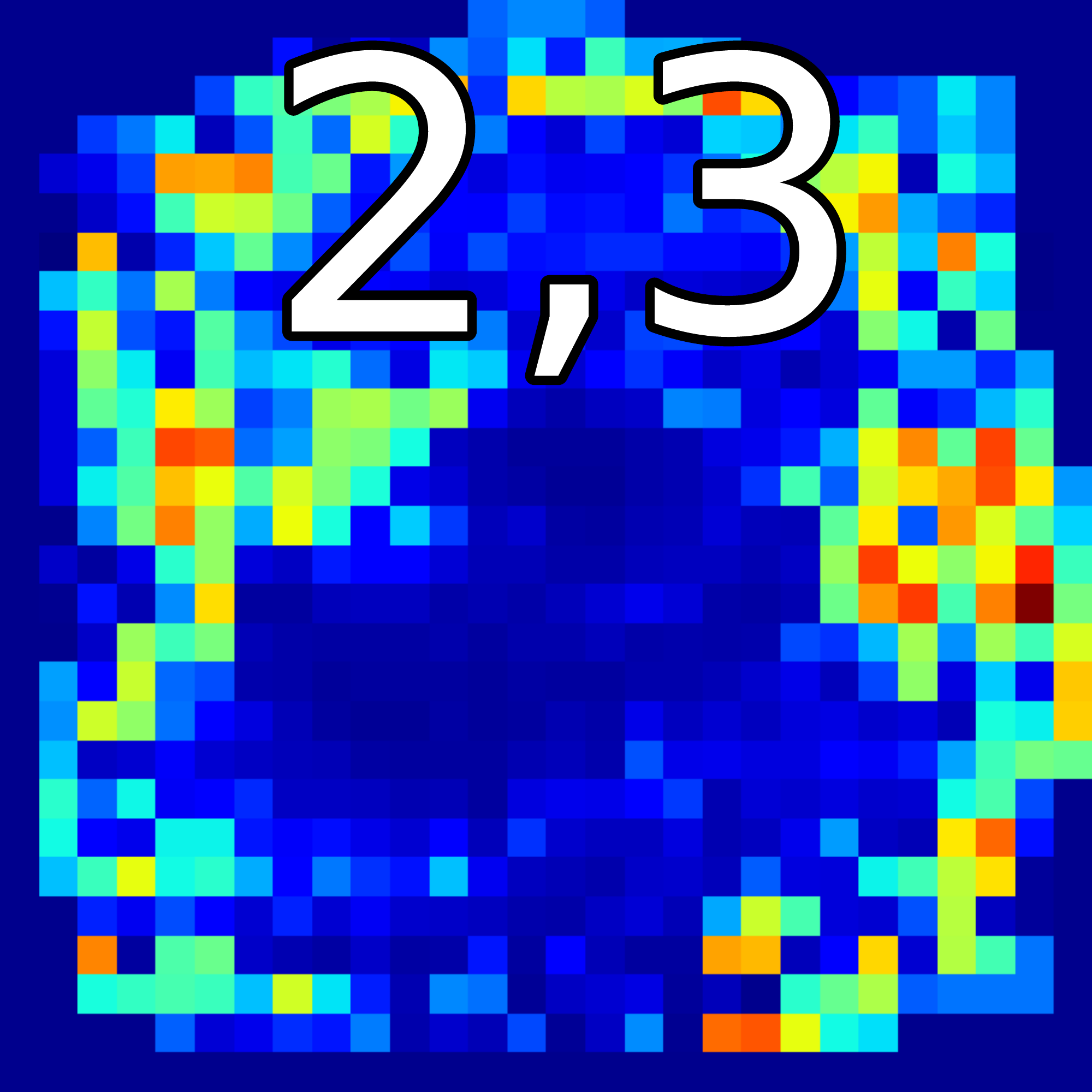}
				\includegraphics[width=0.1\linewidth]{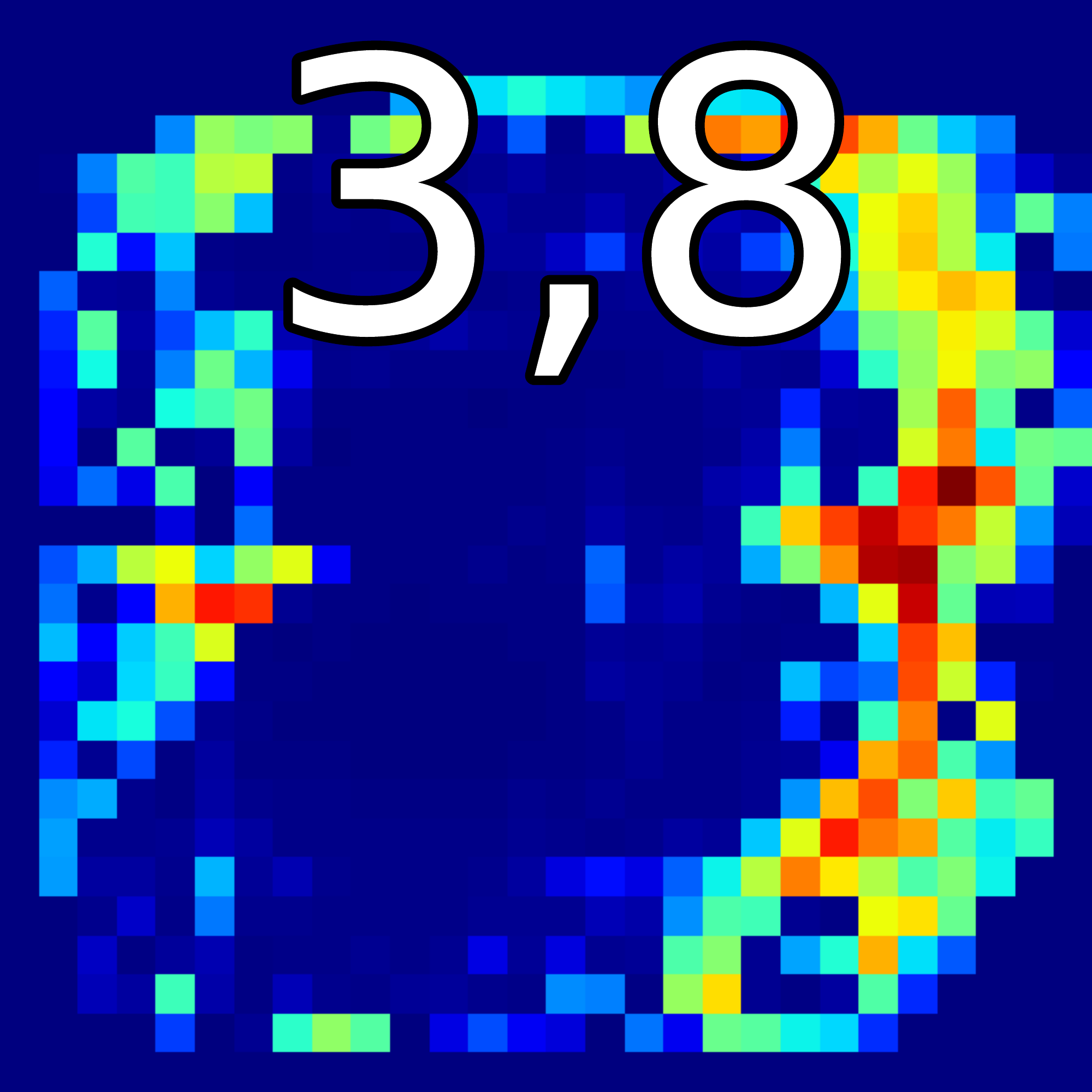}
				\includegraphics[width=0.1\linewidth]{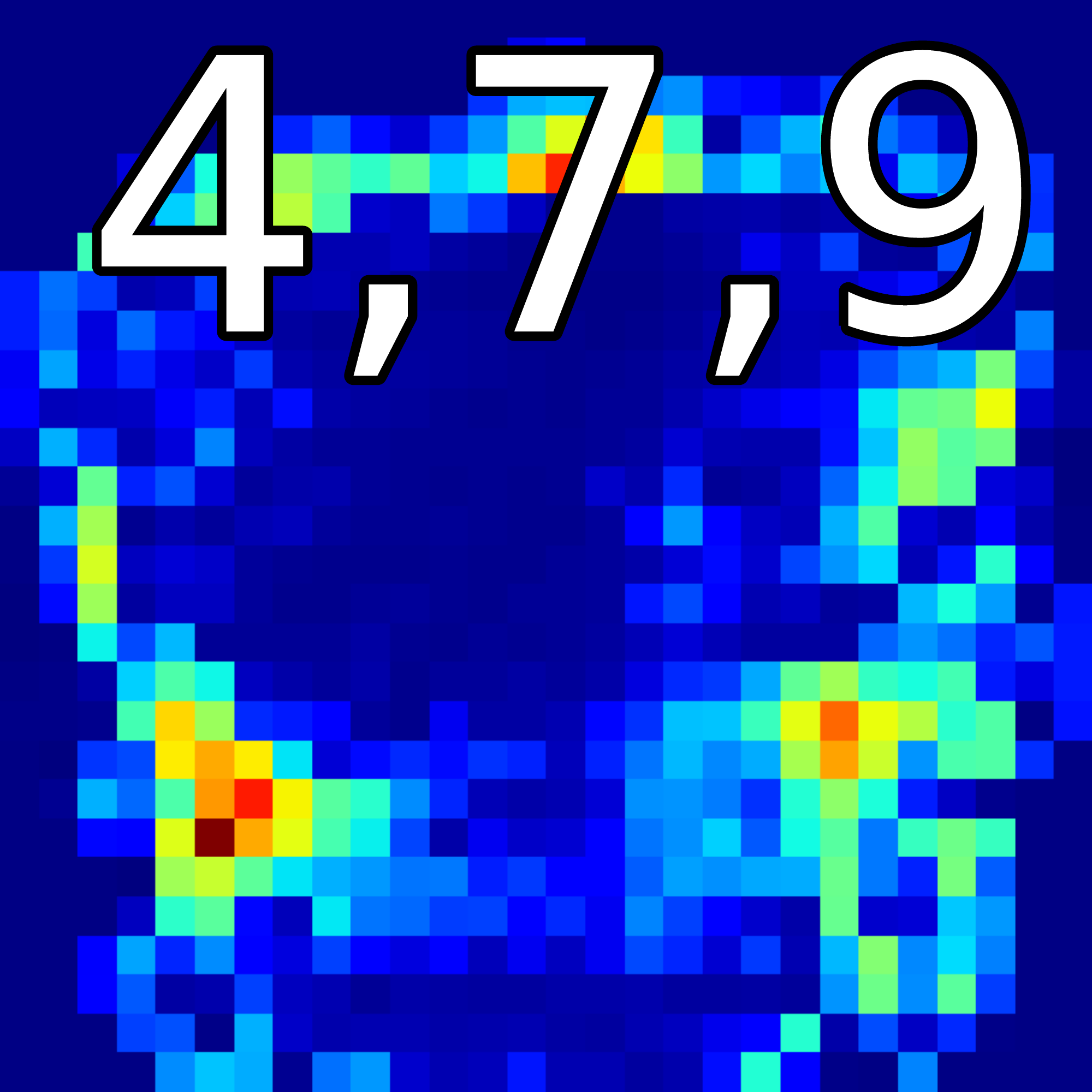}
				\includegraphics[width=0.1\linewidth]{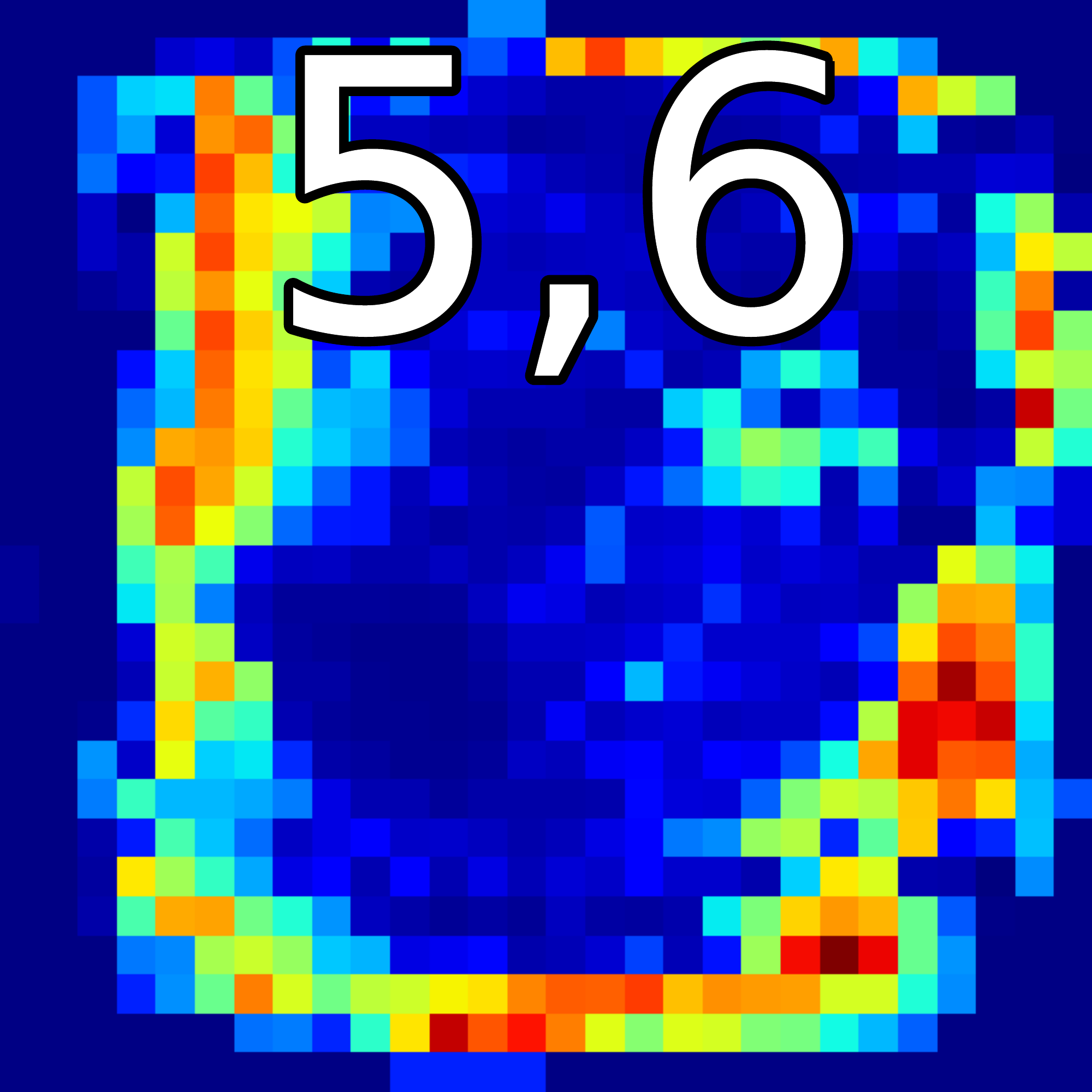}
				\includegraphics[width=0.1\linewidth]{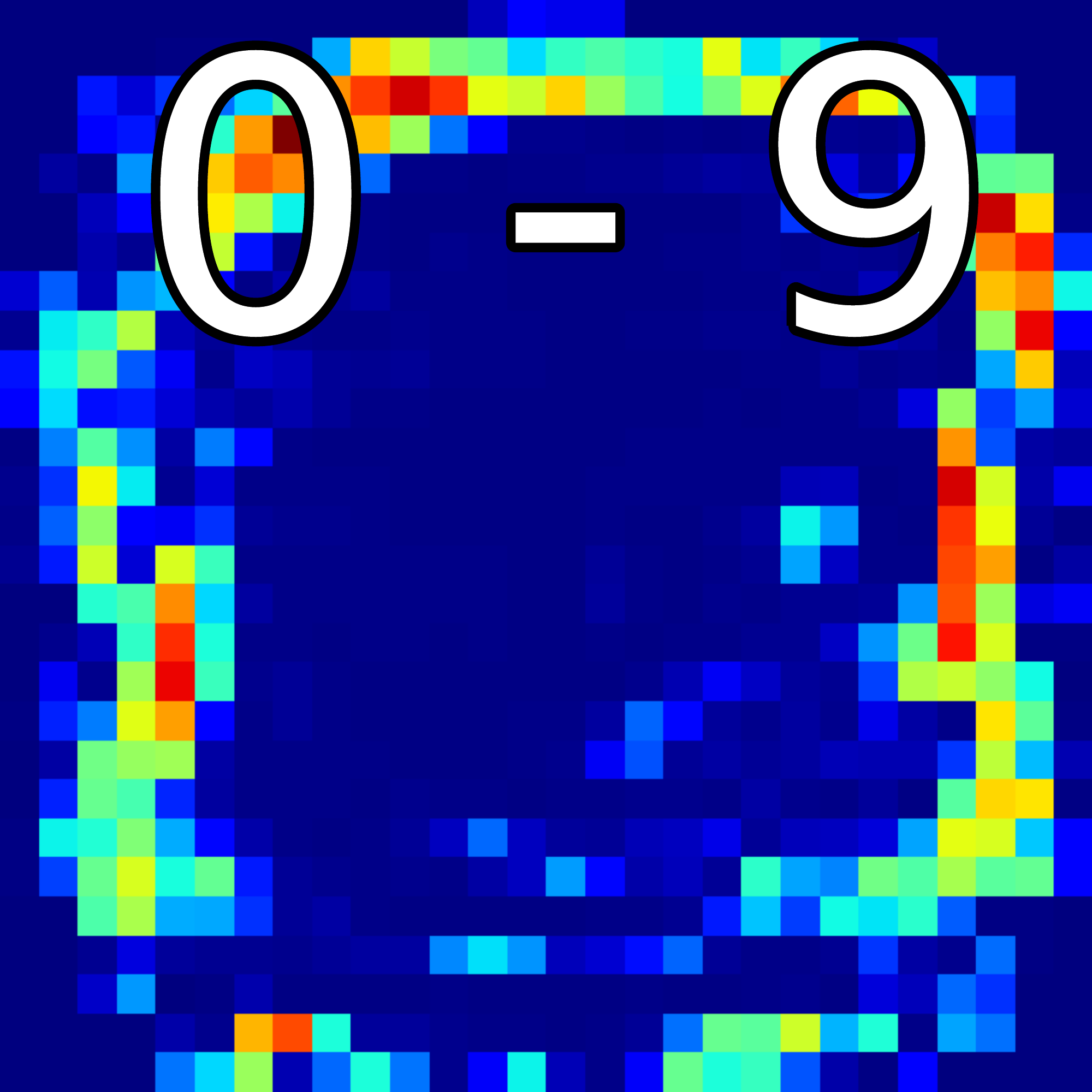}
				\caption{Top: Test accuracy by learning Gaussian kernels (left) and deep convolutional features (right); Bottom: Learned pixel length scales under \gls{ARD} kernels}
				\label{fig:mnist_experiments}
			\end{figure*}

			We now apply \cref{alg:multiclass_conditional_embedding_training} to train a \gls{CEN} with convolutional layers on MNIST. We employ an example architecture from the TensorFlow tutorial on deep MNIST classification \citep{abadi2016tensorflow}. This ReLU activated \gls{CNN} uses two convolutional layers, each with max pooling, followed by a fully connected layer with a drop out probability of 0.5. The original \gls{CNN} then employs a final softmax regressor on the last hidden layer for classification. The \gls{CEN} instead employs a linear kernel on the last hidden layer to construct the conditional mean embedding. We then train both networks from the same initialization using batch updates of $n_{b} = 6000$ images for 800 epochs, with learning rate $\eta = 0.01$. All biases and weight standard deviations are initialized to 0.1. The network weights and biases of the \gls{CEN} are learned jointly with the regularization hyperparameter, initialized to $\lambda = 10$, under our learning objective \eqref{eq:learning_objective}, while the original \gls{CNN} is trained under its usual cross entropy loss. The fully connected layer is trained with a drop out probability of 0.5 for both cases to allow direct comparison. The top right plot in \cref{fig:mnist_experiments} shows that \glspl{CEN} learn at a much faster rate, maintaining a higher test accuracy at all epochs. After 800 epochs, \gls{CEN} reaches a test accuracy of 99.48\%, compared to 99.26\% from the original \gls{CNN}. This demonstrates that our learning algorithm can perform end-to-end learning with convolutional layers from scratch, by simply replacing the softmax layer with a \gls{MCE}. The resulting \gls{CEN} can outperform the original \gls{CNN} in both convergence rate and accuracy.
	
	\section{Conclusion and Future Work}
	
		We developed a scalable hyperparameter learning framework for \glspl{CME} with categorical targets based on Rademacher complexity bounds. These bounds reveal a novel data-dependent quantity $r(\theta, \lambda)$ that reflect its model complexity. We use this measure as an regularization term in addition to the empirical loss for hyperparameter learning. In parallel light to the case with regularized least squares, it remains to be established what type of prior, if any, could correspond to such a regularizer. This would lead to a Bayesian interpretation of our framework. We also envision that such a quantity could potentially be generalized to \glspl{CME} with arbitrary targets, which would enable hyperparameter learning for general conditional mean embeddings in a way that is optimized for the prediction task.
	
\bibliographystyle{apalike}

{\small{\bibliography{references}}}

\newpage
\appendix

	\section{Convergence Theorems}
	\label{app:convergence_theorems}
	
		In this section we provide theorems and derivations that establish convergence properties of \glspl{MCE}. Most of the convergence results hold due to \glspl{MCE} being special cases of \glspl{CME}, whose empirical estimates are known to converge. We include this section for completeness.
		
		Suppose $\{X_{i}, Y_{i}\} \sim \mathbb{P}_{X Y}$ are \textit{iid} for all $i \in \mathbb{N}_{n}$, with $X_{i} : \Omega \to \mathcal{X}$ and $Y_{i} : \Omega \to \mathcal{Y}$. We wish to estimate some target function $f : \mathcal{X} \to \mathbb{R}$ by $\hat{f} : \mathcal{X} \to \mathbb{R}$ empirically with a dataset $\{X_{i}, Y_{i}\}_{i = 1}^{n}$ of size $n \in \mathbb{N}_{+}$. Since $\hat{f}$ is empirically estimated, it is a random function over the possible data observation events $\omega \in \Omega$. The aim is to provide a sense of the stochastic convergence of $\hat{f}$ to $f$ by providing an upper bound of their absolute pointwise difference $| \hat{f}(x) - f(x) |$, and show that such an upper bound converges to zero at some stochastic rate. Such an upper bound is provided by the convergence properties of \glspl{CME}. In particular, the empirical \gls{CME} stochastically converges to the \gls{CME} at rate $O_{p}((n \lambda)^{-\frac{1}{2}} + \lambda^{\frac{1}{2}})$, under the assumption that $k(x, \cdot) \in \mathrm{image}(C_{XX})$ \cite[Theorem 6]{song2009hilbert}. That is,
		\begin{equation}
			\begin{aligned}
				&\forall x \in \mathcal{X}, \; \forall \epsilon > 0, \; \exists M_{\epsilon} > 0 \quad s.t.  \\
				&\mathbb{P}\Big[\big\| \hat{\mu}_{Y | X = x} - \mu_{Y | X = x} \big\|_{\mathcal{H}_{l}} > M_{\epsilon} \Big((n \lambda)^{-\frac{1}{2}} + \lambda^{\frac{1}{2}}\Big)\Big] < \epsilon.
			\end{aligned}
		\label{eq:empirical_conditional_embedding_stochastic_convergence}
		\end{equation}
		
		In practice, the assumption that $k(x, \cdot) \in \mathrm{image}(C_{XX})$ can be relaxed by replacing $\mathcal{U}_{Y | X} = C_{YX} C_{XX}^{-1}$ with $\mathcal{U}_{Y | X} = C_{YX} (C_{XX} + \lambda I)^{-1}$ \citep{song2013kernel}. This will apply to all subsequent theorems in this section.
		\begin{theorem}[Pointwise and Uniform Convergence of Conditional Mean Embedding Estimators]
			\label{thm:pointwise_uniform_convergence}
			Suppose that $k(x, \cdot)$ is in the image of $C_{XX}$ and that there exists $0 \leq \gamma(x) < \infty$ such that for some estimator function $\hat{f} : \mathcal{X} \to \mathbb{R}$ and target function  $f : \mathcal{X} \to \mathbb{R}$,
			\begin{equation}
				| \hat{f}(x) - f(x) | \leq \gamma(x) \big\| \hat{\mu}_{Y | X = x} - \mu_{Y | X = x} \big\|_{\mathcal{H}_{l}}, \forall x \in \mathcal{X},
			\label{eq:estimator_error_bound}
			\end{equation}
			then the estimator $\hat{f}$ converges pointwise to the target $f$ at a stochastic rate of at least $O_{p}((n \lambda)^{-\frac{1}{2}} + \lambda^{\frac{1}{2}})$. Further, if $\gamma(x) = \gamma$ is independent of $x \in \mathcal{X}$, then this convergence is uniform.
		\end{theorem}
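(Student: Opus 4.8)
The plan is to transport the known convergence rate of the empirical \gls{CME} through the pointwise envelope \eqref{eq:estimator_error_bound}. The sole probabilistic input is \eqref{eq:empirical_conditional_embedding_stochastic_convergence}, which states $\| \hat{\mu}_{Y | X = x} - \mu_{Y | X = x} \|_{\mathcal{H}_{l}} = O_{p}(r_{n})$ for $r_{n} := (n \lambda)^{-\frac{1}{2}} + \lambda^{\frac{1}{2}}$. Because $\gamma(x)$ is deterministic and finite, scaling an $O_{p}(r_{n})$ random quantity by it leaves the rate unchanged, so the entire argument is essentially bookkeeping of the constant $M_{\epsilon}$.

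For the pointwise statement, fix $x \in \mathcal{X}$ and $\epsilon > 0$. If $\gamma(x) = 0$, then \eqref{eq:estimator_error_bound} forces $\hat{f}(x) = f(x)$ almost surely and there is nothing to prove; otherwise set $M'_{\epsilon} := \gamma(x) M_{\epsilon}$. Dividing \eqref{eq:estimator_error_bound} by $\gamma(x) > 0$ gives the event inclusion
\begin{equation}
\{ | \hat{f}(x) - f(x) | > M'_{\epsilon} r_{n} \} \subseteq \{ \| \hat{\mu}_{Y | X = x} - \mu_{Y | X = x} \|_{\mathcal{H}_{l}} > M_{\epsilon} r_{n} \},
\end{equation}
whose right-hand probability is below $\epsilon$ by \eqref{eq:empirical_conditional_embedding_stochastic_convergence}. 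Hence $| \hat{f}(x) - f(x) | = O_{p}(r_{n})$, which is pointwise convergence at the claimed rate.

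For the uniform statement I would take $\gamma(x) = \gamma$ constant and aim to control $\sup_{x \in \mathcal{X}} | \hat{f}(x) - f(x) | \leq \gamma \sup_{x \in \mathcal{X}} \| \hat{\mu}_{Y | X = x} - \mu_{Y | X = x} \|_{\mathcal{H}_{l}}$. The key is to factor the $x$-dependence out of the embedding difference via the operator it comes from: since $\mu_{Y | X = x} = \mathcal{U}_{Y | X} k(x, \cdot)$ and $\hat{\mu}_{Y | X = x} = \hat{\mathcal{U}}_{Y | X} k(x, \cdot)$,
\begin{equation}
\sup_{x \in \mathcal{X}} \| \hat{\mu}_{Y | X = x} - \mu_{Y | X = x} \|_{\mathcal{H}_{l}} \leq \| \hat{\mathcal{U}}_{Y | X} - \mathcal{U}_{Y | X} \|_{HS} \, \sup_{x \in \mathcal{X}} \sqrt{k(x, x)}.
\end{equation}
Under a bounded kernel the trailing supremum is a finite constant, and the operator-norm difference underlying \eqref{eq:empirical_conditional_embedding_stochastic_convergence} is itself $O_{p}(r_{n})$, so the supremum is $O_{p}(r_{n})$ and thus $\sup_{x} | \hat{f}(x) - f(x) | = O_{p}(r_{n})$.

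The main obstacle is precisely this promotion from a pointwise-in-$x$ rate to a supremum over a potentially uncountable domain: a uniform-constant tail bound holding separately at each $x$ does not, on its own, bound $\sup_{x}$. What rescues the argument is structural rather than a new probabilistic estimate -- the linearity $\hat{\mu}_{Y | X = x} - \mu_{Y | X = x} = (\hat{\mathcal{U}}_{Y | X} - \mathcal{U}_{Y | X}) k(x, \cdot)$ collapses the supremum onto a single operator-norm quantity whose rate is already known, and the constancy of $\gamma$ ensures no residual $x$-dependence re-enters through the envelope \eqref{eq:estimator_error_bound}. I would therefore emphasize that the $x$-uniformity of $M_{\epsilon}$ in \eqref{eq:empirical_conditional_embedding_stochastic_convergence} is exactly what this operator-norm factorization provides.
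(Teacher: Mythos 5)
Your pointwise argument is exactly the paper's: transport the tail bound \eqref{eq:empirical_conditional_embedding_stochastic_convergence} through the envelope \eqref{eq:estimator_error_bound}, absorbing $\gamma(x)$ into the constant via $\tilde{M}_{\epsilon}(x) := \gamma(x) M_{\epsilon}$. That half is correct (your separate handling of $\gamma(x) = 0$ is a harmless refinement).

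The uniform half is where you and the paper part ways, and where your version has a genuine gap. The paper's claim of ``uniform'' convergence is much weaker than what you set out to prove: it does not assert control of $\sup_{x \in \mathcal{X}} | \hat{f}(x) - f(x) |$ inside the probability. It asserts only that, when $\gamma(x) = \gamma$ is constant, the constant $\tilde{M}_{\epsilon} = \gamma M_{\epsilon}$ in the \emph{pointwise} tail bound no longer depends on $x$, so a single constant serves every $x$ simultaneously; the paper reads $M_{\epsilon}$ in \eqref{eq:empirical_conditional_embedding_stochastic_convergence} as $x$-free, so the only $x$-dependence entered through $\gamma$, and the uniform statement is then a one-line observation requiring no new probabilistic estimate. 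You instead interpret uniformity as sup-norm convergence (the standard analytic notion) and correctly observe that pointwise bounds with a common constant cannot deliver it. But your fix imports two inputs that are neither hypotheses of the theorem nor established anywhere in the paper: (i) an operator-level rate $\| \hat{\mathcal{U}}_{Y | X} - \mathcal{U}_{Y | X} \|_{HS} = O_{p}((n \lambda)^{-\frac{1}{2}} + \lambda^{\frac{1}{2}})$ --- the paper's only probabilistic input \eqref{eq:empirical_conditional_embedding_stochastic_convergence} is pointwise in $x$, as is the assumption $k(x, \cdot) \in \mathrm{image}(C_{XX})$, and an operator-norm rate cannot be recovered from a pointwise one for free; and (ii) boundedness of the kernel, $\sup_{x \in \mathcal{X}} k(x, x) < \infty$, which this theorem does not assume (it is introduced only later, in \cref{thm:rademacher_complexity_bound}). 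So as a proof of the theorem in the sense the paper uses it, your operator detour answers a stronger question than asked; and as a proof of that stronger sup-norm statement, it is incomplete, because (i) and (ii) are precisely the assertions that would need proof. The clean resolutions are either to adopt the paper's weak reading (uniformity of the constant, immediate once $\gamma$ is constant), or to strengthen the hypotheses to include (i) and (ii) explicitly, in which case your factorization $\sup_{x} \| (\hat{\mathcal{U}}_{Y | X} - \mathcal{U}_{Y | X}) k(x, \cdot) \|_{\mathcal{H}_{l}} \leq \| \hat{\mathcal{U}}_{Y | X} - \mathcal{U}_{Y | X} \|_{HS} \sup_{x} \sqrt{k(x, x)}$ is indeed the right step.
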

		
		\begin{proof}
			Suppose that there exists $0 \leq \gamma(x) < \infty$ such that \eqref{eq:estimator_error_bound} is satisfied. That is, the inequality \eqref{eq:estimator_error_bound} holds for all possible data observations $\{X_{i}, Y_{i}\}_{i = 1}^{n}$ where $X_{i} : \Omega \to \mathcal{X}$,  $Y_{i} : \Omega \to \mathcal{Y}$ for all $i \in \mathbb{N}_{n}$. For any constant $C$, the implication statement $\big\| \hat{\mu}_{Y | X = x} - \mu_{Y | X = x} \big\|_{\mathcal{H}_{\delta}} \leq C \implies | \hat{f}(x) - f(x) | \leq C \gamma(x)$ holds for all possible observation events $\omega \in \Omega$. Writing this explicitly in event space translates this to a probability statement,
			\begin{equation}
			\begin{aligned}
				\{\omega \in \Omega : \big\| \hat{\mu}_{Y | X = x} - \mu_{Y | X = x} \big\|_{\mathcal{H}_{l}} \leq C\} &\subseteq \{\omega \in \Omega : | \hat{f}(x) - f(x) | \leq C \gamma(x)\} \\
				\implies \mathbb{P}\Big[\big\| \hat{\mu}_{Y | X = x} - \mu_{Y | X = x} \big\|_{\mathcal{H}_{l}} \leq C\Big] &\leq \mathbb{P}\Big[| \hat{f}(x) - f(x) | \leq C \gamma(x) \Big].
			\label{eq:probability_statement}
			\end{aligned}
			\end{equation}
			
			Since we assume that $k(x, \cdot) \in \mathrm{image}(C_{XX})$, statement \eqref{eq:empirical_conditional_embedding_stochastic_convergence} is valid. By letting $C = M_{\epsilon} ((n \lambda)^{-\frac{1}{2}} + \lambda^{\frac{1}{2}})$ in \eqref{eq:probability_statement}, we immediately have that the probability inequality in statement \eqref{eq:empirical_conditional_embedding_stochastic_convergence} is also true if we replace $\| \hat{\mu}_{Y | X = x} - \mu_{Y | X = x} \|$ with $| \hat{f}(x) - f(x) |$ and $M_{\epsilon}$ with $\gamma(x) M_{\epsilon}$,	
			\begin{equation}
			\begin{aligned}
				\mathbb{P}\Big[\big\| \hat{\mu}_{Y | X = x} - \mu_{Y | X = x} \big\|_{\mathcal{H}_{l}} > M_{\epsilon} \Big((n \lambda)^{-\frac{1}{2}} + \lambda^{\frac{1}{2}}\Big)\Big] &< \epsilon \\
				\implies 1 - \mathbb{P}\Big[\big\| \hat{\mu}_{Y | X = x} - \mu_{Y | X = x} \big\|_{\mathcal{H}_{l}} \leq M_{\epsilon} \Big((n \lambda)^{-\frac{1}{2}} + \lambda^{\frac{1}{2}}\Big)\Big] &< \epsilon \\
				\implies \mathbb{P}\Big[\big\| \hat{\mu}_{Y | X = x} - \mu_{Y | X = x} \big\|_{\mathcal{H}_{l}} \leq M_{\epsilon} \Big((n \lambda)^{-\frac{1}{2}} + \lambda^{\frac{1}{2}}\Big)\Big] &> 1 -  \epsilon \\
				\implies \mathbb{P}\Big[| \hat{f}(x) - f(x) | \leq \gamma(x) M_{\epsilon} \Big((n \lambda)^{-\frac{1}{2}} + \lambda^{\frac{1}{2}}\Big)\Big] &> 1 -  \epsilon \\
				\implies 1 - \mathbb{P}\Big[| \hat{f}(x) - f(x) | \leq \gamma(x) M_{\epsilon} \Big((n \lambda)^{-\frac{1}{2}} + \lambda^{\frac{1}{2}}\Big)\Big] &< \epsilon \\
				\implies \mathbb{P}\Big[| \hat{f}(x) - f(x) | > \gamma(x) M_{\epsilon} \Big((n \lambda)^{-\frac{1}{2}} + \lambda^{\frac{1}{2}}\Big)\Big] &< \epsilon,
			\end{aligned}	
			\end{equation}
			where we employed statement \eqref{eq:probability_statement} between the third and fourth line for $C = M_{\epsilon} ((n \lambda)^{-\frac{1}{2}} + \lambda^{\frac{1}{2}})$. Therefore, since $M_{\epsilon}$ is arbitrary, define $\tilde{M}_{\epsilon}(x) := \gamma(x) M_{\epsilon}$ so that, with the above result, the statement \eqref{eq:empirical_conditional_embedding_stochastic_convergence} implies the following,
			\begin{equation}
				\forall x \in \mathcal{X}, \; \epsilon > 0, \; \exists \tilde{M}_{\epsilon}(x) > 0 \quad s.t. \quad \mathbb{P}\Big[\big| \hat{f}(x) - f(x) \big| > \tilde{M}_{\epsilon}(x) \Big((n \lambda)^{-\frac{1}{2}} + \lambda^{\frac{1}{2}}\Big)\Big] < \epsilon.
			\end{equation}
			
			In other words, the function $\hat{f}$ stochastically converges pointwise to $f$ with a rate of at least $O_{p}((n \lambda)^{-\frac{1}{2}} + \lambda^{\frac{1}{2}})$. The convergence is pointwise as the constant $\tilde{M}_{\epsilon}(x)$ may be different for each point $x \in \mathcal{X}$. If $\gamma(x) = \gamma$ such that $\tilde{M}_{\epsilon}(x) = \tilde{M}_{\epsilon}$ does not depend on $x \in \mathcal{X}$, then this stochastic convergence is uniform in its domain $\mathcal{X}$.
			\qed
		\end{proof}
	
		With \cref{thm:pointwise_uniform_convergence}, we can now show the convergence of various estimators based on the conditional mean embedding, as long as we can show that their estimator error is upper bounded by a multiple of the conditional mean embedding error in the \gls{RKHS} norm. As such, we turn to the convergence of the empirical decision probability function \eqref{eq:empirical_decision_probability} below.
		
		\begin{theorem}[Convergence of Empirical Decision Probability Function]
			\label{thm:probability_convergence}
			Assuming that $k(x, \cdot)$ is in the image of $C_{XX}$, the empirical decision probability function $\hat{p}_{c} : \mathcal{X} \to \mathbb{R}$ \eqref{eq:empirical_decision_probability} converges uniformly to the true decision probability $p_{c} : \mathcal{X} \to [0, 1]$ \eqref{eq:decision_probability} at a stochastic rate of at least $O_{p}((n \lambda)^{-\frac{1}{2}} + \lambda^{\frac{1}{2}})$ for all $c \in \mathcal{Y} = \mathbb{N}_{m}$.
		\end{theorem}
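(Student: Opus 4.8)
The plan is to reduce everything to \cref{thm:pointwise_uniform_convergence} by exhibiting a bound of the form \eqref{eq:estimator_error_bound} with estimator $\hat{f} = \hat{p}_c$ and target $f = p_c$, in which the multiplicative factor $\gamma(x)$ turns out to be a \emph{constant}; constancy in $x$ is precisely the hypothesis that upgrades the pointwise conclusion to a uniform one. Since the assumption $k(x,\cdot) \in \mathrm{image}(C_{XX})$ is shared by both statements, the only real work is to produce this constant bound.

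First I would rewrite both decision probabilities as inner products against the indicator feature $\mathbb{1}_c = \delta(c,\cdot)$ in $\mathcal{H}_\delta$. On the population side, the reproducing property combined with the definition $\mu_{Y | X = x} = \mathbb{E}[\delta(Y,\cdot) \mid X = x]$ gives
\[
\langle \mu_{Y | X = x}, \mathbb{1}_c \rangle_{\mathcal{H}_\delta} = \mathbb{E}\big[ \langle \delta(Y,\cdot), \delta(c,\cdot) \rangle_{\mathcal{H}_\delta} \mid X = x \big] = \mathbb{E}[\delta(Y,c) \mid X = x] = \mathbb{E}[\mathbb{1}_c(Y) \mid X = x] = p_c(x),
\]
while on the empirical side $\hat{p}_c(x) = \langle \hat{\mu}_{Y | X = x}, \mathbb{1}_c \rangle_{\mathcal{H}_\delta}$ is exactly the estimator \eqref{eq:empirical_decision_probability} obtained by setting $g = \mathbb{1}_c$ in \eqref{eq:empirical_conditional_expectation}.

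Next I would subtract these two representations, use linearity of the inner product, and apply Cauchy--Schwarz in $\mathcal{H}_\delta$ to obtain
\[
\big| \hat{p}_c(x) - p_c(x) \big| = \big| \langle \hat{\mu}_{Y | X = x} - \mu_{Y | X = x}, \mathbb{1}_c \rangle_{\mathcal{H}_\delta} \big| \leq \| \mathbb{1}_c \|_{\mathcal{H}_\delta} \, \big\| \hat{\mu}_{Y | X = x} - \mu_{Y | X = x} \big\|_{\mathcal{H}_\delta}.
\]
The key observation is that the reproducing property forces $\| \mathbb{1}_c \|_{\mathcal{H}_\delta}^2 = \langle \delta(c,\cdot), \delta(c,\cdot) \rangle_{\mathcal{H}_\delta} = \delta(c,c) = 1$, independently of both $x$ and $c$. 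Thus \eqref{eq:estimator_error_bound} holds with the constant $\gamma(x) = 1$, and \cref{thm:pointwise_uniform_convergence} immediately delivers uniform convergence of $\hat{p}_c$ to $p_c$ at the stated rate $O_p((n\lambda)^{-\frac{1}{2}} + \lambda^{\frac{1}{2}})$; since the constant is also independent of $c$, this holds simultaneously for every $c \in \mathbb{N}_m$.

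The step I expect to be most delicate is justifying the exchange of conditional expectation and inner product in the population identity, i.e.\ $\langle \mathbb{E}[\delta(Y,\cdot) \mid X=x], \mathbb{1}_c \rangle = \mathbb{E}[\langle \delta(Y,\cdot), \mathbb{1}_c \rangle \mid X=x]$, which rests on $\langle \cdot, \mathbb{1}_c \rangle$ being a bounded linear functional and hence commuting with the Bochner integral defining the conditional embedding. That $\mathbb{1}_c$ genuinely lies in $\mathcal{H}_\delta$ --- so that all these inner products are well defined --- is already secured by the characteristic property of $\delta$ established in the main text. Everything else follows directly from the reproducing property and \cref{thm:pointwise_uniform_convergence}.
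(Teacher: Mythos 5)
Your proposal is correct and follows essentially the same route as the paper's proof: express $p_c(x)$ and $\hat{p}_c(x)$ as inner products with $\mathbb{1}_c = \delta(c,\cdot)$, apply Cauchy--Schwarz in $\mathcal{H}_\delta$, note that the reproducing property gives $\|\mathbb{1}_c\|_{\mathcal{H}_\delta} = \sqrt{\delta(c,c)} = 1$, and invoke \cref{thm:pointwise_uniform_convergence} with the constant $\gamma(x) = 1$ to get uniformity. The only difference is that you explicitly justify the exchange of conditional expectation and inner product via the Bochner integral, a step the paper's proof takes for granted (relying on the conditional expectation property of \glspl{CME} cited in the main text), so if anything your write-up is slightly more careful on that point.
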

		
		\begin{proof}
			Consider the pointwise absolute difference between the decision probability and its empirical estimate,			
			\begin{equation}
			\begin{aligned}
				| \hat{p}_{c}(x) - p_{c}(x) | &= | \langle \hat{\mu}_{Y | X = x}, \mathbb{1}_{c} \rangle - \langle \mu_{Y | X = x}, \mathbb{1}_{c} \rangle | \\
				&= | \langle \hat{\mu}_{Y | X = x} - \mu_{Y | X = x}, \mathbb{1}_{c} \rangle | \\
				&\leq \big\| \hat{\mu}_{Y | X = x} - \mu_{Y | X = x} \big\|_{\mathcal{H}_{\delta}} \big\| \mathbb{1}_{c} \big\|_{\mathcal{H}_{\delta}},
			\label{eq:decision_probability_error_upper_bound}
			\end{aligned}
			\end{equation}
			where the last inequality follows from the Cauchy Schwarz inequality in a Hilbert space.
			
			Since $\mathbb{1}_{c} = \delta(c, \cdot)$ and using the fact that $\delta$ is a reproducing kernel, we have that for all $c \in \mathcal{Y} = \mathbb{N}_{m}$.
			\begin{equation}
			\begin{aligned}
				\big\| \mathbb{1}_{c} \big\|_{\mathcal{H}_{\delta}}^{2} = \langle \mathbb{1}_{c}, \mathbb{1}_{c} \rangle = \langle \delta(c, \cdot), \delta(c, \cdot) \rangle = \delta(c, c) = 1.
				\label{eq:indicator_RKHS_norm}
			\end{aligned}
			\end{equation}
			
			Therefore, by \cref{thm:pointwise_uniform_convergence} with $\gamma(x) = 1$ independent of $x \in \mathcal{X}$, $\hat{p}_{c}$ converges uniformly to $p_{c}$ at a stochastic rate of at least $O_{p}((n \lambda)^{-\frac{1}{2}} + \lambda^{\frac{1}{2}})$ for all $c \in \mathcal{Y} = \mathbb{N}_{m}$.
			\qed
		\end{proof}
	
		The above proof is for uniform convergence over all $x \in \mathcal{X}$ at the stochastic rate of at least $O_{p}((n \lambda)^{-\frac{1}{2}} + \lambda^{\frac{1}{2}})$. Intuitively, however, for stationary zero-centered kernels like the Gaussian kernel, the convergence rate may be higher at regions of high data density, since the kernel effects, being centered around the training data, are stronger at these regions. The worse case convergence rate described here in the theorem would be a tight lower bound for regions in $\mathcal{X}$ with lower data density, where the kernel effects have decayed and most empirical probabilities are smaller and further from summing up to one.
		
		Because the label space $\mathcal{Y} = \mathbb{N}_{m}$ is discrete and finite, \textit{bounded} functions $g \in \mathcal{H}_{\delta}$ in the \gls{RKHS} are equivalent to their vector representations $\bvec{g} := \{g(c)\}_{c = 1}^{m}$, because one can always write $g = \sum_{c = 1}^{m} g(c) \delta(c, \cdot)$. In other words, there is an isomorphism between $\mathbb{H}_{\delta}$ and $\mathbb{R}^{m}$. A convenient consequence is that inner products in the \gls{RKHS} are simply the usual dot products in a Euclidean space, since
		\begin{equation}
		\begin{aligned}
			\langle g_{1}, g_{2} \rangle_{\mathcal{H}_{\delta}} &= \bigg\langle \sum_{c = 1}^{m} g_{1}(c) \delta(c, \cdot), \sum_{c' = 1}^{m} g_{2}(c') \delta(c', \cdot)  \bigg\rangle_{\mathcal{H}_{\delta}} \\
			&= \sum_{c = 1}^{m} \sum_{c' = 1}^{m} g_{1}(c) g_{2}(c') \langle \delta(c, \cdot), \delta(c', \cdot) \rangle_{\mathcal{H}_{\delta}} \\
			&= \sum_{c = 1}^{m} g_{1}(c) g_{2}(c) \\
			&= \bvec{g}_{1} \cdot \bvec{g}_{2}.
		\end{aligned}
		\end{equation}
	
		Consequently, the \gls{RKHS} norm for bounded functions $g \in \mathcal{H}_{\delta}$ is simply the $\ell_{2}$-norm of its vector representation $\bvec{g}$,
		\begin{equation}
			\lVert g \rVert_{\mathcal{H}_{\delta}} = \lVert \bvec{g} \rVert_{\ell_{2}}.
		\label{eq:RKHS_norm_is_l2_norm}
		\end{equation}
		
		A special and convenient result that arises due to this discrete and finite label space is that the decision probabilities and its empirical estimate are simply the conditional mean embeddings and its empirical estimate.
		
		\begin{lemma}[Decision Probabilities are Conditional Mean Embeddings]
			\label{thm:probability_is_embedding}
			The decision probability for class $c \in \mathbb{N}_{m}$ given an example $x \in \mathcal{X}$ is the conditional mean embedding with $l = \delta$ conditioned at example $x$ evaluated at label $c$,
			\begin{equation}
				p_{c}(x) := \mathbb{P}[Y = c | X = x] = \mu_{Y | X = x}(c).
			\end{equation}
			
			Therefore, $\bvec{p}(x) \equiv \mu_{Y | X = x}$.
		\end{lemma}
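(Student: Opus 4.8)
The plan is to chain together three ingredients that are already in place: the reproducing property of the output kernel $\delta$, the inner-product characterisation of conditional expectations, and the definition of the decision probability. First I would exploit that, because $\mathcal{Y} = \mathbb{N}_m$ is discrete, every bounded $g \in \mathcal{H}_{\delta}$ admits pointwise evaluation, and by the reproducing property evaluation at a label $c$ is an inner product against the canonical feature, $g(c) = \langle g, \delta(c, \cdot) \rangle_{\mathcal{H}_{\delta}}$. Applying this to $g = \mu_{Y | X = x}$ and recalling the already-established identity $\mathbb{1}_{c} = \delta(c, \cdot)$ gives $\mu_{Y | X = x}(c) = \langle \mu_{Y | X = x}, \mathbb{1}_{c} \rangle_{\mathcal{H}_{\delta}}$.

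Second, I would invoke the inner-product identity $\mathbb{E}[g(Y) | X = x] = \langle \mu_{Y | X = x}, g \rangle$ [Theorem 4, song2009hilbert] with the particular choice $g = \mathbb{1}_{c}$, which is legitimate since $\mathbb{1}_{c} \in \mathcal{H}_{\delta}$ has already been shown. This converts the inner product into a conditional expectation, $\langle \mu_{Y | X = x}, \mathbb{1}_{c} \rangle = \mathbb{E}[\mathbb{1}_{c}(Y) | X = x]$. Finally, the definition of the decision probability already records $p_{c}(x) = \mathbb{E}[\mathbb{1}_{c}(Y) | X = x]$, so stringing the equalities together yields $\mu_{Y | X = x}(c) = p_{c}(x)$. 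Collecting these identities over all $c \in \mathbb{N}_{m}$ into a vector, and using the isomorphism $\mathcal{H}_{\delta} \cong \mathbb{R}^{m}$ under which the vector representation of $\mu_{Y | X = x}$ has $c$-th component $\mu_{Y | X = x}(c)$, gives the concluding statement $\bvec{p}(x) \equiv \mu_{Y | X = x}$.

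The argument is essentially bookkeeping once the right identities are lined up; the only subtlety — the main obstacle, such as it is — is ensuring the hypothesis of the inner-product identity holds, namely that $\mathbb{E}[\mathbb{1}_{c}(Y) | X = \cdot] = p_{c}(\cdot)$ lies in $\mathcal{H}_{k}$. This is exactly the type of condition controlled by the standing assumption $k(x, \cdot) \in \mathrm{image}(C_{XX})$ used throughout this appendix, so I would either appeal to that assumption directly, or alternatively bypass the inner-product identity entirely by interchanging evaluation with the expectation defining $\mu_{Y | X = x} = \mathbb{E}[\delta(Y, \cdot) | X = x]$ — justified because $\delta(Y, c) = \mathbb{1}_{c}(Y)$ is bounded — to obtain $\mu_{Y | X = x}(c) = \mathbb{E}[\delta(Y, c) | X = x] = \mathbb{P}[Y = c | X = x]$ more directly.
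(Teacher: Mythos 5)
Your proposal is correct, and your closing ``bypass'' is precisely the paper's own proof: the paper simply unwinds the definition $\mu_{Y|X=x} := \mathbb{E}[l(Y,\cdot)\,|\,X=x]$ with $l=\delta$, evaluates at $c$, and uses the fact that the expectation of an indicator is a probability, giving $\mu_{Y|X=x}(c) = \mathbb{E}[\delta(Y,c)\,|\,X=x] = \mathbb{E}[\mathbb{1}_{c}(Y)\,|\,X=x] = \mathbb{P}[Y=c\,|\,X=x] = p_{c}(x)$, with no side conditions. Your primary route --- reproducing property plus the inner-product identity of Theorem 4 of \cite{song2009hilbert} --- reaches the same conclusion but is a detour that imports a hypothesis the lemma does not need: that theorem requires $\mathbb{E}[\mathbb{1}_{c}(Y)\,|\,X=\cdot\,] = p_{c}(\cdot) \in \mathcal{H}_{k}$, a condition on the \emph{input} kernel and the joint distribution, whereas the lemma is a purely definitional, population-level identity living entirely in the output space $\mathcal{H}_{\delta}$. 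Be careful, too, with your claim that this hypothesis is ``exactly the type of condition controlled by'' $k(x,\cdot) \in \mathrm{image}(C_{XX})$: these are distinct assumptions (the image condition is the hypothesis of the convergence result, Theorem 6 of the same reference, used for the empirical estimator), and neither implies the other in general, so the primary route taken on its own would rest on an unjustified appeal. What rescues the proposal is exactly your alternative: since $\mathcal{Y} = \mathbb{N}_{m}$ is finite, $\mathcal{H}_{\delta}$ is finite-dimensional, evaluation at $c$ is a continuous linear functional that commutes with the Bochner conditional expectation, and the identity follows assumption-free. That directness is what the paper's proof buys, and it is the form you should prefer.
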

	
		\begin{proof} 
			Since indicator functions are the canonical features of the label \gls{RKHS} $\mathcal{H}_{\delta}$, we employ the fact that expectations of indicator functions are probabilities to prove this claim,
			\begin{equation}
			\begin{aligned}
				\mu_{Y | X = x}(c) :=& \mathbb{E}[l(Y, c) | X = x ]= \mathbb{E}[\delta(Y, c) | X = x] \\
				=& \mathbb{E}[\mathbb{1}_{c}(Y) | X = x] = \mathbb{P}[Y \in \{c\} | X = x] \\
				=& \mathbb{P}[Y = c | X = x] =: p_{c}(x).
			\end{aligned}
			\end{equation}
			\qed
		\end{proof}

		\begin{lemma}[Empirical Decision Probabilities are Empirical Conditional Mean Embeddings]
			\label{thm:empirical_probability_is_embedding}
			The empirical decision probability \eqref{eq:empirical_decision_probability} for class $c \in \mathbb{N}_{m}$ given an example $x \in \mathcal{X}$ is the empirical conditional mean embedding with $l = \delta$ conditioned at example $x$ evaluated at label $c$,
			\begin{equation}
			\hat{p}_{c}(x) = \hat{\mu}_{Y | X = x}(c).
			\end{equation}
			
			Therefore, $\hat{\bvec{p}}(x) \equiv \hat{\mu}_{Y | X = x}$.
		\end{lemma}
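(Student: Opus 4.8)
The plan is to mirror the proof of \cref{thm:probability_is_embedding} (Decision Probabilities are Conditional Mean Embeddings) in the finite-sample setting, replacing each population quantity by its empirical estimate. The central observation is that, because $l = \delta$ is reproducing on $\mathcal{Y} = \mathbb{N}_{m}$, evaluating any element of $\mathcal{H}_{\delta}$ at a label $c$ coincides with taking its inner product against the canonical feature $\delta(c, \cdot) = \mathbb{1}_{c}$. I would therefore express the pointwise evaluation of the empirical \gls{CME} as such an inner product and then appeal directly to the finite-sample formula for conditional expectations already derived in \eqref{eq:empirical_conditional_expectation}.

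First I would unfold the definition $\hat{\mu}_{Y | X = x} = \hat{\mathcal{U}}_{Y | X} k(x, \cdot)$ using $\hat{\mathcal{U}}_{Y | X} = \Psi (K + n \lambda I)^{-1} \Phi^{T}$ from \eqref{eq:empirical_conditional_embedding}, and simplify $\Phi^{T} k(x, \cdot)$ to the coefficient vector $\bvec{k}(x)$ via the reproducing property $\langle \phi(x_{i}), k(x, \cdot) \rangle = k(x_{i}, x)$. Since the columns of $\Psi$ are the label features $\psi(y_{i}) = \delta(y_{i}, \cdot)$, this exhibits $\hat{\mu}_{Y | X = x}$ as the explicit element $\sum_{i = 1}^{n} w_{i}(x) \, \delta(y_{i}, \cdot) \in \mathcal{H}_{\delta}$, where $w(x) := (K + n \lambda I)^{-1} \bvec{k}(x)$.

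Next I would evaluate this element at $c$ through the reproducing property, $\hat{\mu}_{Y | X = x}(c) = \langle \hat{\mu}_{Y | X = x}, \delta(c, \cdot) \rangle = \langle \hat{\mu}_{Y | X = x}, \mathbb{1}_{c} \rangle$, which by \eqref{eq:empirical_conditional_expectation} with $g = \mathbb{1}_{c}$ equals $\bvec{1}_{c}^{T} (K + n \lambda I)^{-1} \bvec{k}(x)$. This is precisely the definition of $\hat{p}_{c}(x)$ in \eqref{eq:empirical_decision_probability}. Equivalently, one can bypass the inner product entirely by using $\delta(y_{i}, c) = \mathbb{1}_{c}(y_{i}) = [\bvec{1}_{c}]_{i}$ and collapsing $\sum_{i} w_{i}(x) \, \delta(y_{i}, c)$ into $\bvec{1}_{c}^{T} w(x)$ directly. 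Stacking over $c \in \mathbb{N}_{m}$ then yields the vector identity $\hat{\bvec{p}}(x) \equiv \hat{\mu}_{Y | X = x}$ under the isomorphism $\mathcal{H}_{\delta} \cong \mathbb{R}^{m}$ established above, completing the claim.

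I do not anticipate a genuine obstacle: the argument is purely algebraic once the reproducing property of $\delta$ and the finite-sample formula \eqref{eq:empirical_conditional_expectation} are in hand. The only point requiring a little care is the legitimacy of treating the abstract \gls{RKHS} element $\hat{\mu}_{Y | X = x}$ as a vector in $\mathbb{R}^{m}$ and evaluating it coordinatewise, but this is exactly the isomorphism $\mathcal{H}_{\delta} \cong \mathbb{R}^{m}$ together with the identity \eqref{eq:RKHS_norm_is_l2_norm} already in place, so no new machinery is needed.
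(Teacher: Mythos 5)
Your proposal is correct and follows essentially the same route as the paper's proof: both unfold $\hat{\mu}_{Y|X=x} = \hat{\mathcal{U}}_{Y|X}\phi(x)$ with $\hat{\mathcal{U}}_{Y|X} = \Psi(K + n\lambda I)^{-1}\Phi^{T}$, use the reproducing property of $\delta$ to turn evaluation at $c$ into an inner product with $\delta(c,\cdot) = \mathbb{1}_{c}$, and collapse the algebra via $\{\delta(y_{i},c)\}_{i=1}^{n} = \bvec{1}_{c}$ to recover exactly $\hat{p}_{c}(x) = \bvec{1}_{c}^{T}(K + n\lambda I)^{-1}\bvec{k}(x)$. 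Your invocation of \eqref{eq:empirical_conditional_expectation} with $g = \mathbb{1}_{c}$ is just a repackaging of the same computation the paper carries out explicitly, so there is no substantive difference.
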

		
		\begin{proof}
			Let the canonical feature maps of $\mathcal{X}$ and $\mathcal{Y}$ be $\phi(x) = k(x, \cdot)$ and $\psi(y) = l(y, \cdot) = \delta(y, \cdot)$, then the empirical conditional mean embedding is defined by
			\begin{equation}
				\hat{\mu}_{Y | X = x} := \hat{\mathcal{U}}_{Y | X} \phi(x).
			\end{equation}
			
			By the reproducing property, the evaluation of $\hat{\mu}_{Y | X = x} \in \mathcal{H}_{l}$ is given by a dot product,
			\begin{equation}
			\begin{aligned}
				\hat{\mu}_{Y | X = x}(c) &= \langle l(c, \cdot), \hat{\mu}_{Y | X = x} \rangle \\
				&= \langle \psi(c), \hat{\mu}_{Y | X = x} \rangle \\
				&= \psi(c)^{T} \hat{\mu}_{Y | X = x} \\
				&= \psi(c)^{T} \hat{\mathcal{U}}_{Y | X} \phi(x) \\
				&= \psi(c)^{T} \Psi (K + n \lambda I)^{-1} \Phi^{T} \phi(x) \\
				&= \bvec{l}_{c}^{T} (K + n \lambda I)^{-1} \bvec{k}(x),
			\end{aligned}
			\end{equation}
			where $\bvec{l}_{c} := \{l(y_{i}, c)\}_{i = 1}^{n}$ and $\bvec{k}_{x} := \{k(x_{i}, x)\}_{i = 1}^{n}$. While the notation $\bvec{l}_{c}$ is usually avoided due do its similarity to $\bvec{1}_{c}$, in this context they happen to represent equal quantities,
			\begin{equation}
				\bvec{l}_{c} := \{l(y_{i}, c)\}_{i = 1}^{n} = \{\delta(y_{i}, c)\}_{i = 1}^{n} = \{\mathbb{1}_{c}(y_{i})\}_{i = 1}^{n} =: \bvec{1}_{c}.
			\end{equation}
			
			The claim then immediately follows by the definition of our decision probability estimator,
			\begin{equation}
				\hat{\mu}_{Y | X = x}(c) = \bvec{1}_{c}^{T} (K + n \lambda I)^{-1} \bvec{k}(x) =: \hat{p}_{c}(x).
			\end{equation}
			\qed
		\end{proof}
		
		\Cref{thm:empirical_probability_is_embedding} shows that the decision function $\bvec{f}(x)$ \eqref{eq:empirical_decision_probability_vector} of a \gls{MCE} is no more than the empirical conditional mean embedding estimated from the data.
		
		Since we have identified the equivalence of decision probabilities and the conditional mean embedding, we can now also show that the empirical decision probability vector also converges to the true decision probability vector.
		
		\begin{lemma}[Uniform Convergence of Empirical Decision Probability Vector Function in $\ell_{1}$ and $\ell_{2}$]
			\label{thm:probability_vector_convergence} 
			Assuming that $k(x, \cdot)$ is in the image of $C_{XX}$, the empirical decision probability vector function $\hat{\bvec{p}} : \mathcal{X} \to \mathbb{R}^{m}$ \eqref{eq:empirical_decision_probability_vector} converges uniformly to the true decision probability vector function $\bvec{p} : \mathcal{X} \to [0, 1]^{m}$ in the $\ell_{1}$-norm and $\ell_{2}$-norm, where $\bvec{p}(x) := \{p_{c}(x)\}_{c = 1}^{m}$, at a stochastic rate of at least $O_{p}((n \lambda)^{-\frac{1}{2}} + \lambda^{\frac{1}{2}})$ for all $c \in \mathcal{Y} = \mathbb{N}_{m}$.
		\end{lemma}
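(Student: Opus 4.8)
The plan is to leverage the identifications already established earlier in this section. By \cref{thm:probability_is_embedding} and \cref{thm:empirical_probability_is_embedding}, the true and empirical decision probability vectors are exactly the vector representations of the conditional mean embedding and its empirical estimate, $\bvec{p}(x) \equiv \mu_{Y | X = x}$ and $\hat{\bvec{p}}(x) \equiv \hat{\mu}_{Y | X = x}$, both viewed as elements of $\mathbb{R}^{m} \cong \mathcal{H}_{\delta}$. The difference $\hat{\bvec{p}}(x) - \bvec{p}(x)$ is therefore the vector representation of $\hat{\mu}_{Y | X = x} - \mu_{Y | X = x}$, so the isomorphism result \eqref{eq:RKHS_norm_is_l2_norm} yields the exact identity
\[
	\big\| \hat{\bvec{p}}(x) - \bvec{p}(x) \big\|_{\ell_{2}} = \big\| \hat{\mu}_{Y | X = x} - \mu_{Y | X = x} \big\|_{\mathcal{H}_{\delta}}
\]
for every $x \in \mathcal{X}$.

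First I would treat the $\ell_{2}$ case. Since the identity above holds with a constant $\gamma(x) = 1$ independent of $x$, it matches the hypothesis \eqref{eq:estimator_error_bound} of \cref{thm:pointwise_uniform_convergence} with the scalar quantity $| \hat{f}(x) - f(x) |$ replaced by the nonnegative quantity $\| \hat{\bvec{p}}(x) - \bvec{p}(x) \|_{\ell_{2}}$. Under the assumption $k(x, \cdot) \in \mathrm{image}(C_{XX})$, this gives uniform convergence of $\hat{\bvec{p}}$ to $\bvec{p}$ in $\ell_{2}$ at the stochastic rate $O_{p}((n \lambda)^{-\frac{1}{2}} + \lambda^{\frac{1}{2}})$. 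Next I would handle the $\ell_{1}$ case via the finite-dimensional norm inequality $\| \bvec{v} \|_{\ell_{1}} \leq \sqrt{m} \, \| \bvec{v} \|_{\ell_{2}}$ valid for all $\bvec{v} \in \mathbb{R}^{m}$. Applying this to $\bvec{v} = \hat{\bvec{p}}(x) - \bvec{p}(x)$ and combining with the $\ell_{2}$ identity gives
\[
	\big\| \hat{\bvec{p}}(x) - \bvec{p}(x) \big\|_{\ell_{1}} \leq \sqrt{m} \, \big\| \hat{\mu}_{Y | X = x} - \mu_{Y | X = x} \big\|_{\mathcal{H}_{\delta}},
\]
again with a constant $\gamma(x) = \sqrt{m}$ independent of $x$. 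Invoking \cref{thm:pointwise_uniform_convergence} a second time then yields uniform convergence in $\ell_{1}$ at the same rate, completing both claims.

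I expect the only subtlety to be notational: \cref{thm:pointwise_uniform_convergence} is phrased for a scalar estimator $\hat{f} : \mathcal{X} \to \mathbb{R}$, whereas the error here is a vector norm. The main point to verify is that this is not a genuine obstacle, because the entire argument of that theorem rests solely on an inequality of the form $D(x) \leq \gamma(x) \| \hat{\mu}_{Y | X = x} - \mu_{Y | X = x} \|_{\mathcal{H}_{\delta}}$ for some nonnegative $D(x)$, together with the RKHS convergence statement \eqref{eq:empirical_conditional_embedding_stochastic_convergence}. Both $D(x) = \| \hat{\bvec{p}}(x) - \bvec{p}(x) \|_{\ell_{2}}$ and $D(x) = \| \hat{\bvec{p}}(x) - \bvec{p}(x) \|_{\ell_{1}}$ satisfy exactly such a bound with $x$-independent constants, so the probabilistic chain of implications in the proof of \cref{thm:pointwise_uniform_convergence} applies verbatim. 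Everything else is the routine translation already carried out there, so no new analytic work is required beyond establishing the two displayed bounds.
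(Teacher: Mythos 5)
Your proposal is correct and follows essentially the same route as the paper: both identify $\hat{\bvec{p}}(x)$ and $\bvec{p}(x)$ with the empirical and true conditional mean embeddings via \cref{thm:probability_is_embedding} and \cref{thm:empirical_probability_is_embedding}, obtain the exact identity $\| \hat{\bvec{p}}(x) - \bvec{p}(x) \|_{\ell_{2}} = \| \hat{\mu}_{Y | X = x} - \mu_{Y | X = x} \|_{\mathcal{H}_{\delta}}$ from \eqref{eq:RKHS_norm_is_l2_norm}, and then feed an $x$-independent $\gamma$ into \cref{thm:pointwise_uniform_convergence}, whose argument indeed only needs a bound of the form $D(x) \leq \gamma \| \hat{\mu}_{Y | X = x} - \mu_{Y | X = x} \|_{\mathcal{H}_{\delta}}$ for a nonnegative $D(x)$. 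The one place you deviate is the $\ell_{1}$ step: the paper sums per-coordinate Cauchy--Schwarz bounds \eqref{eq:decision_probability_error_upper_bound} to get $\gamma = m$, whereas you derive $\ell_{1}$ from the $\ell_{2}$ identity via $\| \bvec{v} \|_{\ell_{1}} \leq \sqrt{m}\, \| \bvec{v} \|_{\ell_{2}}$, giving the tighter constant $\gamma = \sqrt{m}$; since constants are absorbed into the $O_{p}$ rate, this changes nothing in the conclusion, but it is a marginally sharper bound obtained with one fewer appeal to Cauchy--Schwarz.
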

		
		\begin{proof}
			For convergence in $\ell_{1}$, we simply extend \cref{thm:probability_convergence}, which proved that each entry of $\hat{\bvec{p}}(x)$ converges pointwise uniformly at a rate of $O_{p}((n \lambda)^{-\frac{1}{2}} + \lambda^{\frac{1}{2}})$ to the corresponding entry of $\bvec{p}(x)$. Since each entry converges stochastically at a rate of $O_{p}((n \lambda)^{-\frac{1}{2}} + \lambda^{\frac{1}{2}})$, then so does the entire vector. More formally, from \eqref{eq:decision_probability_error_upper_bound} and \eqref{eq:indicator_RKHS_norm}, the $\ell_{1}$-norm of the difference can be bounded,
			\begin{equation}
			\begin{aligned}
				{\lVert \hat{\bvec{p}}(x) - \bvec{p}(x) \rVert}_{\ell_{1}} :=& \sum_{c = 1}^{m} | \hat{p}_{c}(x) - p_{c}(x) | \\
				\leq& \sum_{c = 1}^{m} \big\| \hat{\mu}_{Y | X = x} - \mu_{Y | X = x} \big\|_{\mathcal{H}_{\delta}} \\
				=& m \big\| \hat{\mu}_{Y | X = x} - \mu_{Y | X = x} \big\|_{\mathcal{H}_{\delta}}.
			\end{aligned}
			\end{equation}
			
			Therefore, by \cref{thm:pointwise_uniform_convergence} with $\gamma(x) = m$ independent of $x \in \mathcal{X}$, we have uniform convergence in $\ell_{1}$ where we replace all instances of $| \hat{f}(x) - f(x) |$ in the proof of \cref{thm:pointwise_uniform_convergence} with ${\lVert \hat{\bvec{p}}(x) - \bvec{p}(x) \rVert}_{\ell_{1}}$.
			
			For convergence in $\ell_{2}$, we show that the $\ell_{2}$-norm of the difference between the true and empirical decision probability vector functions is the same as the \gls{RKHS} norm of the difference between the true and empirical conditional mean embedding, which converges to zero at a stochastic rate of at least $O_{p}((n \lambda)^{-\frac{1}{2}} + \lambda^{\frac{1}{2}})$ for all $x \in \mathcal{X}$ and $c \in \mathcal{Y} = \mathbb{N}_{m}$ by \eqref{eq:empirical_conditional_embedding_stochastic_convergence}. To this end, we use \cref{thm:probability_is_embedding} and \cref{thm:empirical_probability_is_embedding} and write 
			\begin{equation}
			\begin{aligned}
				{\lVert \hat{\bvec{p}}(x)  - \bvec{p}(x) \rVert}_{\ell_{2}} &= {\lVert \{ \hat{p}_{c}(x) \}_{c = 1}^{m} - \{ p_{c}(x) \}_{c = 1}^{m} \rVert}_{\ell_{2}} \\
				&= {\lVert \{ \hat{p}_{c}(x) - p_{c}(x) \}_{c = 1}^{m} \rVert}_{\ell_{2}} \\
				&= {\lVert \{ \hat{\mu}_{Y | X = x}(c) - \mu_{Y | X = x}(c)\}_{c = 1}^{m} \rVert}_{\ell_{2}} \\
				&= \| \hat{\bm{\mu}}_{Y | X = x} - \bm{\mu}_{Y | X = x}  \|_{\ell_{2}} \\
				&= {\lVert \hat{\mu}_{Y | X = x} - \mu_{Y | X = x}\rVert}_{\mathcal{H}_{\delta}},
			\end{aligned}
			\end{equation}
			where the last equality comes from \eqref{eq:RKHS_norm_is_l2_norm} and the fact that the empirical and true conditional mean embeddings are bounded functions in the \gls{RKHS}. Again, by \cref{thm:pointwise_uniform_convergence} with $\gamma(x) = 1$ independent of $x \in \mathcal{X}$, we have uniform convergence in $\ell_{2}$.
			\qed
		\end{proof}

	\subsection{Information Entropy of \glspl{MCE}}
	\label{app:information_entropy}
	
		The \gls{MCE} provides decision probabilities instead of just a single label prediction. Such a probabilistic classifier allows us to quantify the uncertainty of its predictions for any given example $x \in \mathcal{X}$ through the information entropy. This is ideal for detecting the decision boundaries of the classifier and areas of low data density.
		
		We present two main approaches for inferring the information entropy from the classifier. Specifically, we infer estimates for the information entropy of the possible labels $Y$ for a given example $X = x$,
		\begin{equation}
			h(x) := \mathbb{H}[Y | X = x] = - \sum_{c = 1}^{m} p_{c}(x) \log{p_{c}(x)}.
		\end{equation}
		
		The first approach is straight forward, which involves simply computing the information entropy with the clip normalized probabilities \eqref{eq:empirical_decision_probability_clip_normalized}, at the query point $x \in \mathcal{X}$,
		\begin{equation}
			\tilde{h}(x) := - \sum_{c = 1}^{m} \tilde{p}_{c}(x) \log{\tilde{p}_{c}(x)}.
		\label{eq:entropy_clip_normalize}
		\end{equation}
		
		We call \eqref{eq:entropy_clip_normalize} the \textit{clip-normalized information entropy}. Since $\tilde{p}_{c}(x)$ converges pointwise to $p_{c}(x)$ with increasing data, $\tilde{h}(x)$ also converges pointwise to $h(x)$.
		
		Just as decision probabilities can be expressed as an expectation of indicator functions, information entropy can be expressed as expected information,
		\begin{equation}
		\begin{aligned}
			\mathbb{H}[Y | X = x] &= - \sum_{c = 1}^{m} \mathbb{P}[Y = c| X = x] \log{\mathbb{P}[Y = c | X = x]} \\
			&= \mathbb{E}[- \log{\mathbb{P}[Y | X = x]} | X = x] \\
			&= \mathbb{E}[u_{x}(Y) | X = x],
		\end{aligned}
		\end{equation}
		where $u_{x}(y) := - \log{\mathbb{P}[Y = y | X = x]}$ is the \textit{information} (in nats) we would gain when we discover that example $x$ actually has label $y$. Note that while $\mathbb{P}[Y = c | X = x]$ is a constant, we employ the shorthand notation $\mathbb{P}[Y| X = x]$ for the random variable $g(Y)$ where $g(y) := \mathbb{P}[Y = y | X = x]$. If $u_{x} : \mathbb{N}_{m} \to \mathbb{R}$ is in the \gls{RKHS} $\mathcal{H}_{\delta}$, then we know that this expectation can also be approximated by $\langle \hat{\mu}_{Y | X = x}, u_{x} \rangle$. This is the basis of our second approach.
		
		Assuming that $\mathbb{P}[Y = y | X = x]$ is never exactly zero for all labels $y \in \mathcal{Y}$ and examples $x \in \mathcal{X}$, then $u_{x}(y)$ is bounded on its discrete domain $\mathbb{N}_{m}$. We can thus write $u_{x} = \sum_{c = 1}^{m} - \log{\mathbb{P}[Y = c | X = x]} \delta(c, \cdot)$ which shows that $u_{x}$ is in the span of the canonical kernel features and is thus in the \gls{RKHS}. Hence, similar to the case with decision probabilities, with $u_{x} \in \mathcal{H}_{\delta}$ and $\bvec{u}_{x} := \{u_{x}(y_{i})\}_{i = 1}^{n}$ we let $g = u_{x}$ in \eqref{eq:empirical_conditional_expectation} and estimate $h(x)$ by
		\begin{equation}
			\langle \hat{\mu}_{Y | X = x}, u_{x} \rangle = \bvec{u}_{x}^{T} (K + n \lambda I)^{-1} \bvec{k}(x).
		\end{equation}
		
		Unfortunately, $u_{x}$ is not known exactly, since $\mathbb{P}[Y = y | X = x]$ is not known exactly. Instead, since $\hat{p}_{c}(x)$ is a consistent estimate for $\mathbb{P}[Y = c | X = x]$ by \cref{thm:probability_convergence}, we propose to replace $u_{x}(y)$ with the information of $\hat{p}_{y}(x)$. However, we cannot simply take the log of this estimator, as $\hat{p}_{y}(x)$ may produce non-positive estimates to the prediction probabilities. The straight forward way to mitigate this problem is to clip $\hat{p}_{y}(x)$ from the bottom by a very small number, before taking the log. However, experiments show that this produces non-smooth estimates over $\mathcal{X}$ and the degree of smoothness varies drastically between different choices of that small number. Instead, in virtue of the fact that $\lim_{p \to 0} - p \log{p} = 0$ even though $\lim_{p \to 0} - \log{p} = \infty$, we simply define the information estimate $\hat{u}_{x}(y)$ as zero if the empirical decision probability is non-positive,
		\begin{equation}
			\hat{u}_{x}(y) := \begin{cases}
			- \log{\hat{p}_{y}(x)} & \mathrm{if } \quad \hat{p}_{y}(x) > 0, \\
			0 & \mathrm{otherwise}. \end{cases}
		\label{eq:empirical_information}
		\end{equation}
		
		It remains to show that $\hat{u}_{x} \in \mathcal{H}_{\delta}$. Indeed, the identity $\hat{u}_{x} = \sum_{c = 1}^{m} \hat{u}_{x}(c) \delta(c, \cdot)$ holds and thus $\hat{u}_{x}$ is in the span of the kernel canonical features. We then arrive at the following estimate for $h(x)$,
		\begin{equation}
			\hat{h}(x) := \langle \hat{\mu}_{Y | X = x}, \hat{u}_{x} \rangle = \hat{\bvec{u}}_{x}^{T} (K + n \lambda I)^{-1} \bvec{k}(x),
		\label{eq:empirical_information_entropy}
		\end{equation}
		where $\hat{\bvec{u}}_{x} := \{\hat{u}_{x}(y_{i})\}_{i = 1}^{n}$. Similar to the case with decision probabilities \eqref{eq:decision_probability}, the information entropy estimate \eqref{eq:empirical_information_entropy} is not guaranteed to be non-negative. However, in practice these negative values are close to zero. Furthermore, negative estimated information entropy implies that the model is very confident about its prediction, and it suffices to simply clip the entropy at zero if strict information entropy is required. 
		
		Since this estimator is now based on the inner product between the empirical conditional mean embedding and another empirically estimate function, instead of between the empirical conditional mean embedding and a known function like the decision probability estimate, it is not immediately clear that such an estimator converges. Nevertheless, intuition tells us that the inner product between two converging quantities should converge. We proceed to show that this intuition is correct.
		
		\begin{theorem}[Convergence of Empirical Information Entropy Function]
			\label{thm:entropy_convergence}
			Assuming that $k(x, \cdot)$ is in the image of $C_{XX}$, the empirical information entropy function $\hat{h} : \mathcal{X} \to \mathbb{R}$ \eqref{eq:empirical_information_entropy} converges pointwise to the true information entropy function $h : \mathcal{X} \to [0, \infty)$ at a stochastic rate of at least $O_{p}((n \lambda)^{-\frac{1}{2}} + \lambda^{\frac{1}{2}})$.
		\end{theorem}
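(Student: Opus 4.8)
The plan is to control $|\hat{h}(x) - h(x)|$ at a fixed $x \in \mathcal{X}$ by exploiting the bilinearity of the inner product on $\mathcal{H}_{\delta}$, which lets us separate the error into one part governed by the embedding estimation error and another governed by the information estimation error. Writing $\hat{h}(x) = \langle \hat{\mu}_{Y | X = x}, \hat{u}_{x} \rangle$ and $h(x) = \langle \mu_{Y | X = x}, u_{x} \rangle$, I would insert the cross term $\langle \hat{\mu}_{Y | X = x}, u_{x} \rangle$ to obtain
\begin{equation}
\hat{h}(x) - h(x) = \langle \hat{\mu}_{Y | X = x}, \hat{u}_{x} - u_{x} \rangle + \langle \hat{\mu}_{Y | X = x} - \mu_{Y | X = x}, u_{x} \rangle,
\end{equation}
and then apply the triangle and Cauchy--Schwarz inequalities in $\mathcal{H}_{\delta}$ to reach
\begin{equation}
| \hat{h}(x) - h(x) | \leq \| \hat{\mu}_{Y | X = x} \|_{\mathcal{H}_{\delta}} \, \| \hat{u}_{x} - u_{x} \|_{\mathcal{H}_{\delta}} + \| \hat{\mu}_{Y | X = x} - \mu_{Y | X = x} \|_{\mathcal{H}_{\delta}} \, \| u_{x} \|_{\mathcal{H}_{\delta}} .
\end{equation}

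The second summand is the easy one. Since $\mathbb{P}[Y = c \mid X = x]$ is assumed bounded away from zero, $u_{x}(c) = -\log p_{c}(x)$ is bounded on $\mathbb{N}_{m}$, so $\| u_{x} \|_{\mathcal{H}_{\delta}}$ is a finite constant. The factor $\| \hat{\mu}_{Y | X = x} - \mu_{Y | X = x} \|_{\mathcal{H}_{\delta}}$ converges stochastically at rate $O_{p}((n \lambda)^{-\frac{1}{2}} + \lambda^{\frac{1}{2}})$ by \eqref{eq:empirical_conditional_embedding_stochastic_convergence}, so this whole term is $O_{p}((n \lambda)^{-\frac{1}{2}} + \lambda^{\frac{1}{2}})$. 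Moreover $\| \hat{\mu}_{Y | X = x} \|_{\mathcal{H}_{\delta}} \leq \| \hat{\mu}_{Y | X = x} - \mu_{Y | X = x} \|_{\mathcal{H}_{\delta}} + \| \mu_{Y | X = x} \|_{\mathcal{H}_{\delta}}$ is bounded in probability, since the first summand vanishes and $\| \mu_{Y | X = x} \|_{\mathcal{H}_{\delta}}$ is finite. Hence it remains only to show that $\| \hat{u}_{x} - u_{x} \|_{\mathcal{H}_{\delta}}$ converges at the same rate, which would make the first summand a product of an $O_{p}(1)$ factor and an $O_{p}((n \lambda)^{-\frac{1}{2}} + \lambda^{\frac{1}{2}})$ factor.

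The key step is therefore bounding $\| \hat{u}_{x} - u_{x} \|_{\mathcal{H}_{\delta}}$. Using the isomorphism \eqref{eq:RKHS_norm_is_l2_norm}, this RKHS norm equals the $\ell_{2}$ norm of the class-indexed vector $\{\hat{u}_{x}(c) - u_{x}(c)\}_{c = 1}^{m}$. I would then transfer the probability-vector convergence from \cref{thm:probability_vector_convergence} through the map $p \mapsto -\log p$. Set $p_{\min}(x) := \min_{c} p_{c}(x) > 0$ and condition on the event, which has probability tending to one by \cref{thm:probability_convergence}, that every $\hat{p}_{c}(x)$ lies within $\tfrac{1}{2} p_{\min}(x)$ of $p_{c}(x)$; on this event every $\hat{p}_{c}(x) \geq \tfrac{1}{2} p_{\min}(x) > 0$, so the clipping case in the definition of $\hat{u}_{x}$ is inactive and $-\log$ is Lipschitz on $[\tfrac{1}{2} p_{\min}(x), \infty)$ with constant $2 / p_{\min}(x)$. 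This gives the pointwise bound $| \hat{u}_{x}(c) - u_{x}(c) | \leq \tfrac{2}{p_{\min}(x)} | \hat{p}_{c}(x) - p_{c}(x) |$ for each $c$, and after squaring and summing,
\begin{equation}
\| \hat{u}_{x} - u_{x} \|_{\mathcal{H}_{\delta}} \leq \frac{2}{p_{\min}(x)} \, \| \hat{\bvec{p}}(x) - \bvec{p}(x) \|_{\ell_{2}} = O_{p}\!\big((n \lambda)^{-\tfrac{1}{2}} + \lambda^{\tfrac{1}{2}}\big),
\end{equation}
by \cref{thm:probability_vector_convergence}. Combining the two bounded summands yields pointwise convergence of $\hat{h}$ to $h$ at rate $O_{p}((n \lambda)^{-\frac{1}{2}} + \lambda^{\frac{1}{2}})$.

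The main obstacle, and the reason \cref{thm:pointwise_uniform_convergence} cannot be invoked directly as for the decision probabilities, is that $\hat{u}_{x}$ is itself a \emph{random, estimated} function rather than a fixed target: we are taking an inner product of two converging quantities, which is what forces the cross-term split and the boundedness-in-probability argument for $\| \hat{\mu}_{Y | X = x} \|_{\mathcal{H}_{\delta}}$. The delicate point within that is that $-\log$ is not globally Lipschitz, so the strict positivity assumption on $p_{c}(x)$ must be used to pin down a stable local Lipschitz constant on a high-probability event, simultaneously ensuring the clip branch of \eqref{eq:empirical_information} is asymptotically irrelevant.
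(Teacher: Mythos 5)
Your proposal is correct, and its skeleton coincides with the paper's proof: the same insertion of the cross term $\langle \hat{\mu}_{Y | X = x}, u_{x} \rangle$, the same triangle--plus--Cauchy--Schwarz bound, and the same reduction of the problem to controlling $\| \hat{u}_{x} - u_{x} \|_{\mathcal{H}_{\delta}}$ via the $\ell_{2}$ identification \eqref{eq:RKHS_norm_is_l2_norm}. Where you genuinely diverge is in the two delicate sub-steps, and in both cases your treatment is the more careful one. First, for the log transfer the paper invokes the concavity inequality $\log a - \log b \leq \tfrac{1}{b}(a - b)$ to claim $| \hat{u}_{x}(c) - u_{x}(c) | \leq \tfrac{1}{|\mu_{Y|X=x}(c)|} | \hat{\mu}_{Y|X=x}(c) - \mu_{Y|X=x}(c) |$; as stated this is one-sided and fails when $\hat{\mu}_{Y|X=x}(c) < \mu_{Y|X=x}(c)$, where the mean value theorem forces the (possibly much larger) factor $1/\hat{\mu}_{Y|X=x}(c)$ instead. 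Your high-probability event $\{ |\hat{p}_{c}(x) - p_{c}(x)| \leq \tfrac{1}{2} p_{\min}(x) \; \forall c \}$ pins the empirical probabilities away from zero, yields the stable local Lipschitz constant $2/p_{\min}(x)$ valid in both directions, and simultaneously deactivates the clip branch of \eqref{eq:empirical_information} --- which the paper dismisses only informally with ``the condition $\hat{p}_{c}(x) > 0$ holds for large $n$.'' Second, the paper concludes by feeding $\gamma(x) = \alpha_{x} \| \hat{\mu}_{Y|X=x} \|_{\mathcal{H}_{\delta}} + \| u_{x} \|_{\mathcal{H}_{\delta}}$ into \cref{thm:pointwise_uniform_convergence}, but that theorem requires a deterministic $\gamma(x)$, whereas this one depends on the random quantity $\| \hat{\mu}_{Y|X=x} \|_{\mathcal{H}_{\delta}}$; your direct assembly --- an $O_{p}(1)$ factor (obtained from $\| \hat{\mu}_{Y|X=x} \|_{\mathcal{H}_{\delta}} \leq \| \hat{\mu}_{Y|X=x} - \mu_{Y|X=x} \|_{\mathcal{H}_{\delta}} + \| \mu_{Y|X=x} \|_{\mathcal{H}_{\delta}}$) times an $O_{p}((n\lambda)^{-\frac{1}{2}} + \lambda^{\frac{1}{2}})$ factor --- sidesteps this and is the standard way to make the ``inner product of two converging quantities'' argument rigorous. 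The only cost of your route is the extra detour through \cref{thm:probability_vector_convergence}, which is equivalent to the paper's direct use of the embedding error by \cref{thm:probability_is_embedding,thm:empirical_probability_is_embedding}.
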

		
		\begin{proof}
			Since we are interested in the asymptotic properties of our estimators when $n \to \infty$, and we have proved that the empirical decision probabilities converges to the true probabilities (\cref{thm:probability_convergence}), the condition $\hat{p}_{c}(x) > 0$ holds for large $n$ such that we simply have $\hat{u}_{x}(c) = - \log{\hat{p}_{c}(x)}$. That is, the effects of clipping for the information estimate \eqref{eq:empirical_information} vanishes.
			
			Consider the pointwise absolute difference between the empirical and true information entropy,
			\begin{equation}
			\begin{aligned}
				| \hat{h}(x) - h(x) | &= | \langle \hat{\mu}_{Y | X = x}, \hat{u}_{x} \rangle_{\mathcal{H}_{\delta}} - \langle \mu_{Y | X = x}, u_{x} \rangle_{\mathcal{H}_{\delta}} | \\
				&= | \langle \hat{\mu}_{Y | X = x}, \hat{u}_{x} \rangle_{\mathcal{H}_{\delta}} - \langle \hat{\mu}_{Y | X = x}, u_{x} \rangle_{\mathcal{H}_{\delta}} \\
				& \qquad + \langle \hat{\mu}_{Y | X = x}, u_{x} \rangle_{\mathcal{H}_{\delta}} - \langle \mu_{Y | X = x}, u_{x} \rangle_{\mathcal{H}_{\delta}} | \\
				&\leq | \langle \hat{\mu}_{Y | X = x}, \hat{u}_{x} \rangle_{\mathcal{H}_{\delta}} - \langle \hat{\mu}_{Y | X = x}, u_{x} \rangle_{\mathcal{H}_{\delta}} | \\
				& \qquad + | \langle \hat{\mu}_{Y | X = x}, u_{x} \rangle_{\mathcal{H}_{\delta}} - \langle \mu_{Y | X = x}, u_{x} \rangle_{\mathcal{H}_{\delta}} | \\
				&= | \langle \hat{\mu}_{Y | X = x}, \hat{u}_{x} - u_{x} \rangle_{\mathcal{H}_{\delta}} | + | \langle \hat{\mu}_{Y | X = x} - \mu_{Y | X = x}, u_{x} \rangle_{\mathcal{H}_{\delta}} | \\
				&\leq \| \hat{\mu}_{Y | X = x} \|_{\mathcal{H}_{\delta}} \| \hat{u}_{x} - u_{x} \|_{\mathcal{H}_{\delta}} + \| \hat{\mu}_{Y | X = x} - \mu_{Y | X = x} \|_{\mathcal{H}_{\delta}} \| u_{x} \|_{\mathcal{H}_{\delta}},
			\label{eq:information_entropy_bound}
			\end{aligned}
			\end{equation}
			where the we used the triangle inequality and Cauchy Schwarz inequality in a Hilbert space respectively. Since the kernel $l = \delta$ is bounded, so is $\hat{\mu}_{Y | X = x}(c) = \sum_{i = 1}^{n} w_{i} \delta(y_{i}, c)$ for some embedding weights $w_{i}$ and all $c \in \mathbb{N}_{m}$, and thus its \gls{RKHS} norm is finite for all $n \in \mathbb{N}_{n}$. Similarly, assuming that $p_{c}(x)$ is never exactly zero, $u_{x}(c)$ is also finite for all $c \in \mathbb{N}_{m}$ and thus so is its \gls{RKHS} norm. We already know that $\| \hat{\mu}_{Y | X = x} - \mu_{Y | X = x} \|_{\mathcal{H}_{\delta}}$ stochastically converges to zero at the rate $O_{p}((n \lambda)^{-\frac{1}{2}} + \lambda^{\frac{1}{2}})$ \eqref{eq:empirical_conditional_embedding_stochastic_convergence}. Thus, it remains to bound $\| \hat{u}_{x} - u_{x} \|_{\mathcal{H}_{\delta}}$ by a multiple of $\| \hat{\mu}_{Y | X = x} - \mu_{Y | X = x} \|_{\mathcal{H}_{\delta}}$.
			
			To this end, we first use \cref{thm:probability_is_embedding} and \cref{thm:empirical_probability_is_embedding} and to express the theoretical and empirical information as the negative log of the embedding, so that it is explicitly written as a function of $c \in \mathcal{Y}$ in $\mathcal{H}_{\delta}$ indexed by $x \in \mathcal{X}$,
			\begin{equation}
				\begin{aligned}
				u_{x}(c) &= - \log{p_{c}(x)} = -\log{\mu_{Y | X = x}(c)}, \\
				\hat{u}_{x}(c) &= - \log{\hat{p}_{c}(x)} = -\log{\hat{\mu}_{Y | X = x}(c)}.
			\end{aligned}
			\end{equation}
			
			Since $\log$ is a concave function, we have the property that $\log{a} - \log{b} \leq \frac{1}{b} (a - b)$. This allows us to bound $| \hat{u}_{x}(c) - u_{x}(c) |$ by $| \hat{\mu}_{Y | X = x}(c) - \mu_{Y | X = x}(c) |$ for all $c \in \mathbb{N}_{m}$,
			\begin{equation}
			\begin{aligned}
				| \hat{u}_{x}(c) - u_{x}(c) | &= | \log{\hat{\mu}_{Y | X = x}(c)} - \log{\mu_{Y | X = x}(c)} | \\
				&\leq \frac{1}{| \mu_{Y | X = x}(c) |} | \hat{\mu}_{Y | X = x}(c) - \mu_{Y | X = x}(c) | \\
				&\leq \alpha_{x} | \hat{\mu}_{Y | X = x}(c) - \mu_{Y | X = x}(c) |,
			\end{aligned}
			\end{equation}
			where we define $\alpha_{x} := \max_{c \in \mathbb{N}_{m}} \frac{1}{| \mu_{Y | X = x}(c) |}$, which is well defined as the conditional mean embedding is bounded. Since the \gls{RKHS} norm of bounded functions in $\mathcal{H}_{\delta}$ is simply the $\ell_{2}$-norm of their vector representations \eqref{eq:RKHS_norm_is_l2_norm}, we have
			\begin{equation}
			\begin{aligned}
				\| \hat{u}_{x} - u_{x} \|_{\mathcal{H}_{\delta}}^{2} &= \| \hat{\bvec{u}}_{x} - \bvec{u}_{x} \|_{\ell_{2}}^{2} \\
				&= \sum_{c = 1}^{m} | \hat{u}_{x}(c) - u_{x}(c) |^{2} \\
				&\leq \sum_{c = 1}^{m} \alpha_{x}^{2} | \hat{\mu}_{Y | X = x}(c) - \mu_{Y | X = x}(c) |^{2} \\
				&\leq \alpha_{x}^{2} \sum_{c = 1}^{m} | \hat{\mu}_{Y | X = x}(c) - \mu_{Y | X = x}(c) |^{2} \\
				&\leq \alpha_{x}^{2} \| \hat{\bm{\mu}}_{Y | X = x} - \bm{\mu}_{Y | X = x}  \|_{\ell_{2}}^{2} \\
				&\leq \alpha_{x}^{2} \| \hat{\mu}_{Y | X = x} - \mu_{Y | X = x} \|_{\mathcal{H}_{\delta}}^{2}.
			\end{aligned}
			\end{equation}
			
			Therefore, $\| \hat{u}_{x} - u_{x} \|_{\mathcal{H}_{\delta}} \leq \alpha_{x} \| \hat{\mu}_{Y | X = x} - \mu_{Y | X = x} \|_{\mathcal{H}_{\delta}}$, and \eqref{eq:information_entropy_bound} becomes
			\begin{equation}
			\begin{aligned}
				| \hat{h}(x) - h(x) | &\leq \| \hat{\mu}_{Y | X = x} \|_{\mathcal{H}_{\delta}} \| \hat{u}_{x} - u_{x} \|_{\mathcal{H}_{\delta}} + \| \hat{\mu}_{Y | X = x} - \mu_{Y | X = x} \|_{\mathcal{H}_{\delta}} \| u_{x} \|_{\mathcal{H}_{\delta}} \\
				&= \alpha_{x} \| \hat{\mu}_{Y | X = x} \|_{\mathcal{H}_{\delta}} \| \hat{\mu}_{Y | X = x} - \mu_{Y | X = x} \|_{\mathcal{H}_{\delta}} \\
				& \qquad + \| \hat{\mu}_{Y | X = x} - \mu_{Y | X = x} \|_{\mathcal{H}_{\delta}} \| u_{x} \|_{\mathcal{H}_{\delta}} \\
				&= ( \alpha_{x} \| \hat{\mu}_{Y | X = x} \|_{\mathcal{H}_{\delta}} + \| u_{x} \|_{\mathcal{H}_{\delta}} ) \| \hat{\mu}_{Y | X = x} - \mu_{Y | X = x} \|_{\mathcal{H}_{\delta}}.
			\end{aligned}
			\end{equation}
			
			Hence, with $\gamma(x) = \alpha_{x} \| \hat{\mu}_{Y | X = x} \|_{\mathcal{H}_{\delta}} + \| u_{x} \|_{\mathcal{H}_{\delta}}$, \cref{thm:pointwise_uniform_convergence} implies that $\hat{h}$ converges pointwise to $h$ at a stochastic rate of at least $O_{p}((n \lambda)^{-\frac{1}{2}} + \lambda^{\frac{1}{2}})$.
			\qed
		\end{proof}
	
	\newpage
	\section{Learning Theoretic Bounds}
	\label{app:learning_theoretic_bounds}
	
		In this section we derive \glspl{RCB} for \glspl{MCE}, and show that it can be used in conjunction with cross entropy loss to bound the expected risk with high probability.
		
		\subsection{Rademacher Complexity Bounds}
		\label{app:rademacher_complexity_theorems}
			
			Suppose a set of training data $\{x_{i}, y_{i}\}_{i = 1}^{n}$ is drawn from $\mathbb{P}_{X Y}$ in an \textit{iid} fashion. We denote the one hot encoded target labels of $\{y_{i}\}_{i = 1}^{n}$ by $\bvec{y}_{i} := \{\mathbb{1}_{c}(y_{i})\}_{c = 1}^{m} \in \{0, 1\}^{m}$ and $\bvec{Y} := \begin{bmatrix} \bvec{y}_{1} & \bvec{y}_{2} & \cdots & \bvec{y}_{n} \end{bmatrix}^{T} \in \{0, 1\}^{n \times m}$. Similarly, let $\bvec{y} \in \{0, 1\}^{m}$ denote the one hot encoded target labels for a generic label $y \in \mathcal{Y}$. Let $k_{\theta} : \mathcal{X} \times \mathcal{X} \to [0, \infty)$ be a family of positive definite kernels indexed by $\theta \in \Theta$. As before, we define the shorthand notation for the gram matrices $K_{\theta} := \{k_{\theta}(x_{i}, x_{j}) : i \in \mathbb{N}_{n}, j \in \mathbb{N}_{n}\}$ and $\bvec{k}_{\theta}(x) := \{k_{\theta}(x_{i}, x) : i \in \mathbb{N}_{n}\}$, and $\lambda$ denotes the regularization hyperparameter of the conditional mean embedding \eqref{eq:empirical_conditional_embedding}. The \gls{MCE} has a predictor form $\hat{\bvec{p}}(x) = \bvec{f}_{\theta, \lambda}(x)$ \eqref{eq:empirical_decision_probability_vector} defined by
			\begin{equation}
				\bvec{f}_{\theta, \lambda}(x) := \bvec{Y}^{T} (K_{\theta} + n \lambda I)^{-1} \bvec{k}_{\theta}(x),
			\label{eq:predictor}
			\end{equation}
			where each entry of the predictor $\bvec{f}_{\theta, \lambda}(x)$ is the decision probability estimate for $p_{c}(x)$. This defines the function class of the predictor over the kernel family and a set of regularization hyperparameters for any set of training observations $\{x_{i}, y_{i}\}_{i = 1}^{n}$,
			\begin{equation}
				F_{n}(\Theta, \Lambda) := \{ \bvec{f}_{\theta, \lambda}(x) : \theta \in \Theta, \lambda \in \Lambda \}.
			\label{eq:predictor_class}
			\end{equation}
			
			The predictor form \eqref{eq:predictor} is linear in the reproducing kernel Hilbert space $\mathcal{H}_{k_{\theta}}$ induced by $k_{\theta}$ in the sense that
			\begin{equation}
			\begin{aligned}
				\bvec{f}_{\theta, \lambda}(x) &:= W_{\theta, \lambda}^{T} \phi_{\theta}(x), \\
				W_{\theta, \lambda} &:= \Phi_{\theta} (K_{\theta} + n \lambda I)^{-1} \bvec{Y},
			\end{aligned}
			\label{eq:linear_predictor}
			\end{equation}
			where we decompose $\bvec{k}_{\theta}(x) = \Phi_{\theta}^{T} \phi_{\theta}(x)$ by the reproducing property. By \cref{thm:empirical_probability_is_embedding}, $\bvec{f}_{\theta, \lambda}(x) = \hat{\bvec{p}}_{\theta, \lambda}(x) = \hat{\mu}^{(\theta, \lambda)}_{Y | X = x} = \hat{\mathcal{U}}^{(\theta, \lambda)}_{Y | X} \phi_{\theta}(x)$. Therefore, we have that $\hat{\mathcal{U}}^{(\theta, \lambda)}_{Y | X} \equiv W_{\theta, \lambda}^{T}$. Throughout this paper, inner products are defined in the Hilbert-Schmidt sense, which induces the Hilbert-Schmidt norm $\| \cdot \|_{HS}$ and generalises the Frobenius inner product with induced norm $\| \cdot \|_{\mathrm{tr}}$ for finite dimensional operators. Nevertheless, while they refer to the same quantity, we will use the standard notations $ \| \hat{\mathcal{U}}^{\theta, \lambda}_{Y | X} \|_{HS}$ as per the literature in Hilbert space embeddings and $\| W_{\theta, \lambda} \|_{\mathrm{tr}}$ as per the literature for linear classifiers.
			
			\begin{theorem}[\gls{MCE} Rademacher Complexity Bound]
				\label{thm:rademacher_complexity_bound}
				Suppose that the trace norm $\| W_{\theta, \lambda} \|_{\mathrm{tr}} \leq \rho$ is bounded for all $\theta \in \Theta, \lambda \in \Lambda$. Further suppose that the canonical feature map is bounded in \gls{RKHS} norm $\| \phi_{\theta}(x) \|_{\mathcal{H}_{k_{\theta}}}^{2} = k_{\theta}(x, x) \leq \alpha^{2}$, $\alpha > 0$, for all $x \in \mathcal{X}, \theta \in \Theta$. For any set of training observations $\{x_{i}, y_{i}\}_{i = 1}^{n}$, the Rademacher complexity of the class of \glspl{MCE} $F_{n}(\Theta, \Lambda)$ \eqref{eq:predictor_class} defined over $\theta \in \Theta, \lambda \in \Lambda$ is bounded by
				\begin{equation}
					\mathcal{R}_{n}(F_{n}(\Theta, \Lambda)) \leq 2 \alpha \rho.
				\label{eq:rademacher_complexity_bound}
				\end{equation}
			\end{theorem}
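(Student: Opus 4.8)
The plan is to exploit the linearity of the \gls{MCE} predictor in its feature space. Using the representation $\bvec{f}_{\theta, \lambda}(x) = W_{\theta, \lambda}^{T} \phi_{\theta}(x)$ from \eqref{eq:linear_predictor}, I would first pull the weight operator out of the Rademacher average, writing $\frac{2}{n} \sum_{i = 1}^{n} \sigma_{i} \bvec{f}_{\theta, \lambda}(x_{i}) = W_{\theta, \lambda}^{T} v_{\theta}$, where $v_{\theta} := \frac{2}{n} \sum_{i = 1}^{n} \sigma_{i} \phi_{\theta}(x_{i}) \in \mathcal{H}_{k_{\theta}}$. The norm appearing in the Rademacher complexity is the Euclidean norm on the output space $\mathbb{R}^{m}$, so the task reduces to bounding $\| W_{\theta, \lambda}^{T} v_{\theta} \|_{\ell_{2}}$ uniformly over $\theta \in \Theta$, $\lambda \in \Lambda$, and every realization of the signs $\{\sigma_{i}\}_{i=1}^{n}$.

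The second step is to split this product using the operator inequality $\| W_{\theta, \lambda}^{T} v_{\theta} \|_{\ell_{2}} \leq \| W_{\theta, \lambda}^{T} \|_{\mathrm{op}} \| v_{\theta} \|_{\mathcal{H}_{k_{\theta}}}$ and then control each factor separately. For the weight factor, I would use that the spectral norm is dominated by the trace norm, $\| W_{\theta, \lambda}^{T} \|_{\mathrm{op}} \leq \| W_{\theta, \lambda} \|_{\mathrm{tr}} \leq \rho$, which is exactly the first hypothesis. For the feature factor, rather than a variance argument, I would apply the triangle inequality directly: since $|\sigma_{i}| = 1$ and $\| \phi_{\theta}(x_{i}) \|_{\mathcal{H}_{k_{\theta}}} = \sqrt{k_{\theta}(x_{i}, x_{i})} \leq \alpha$ by the second hypothesis, we get $\| v_{\theta} \|_{\mathcal{H}_{k_{\theta}}} \leq \frac{2}{n} \sum_{i = 1}^{n} \| \phi_{\theta}(x_{i}) \|_{\mathcal{H}_{k_{\theta}}} \leq 2 \alpha$. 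Combining the two factors yields the pointwise estimate $\| W_{\theta, \lambda}^{T} v_{\theta} \|_{\ell_{2}} \leq 2 \alpha \rho$, uniform in $\theta$, $\lambda$, and $\sigma$. Because this holds for every sign pattern, taking the supremum over the hyperparameters and then the expectation over $\{\sigma_{i}\}_{i=1}^{n}$ leaves the constant unchanged, giving \eqref{eq:rademacher_complexity_bound}.

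The delicate point is that the supremum ranges over the kernel hyperparameters $\theta$, so each candidate predictor lives in a different \gls{RKHS} $\mathcal{H}_{k_{\theta}}$ with its own feature map $\phi_{\theta}$; one cannot fix a single feature space and appeal to a standard linear-class Rademacher bound. The two hypotheses are designed precisely to neutralize this, since the trace-norm bound $\rho$ and the diagonal bound $\alpha^{2}$ hold uniformly across all of $\Theta$, so the per-$\theta$ estimate above never depends on which $\theta$ attains the supremum. I expect the main thing to be careful about is the decision to bound $\| v_{\theta} \|_{\mathcal{H}_{k_{\theta}}}$ by the triangle inequality rather than by its expected square $\frac{4}{n^{2}} \sum_{i=1}^{n} k_{\theta}(x_{i}, x_{i})$: the latter would give the sharper $2 \alpha / \sqrt{n}$ for any single fixed $\theta$, but the expectation and the supremum over $\theta$ do not interchange, so only the crude bound survives the uniformity requirement. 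This is exactly why the resulting complexity bound, and hence the complexity term of the downstream risk bound, carries no $1 / \sqrt{n}$ decay.
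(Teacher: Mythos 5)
Your proposal is correct and takes essentially the same route as the paper's proof: both express the predictor linearly as $W_{\theta, \lambda}^{T} \phi_{\theta}(x)$, pull the weight operator out using the fact that the trace norm $\| W_{\theta, \lambda} \|_{\mathrm{tr}} \leq \rho$ dominates the operator norm, and then bound the signed feature sum uniformly over every realization of the Rademacher signs, so that the supremum over $(\theta, \lambda)$ and the final expectation act on a constant. The only difference is cosmetic: you bound $\big\| \sum_{i = 1}^{n} \sigma_{i} \phi_{\theta}(x_{i}) \big\|_{\mathcal{H}_{k_{\theta}}}$ by the triangle inequality, whereas the paper factors it as $\| \Phi_{\theta} \sigma \| \leq \| \Phi_{\theta} \|_{\mathrm{tr}} \| \sigma \|$ with $\| \Phi_{\theta} \|_{\mathrm{tr}} \leq \sqrt{n} \, \alpha$ and $\| \sigma \| = \sqrt{n}$ --- both yield the same intermediate bound $n \alpha$ and the same constant $2 \alpha \rho$, and your closing diagnosis of why no $1/\sqrt{n}$ decay survives (the bound must hold pointwise in $\sigma$ because the supremum over kernel hyperparameters blocks the variance computation) describes the paper's argument exactly.
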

	
			\begin{proof}
				The Rademacher complexity \citep[Definition 2]{bartlett2002rademacher} of the function class $F_{n}(\Theta, \Lambda)$ is 
				\begin{equation}
				\begin{aligned}
					\mathcal{R}_{n}(F_{n}(\Theta, \Lambda)) :=& \mathbb{E}\bigg[\sup_{\theta \in \Theta, \lambda \in \Lambda} \Big\| \frac{2}{n} \sum_{i = 1}^{n} \sigma_{i}\bvec{f}_{\theta, \lambda}(X_{i}) \Big\|\bigg] \\
					=& \frac{2}{n} \mathbb{E}\bigg[\sup_{\theta \in \Theta, \lambda \in \Lambda} \Big\| \sum_{i = 1}^{n} \sigma_{i}\bvec{f}_{\theta, \lambda}(X_{i}) \Big\|\bigg],
				\end{aligned}
				\end{equation}
				where $\sigma_{i}$ are \textit{iid} Rademacher random variables, taking values in $\{-1, 1\}$ with equal probability, and $X_{i}$ are \textit{iid} random variables from the same distribution $\mathbb{P}_{X}$ as our training data. We further define $\bm{\sigma} := \{\sigma_{i}\}_{i = 1}^{n}$.
				
				We first bound the term inside the suprenum using the Cauchy Schwarz inequality,
				\begin{equation}
				\begin{aligned}
					\Big\| \sum_{i = 1}^{n} \sigma_{i}\bvec{f}_{\theta, \lambda}(X_{i}) \Big\| &= \Big\| \sum_{i = 1}^{n} \sigma_{i} W_{\theta, \lambda}^{T} \phi_{\theta}(X_{i}) \Big\| \\
					&= \Big\| W_{\theta, \lambda}^{T} \bm{\Phi}_{\theta} \bm{\sigma} \Big\| \\
					&\leq \| W_{\theta, \lambda} \|_{\mathrm{tr}} \| \| \bm{\Phi}_{\theta} \bm{\sigma} \| \\
					&\leq \| W_{\theta, \lambda} \|_{\mathrm{tr}} \| \| \bm{\Phi}_{\theta}^{T} \|_{\mathrm{tr}} \| \bm{\sigma} \| \\
					&= \| W_{\theta, \lambda} \|_{\mathrm{tr}} \| \| \bm{\Phi}_{\theta} \|_{\mathrm{tr}} \| \bm{\sigma} \|,
				\end{aligned}
				\end{equation}
				where we define the random operator $\bm{\Phi}_{\theta} := \begin{bmatrix} \phi(X_{1}) & \phi(X_{2}) & \cdots & \phi(X_{n}) \end{bmatrix}$. Note that this is distinct from $\Phi_{\theta}$, whose columns are the canonical \gls{RKHS} features at the training observations and is not random. Now, random or not, entries of $\bm{\sigma} := \{\sigma_{i}\}_{i = 1}^{n}$ are either $-1$ or $1$, so its norm is simply $\| \bm{\sigma} \| = \sqrt{n}$. We can then also compute the trace norm of the other random component $\bm{\Phi}_{\theta}$,
				\begin{equation}
				\begin{aligned}
					\| \bm{\Phi}_{\theta} \|_{\mathrm{tr}} :=& \sqrt{\mathrm{trace}(\bm{\Phi}_{\theta}^{T} \bm{\Phi}_{\theta})} \\
					=& \sqrt{\mathrm{trace}(\bvec{K}_{\theta})} \\
					=& \sqrt{\sum_{i = 1}^{n} k_{\theta}(X_{i}, X_{i})} \\
					=& \sqrt{n} \sqrt{ \frac{1}{n} \sum_{i = 1}^{n} k_{\theta}(X_{i}, X_{i})} \\
					\leq& \sqrt{n} \sqrt{ \frac{1}{n} \sum_{i = 1}^{n} \alpha^{2}} \\
					=& \sqrt{n} \alpha,
				\end{aligned}
				\end{equation}
				where the inequality comes from the assertion that $k_{\theta}(x, x) \leq \alpha^{2}$ for all $x \in \mathcal{X}, \theta \in \Theta$. This bounds all the random components in the expectation by a constant, so that later the expectation can vanish. 
				
				Using the assertion that $\| W_{\theta, \lambda} \|_{\mathrm{tr}} \leq \rho$ for all $\theta \in \Theta, \lambda \in \Lambda$, we can now bound the Rademacher complexity,
				\begin{equation}
				\begin{aligned}
					\mathcal{R}_{n}(F_{n}(\Theta, \Lambda)) &= \frac{2}{n} \mathbb{E}\bigg[\sup_{\theta \in \Theta, \lambda \in \Lambda} \Big\| \sum_{i = 1}^{n} \sigma_{i}\bvec{f}_{\theta, \lambda}(X_{i}) \Big\|\bigg] \\
					&\leq \frac{2}{n} \mathbb{E}\bigg[\sup_{\theta \in \Theta, \lambda \in \Lambda} \|  W_{\theta, \lambda} \|_{\mathrm{tr}} \| \| \bm{\Phi}_{\theta} \|_{\mathrm{tr}} \| \bm{\sigma} \| \bigg] \\
					&= \frac{2}{n} \sqrt{n} \mathbb{E}\bigg[\sup_{\theta \in \Theta, \lambda \in \Lambda} \|  W_{\theta, \lambda} \|_{\mathrm{tr}} \| \| \bm{\Phi}_{\theta} \|_{\mathrm{tr}} \bigg] \\
					&\leq \frac{2}{n} \sqrt{n} \sqrt{n} \alpha \mathbb{E}\bigg[\sup_{\theta \in \Theta, \lambda \in \Lambda} \|  W_{\theta, \lambda} \|_{\mathrm{tr}} \bigg] \\
					&\leq 2 \alpha \mathbb{E}\bigg[\sup_{\theta \in \Theta, \lambda \in \Lambda} \|  W_{\theta, \lambda} \|_{\mathrm{tr}} \bigg] \\
					&= 2 \alpha \sup_{\theta \in \Theta, \lambda \in \Lambda} \|  W_{\theta, \lambda} \|_{\mathrm{tr}} \\
					&\leq 2 \alpha \rho.
				\end{aligned}
				\end{equation}
				\qed
			\end{proof}
	
			\Cref{thm:rademacher_complexity_bound} provides a generic Rademacher complexity bound for any type of \gls{MCE} with a bounded positive definite kernel and bounded trace norm. One of the most widely used kernels in practice are the family of stationary kernels. We provide a more specific bound for the case of stationary kernels below.
			
			\begin{corollary}[Rademacher Complexity Bound for Stationary Kernels]
				\label{thm:rademacher_complexity_stationary_kernels_bound}
				Suppose that the trace norm $\| W_{\theta, \lambda} \|_{\mathrm{tr}} \leq \rho$ is bounded for all $\theta \in \Theta, \lambda \in \Lambda$. Suppose that $k_{\theta}$ is a family of positive definite stationary kernels. That is, $k_{\theta} (x, x') = \tilde{k}_{\theta}( \| x - x' \| )$ for some real-valued function $\tilde{k} : [0, \infty) \to [0, \infty)$. Select $\tilde{\theta} \in \Theta$ and define $\Theta(\tilde{\theta})$ such that $k_{\theta}(0, 0) \leq k_{\tilde{\theta}}(0, 0)$ for all $\theta \in \Theta(\tilde{\theta})$. For any $\tilde{\theta} \in \Theta$ and set of training observations $\{x_{i}, y_{i}\}_{i = 1}^{n}$, the Rademacher complexity of the resulting class of \glspl{MCE} $F_{n}(\Theta(\tilde{\theta}), \Lambda)$ defined over $\theta \in \Theta(\tilde{\theta}), \lambda \in \Lambda$ is bounded by
				\begin{equation}
					\mathcal{R}_{n}(F_{n}(\Theta(\tilde{\theta}), \Lambda)) \leq 2 \rho \sqrt{k_{\tilde{\theta}}(0, 0)}.
				\end{equation}
			\end{corollary}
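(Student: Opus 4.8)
The plan is to reduce this corollary directly to the general bound of \cref{thm:rademacher_complexity_bound}, which already states that whenever $\| W_{\theta, \lambda} \|_{\mathrm{tr}} \leq \rho$ and $k_{\theta}(x, x) \leq \alpha^{2}$ hold uniformly over the relevant hyperparameters, the Rademacher complexity is at most $2 \alpha \rho$. The trace-norm hypothesis $\| W_{\theta, \lambda} \|_{\mathrm{tr}} \leq \rho$ is assumed here verbatim, so the only task is to exhibit a valid uniform bound $\alpha^{2}$ on the kernel diagonal over the restricted class $\Theta(\tilde{\theta})$, and to identify it with $k_{\tilde{\theta}}(0, 0)$.

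The key observation I would make first is that stationarity collapses the diagonal: since $k_{\theta}(x, x') = \tilde{k}_{\theta}(\| x - x' \|)$, evaluating at $x' = x$ gives $k_{\theta}(x, x) = \tilde{k}_{\theta}(0) = k_{\theta}(0, 0)$ for every $x \in \mathcal{X}$, so the diagonal is a single constant independent of $x$. Next I would invoke the defining property of $\Theta(\tilde{\theta})$, namely $k_{\theta}(0, 0) \leq k_{\tilde{\theta}}(0, 0)$ for all $\theta \in \Theta(\tilde{\theta})$, to conclude that $k_{\theta}(x, x) = k_{\theta}(0, 0) \leq k_{\tilde{\theta}}(0, 0)$ holds simultaneously for all $x \in \mathcal{X}$ and all $\theta \in \Theta(\tilde{\theta})$. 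This is exactly the uniform boundedness required by \cref{thm:rademacher_complexity_bound}, with the choice $\alpha^{2} := k_{\tilde{\theta}}(0, 0)$, i.e. $\alpha = \sqrt{k_{\tilde{\theta}}(0, 0)}$.

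With both hypotheses of \cref{thm:rademacher_complexity_bound} now verified for the function class $F_{n}(\Theta(\tilde{\theta}), \Lambda)$, I would apply that theorem directly to obtain $\mathcal{R}_{n}(F_{n}(\Theta(\tilde{\theta}), \Lambda)) \leq 2 \alpha \rho = 2 \rho \sqrt{k_{\tilde{\theta}}(0, 0)}$, which is the claimed bound. There is essentially no obstacle here: the entire content is the specialization of the generic $\alpha$ to the stationary setting, where the constancy of the diagonal lets the supremum $\sup_{x \in \mathcal{X}} k_{\theta}(x, x)$ be computed exactly rather than merely bounded, and the restriction to $\Theta(\tilde{\theta})$ then pins it down by the single reference value $k_{\tilde{\theta}}(0, 0)$.
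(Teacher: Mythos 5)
Your proposal is correct and follows exactly the paper's own argument: stationarity forces $k_{\theta}(x,x) = \tilde{k}_{\theta}(\|0\|) = k_{\theta}(0,0)$, the defining property of $\Theta(\tilde{\theta})$ gives the uniform bound $k_{\theta}(x,x) \leq k_{\tilde{\theta}}(0,0)$, and then \cref{thm:rademacher_complexity_bound} is applied with $\alpha^{2} = k_{\tilde{\theta}}(0,0)$. There is no meaningful difference between your route and the paper's.
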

	
			\begin{proof}
				Observe that $k_{\tilde{\theta}}(0, 0)$ is an upper bound for $k_{\theta}(x, x)$ for all $x \in \mathcal{X}$ and $\theta \in \Theta$,
				\begin{equation}
				\begin{aligned}
					k_{\theta}(x, x) = \tilde{k}_{\theta}( \| x - x \| ) = \tilde{k}_{\theta}( \| 0 \| ) = k_{\theta}(0, 0) \leq k_{\tilde{\theta}}(0, 0).
				\end{aligned}
				\end{equation}
				
				We simply choose $\alpha^{2} = k_{\tilde{\theta}}(0, 0)$ in \cref{thm:rademacher_complexity_bound}.
				\qed
			\end{proof}

			\Cref{thm:rademacher_complexity_stationary_kernels_bound} motivates the choice $\alpha^{2}(\theta) = k_{\theta}(0, 0) = \sigma_{f}^{2}$ for stationary radial basis type kernels such as the Gaussian or Mat\'{e}rn kernels, where $\sigma_{f}$ is the sensitivity \citep{rasmussen2006gaussian} of the stationary kernel, which we employ in our learning algorithm when the kernel is stationary.
	
		\subsection{Expected Risk Bounds}
		\label{app:expected_risk_bounds}
		
			In order to quantify the performance of the \gls{MCE}, we specify a loss function $\mathcal{L} : \mathcal{Y} \times \mathcal{A} \to [0, \infty)$, where $\mathcal{L}(y, f(x))$ measures the loss of a decision function $f : \mathcal{X} \to \mathcal{A}$ on a paired example $x \in \mathcal{X}$ and label $y \in \mathcal{Y}$. In the \gls{MCE} context, the decision function is $\bvec{f}_{\theta, \lambda} : \mathcal{X} \to \mathbb{R}^{m}$, with $\mathcal{A} = \mathbb{R}^{m}$ and $\mathcal{Y} = \mathbb{N}_{m}$. The loss function is to capture the desire for $\bvec{y}^{T} \bvec{f}_{\theta, \lambda}(x) = f_{y}^{(\theta, \lambda)}(x)$ to be high for all likely test points $x \in \mathcal{X}$ and $y \in \mathcal{Y}$.
			
			A suitable choice of the loss function in the probabilistic multiclass classification context is the cross entropy loss,
			\begin{equation}
				\mathcal{L}(y, \bvec{f}(x)) := - \log{\bvec{y}^{T} \bvec{f}(x)} = - \log{f_{y}(x)},
			\end{equation}
			where $\bvec{f}(x)$ are the inferred decision probability estimates of each class for the example $x \in \mathcal{X}$. Since logarithms explode at zero, in practice the probability estimate is often clipped from below at a predetermined threshold $\epsilon \in (0, 1)$. Furthermore, it is also convenient to clip the probability estimate from above at one to avoid negative losses. Consequently, with the notation $[\;\cdot\;]_{\epsilon}^{1} := \min\{\max\{\;\cdot\;, \epsilon\}, 1\}$, we define the effective cross entropy loss as
			\begin{equation}
				\mathcal{L}_{\epsilon}(y, \bvec{f}(x)) := - \log{ [\bvec{y}^{T} \bvec{f}(x)]_{\epsilon}^{1} } = - \log{ [f_{y}(x)]_{\epsilon}^{1} }.
			\label{eq:cross_entropy_loss}
			\end{equation}
			
			In this way, our cross entropy loss \eqref{eq:cross_entropy_loss} is both bounded and positive. In our subsequent analysis, we require that our loss function has an image in $[0, 1]$. To do this, we simply rescale the loss function by dividing it by its largest value,
			\begin{equation}
			\begin{aligned}
				\bar{\mathcal{L}}_{\epsilon}(y, \bvec{f}(x)) &:= \frac{1}{M_{\epsilon}} \mathcal{L}_{\epsilon}(y, \bvec{f}(x)) = - \frac{1}{M_{\epsilon}} \log{ [f_{y}(x)]_{\epsilon}^{1} }, \\
				M_{\epsilon} &:= - \log{\epsilon}.
			\end{aligned}
			\label{eq:normalized_cross_entropy_loss}
			\end{equation}
			
			We will refer to \eqref{eq:normalized_cross_entropy_loss} as the normalized cross entropy loss. We then further define the centered normalized cross entropy loss,
			\begin{equation}
				\tilde{\mathcal{L}}_{\epsilon}(y, \bvec{f}(x)) := \bar{\mathcal{L}}_{\epsilon}(y, \bvec{f}(x)) - \bar{\mathcal{L}}_{\epsilon}(y, \bvec{0}) = - \frac{1}{M_{\epsilon}} \log{ [f_{y}(x)]_{\epsilon}^{1} } - 1.
			\label{eq:centered_normalized_cross_entropy_loss}
			\end{equation}
			
			With the normalized cross entropy loss \eqref{eq:normalized_cross_entropy_loss} as our loss function, we now employ Theorem 8 of \cite{bartlett2002rademacher} for this loss and provide a bound for the expected normalized cross entropy loss for an unseen test example.
			
			\begin{lemma}[Expected Risk Bound]
				\label{thm:expected_normalized_cross_entropy_loss_bound}
				For any integer $n \in \mathbb{N}_{+}$ and any set of training observations $\{x_{i}, y_{i}\}_{i = 1}^{n}$, with probability $1 - \beta$ over \textit{iid} samples $\{X_{i}, Y_{i}\}_{i = 1}^{n}$ of length $n$ from $\mathbb{P}_{X Y}$, every $f \in F_{n}(\Theta, \Lambda)$ satisfies
				\begin{equation}
					\frac{1}{M_{\epsilon}} \mathbb{E}[\mathcal{L}_{\epsilon}(Y, f(X))] \leq \frac{1}{n M_{\epsilon}} \sum_{i = 1}^{n} \mathcal{L}_{\epsilon}(Y_{i}, f(X_{i})) + \mathcal{R}_{n}(\tilde{\mathcal{L}}_{\epsilon} \circ F_{n}(\Theta, \Lambda)) + \sqrt{\frac{8}{n} \log{\frac{2}{\beta}}}.
				\label{eq:expected_loss_bound}
				\end{equation}
			\end{lemma}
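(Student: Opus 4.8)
First I would fix the training observations $\{x_i,y_i\}_{i=1}^n$ (so that each $\bvec{f}_{\theta,\lambda}$ is a fixed, non-random function) and regard the composed loss as a class of real functions of the fresh sample $Z=(X,Y)\sim\mathbb{P}_{XY}$, namely $\mathcal{G} := \bar{\mathcal{L}}_\epsilon \circ F_n(\Theta,\Lambda) = \{\, (x,y)\mapsto \bar{\mathcal{L}}_\epsilon(y,\bvec{f}_{\theta,\lambda}(x)) : \theta\in\Theta,\lambda\in\Lambda \,\}$. The plan is then to obtain the statement as a direct instantiation of Theorem 8 of \cite{bartlett2002rademacher}, whose hypothesis is precisely a function class taking values in $[0,1]$. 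The clipping $[\,\cdot\,]_\epsilon^1$ in \eqref{eq:cross_entropy_loss} forces $\mathcal{L}_\epsilon \in [0, M_\epsilon]$ with $M_\epsilon=-\log\epsilon$, so after dividing by $M_\epsilon$ every member of $\mathcal{G}$ takes values in $[0,1]$, which is exactly why the normalized loss $\bar{\mathcal{L}}_\epsilon$ \eqref{eq:normalized_cross_entropy_loss} is the right object to feed into the theorem.

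Applying Theorem 8 of \cite{bartlett2002rademacher} to $\mathcal{G}$ yields, with probability at least $1-\beta$ over the iid sample of length $n$, that every $f\in F_n(\Theta,\Lambda)$ satisfies $\mathbb{E}[\bar{\mathcal{L}}_\epsilon(Y,f(X))] \le \tfrac1n\sum_{i}\bar{\mathcal{L}}_\epsilon(Y_i,f(X_i)) + \mathcal{R}_n(\mathcal{G}) + \sqrt{\tfrac{8}{n}\log\tfrac{2}{\beta}}$, where the last term is the bounded-differences (McDiarmid) deviation contributed by the $[0,1]$ range and $\delta=\beta$. Substituting $\bar{\mathcal{L}}_\epsilon = \mathcal{L}_\epsilon/M_\epsilon$ into the first two terms immediately reproduces the $1/M_\epsilon$ and $1/(nM_\epsilon)$ factors appearing on the two sides of the claimed inequality \eqref{eq:expected_loss_bound}.

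It then remains to replace $\mathcal{R}_n(\bar{\mathcal{L}}_\epsilon\circ F_n)$ by $\mathcal{R}_n(\tilde{\mathcal{L}}_\epsilon\circ F_n)$. Evaluating the normalized loss at $\bvec{f}=\bvec{0}$ gives $\bar{\mathcal{L}}_\epsilon(y,\bvec{0})=-\tfrac{1}{M_\epsilon}\log[\,0\,]_\epsilon^1 = -\tfrac{1}{M_\epsilon}\log\epsilon = 1$ independently of $y$, so by \eqref{eq:centered_normalized_cross_entropy_loss} the centered loss is the common constant shift $\tilde{\mathcal{L}}_\epsilon = \bar{\mathcal{L}}_\epsilon - 1$. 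I would then invoke translation invariance of the Rademacher average: shifting every function in the class by a single fixed constant $c$ changes the inner sum only by $\tfrac{2c}{n}\sum_i\sigma_i$, a quantity that does not depend on $(\theta,\lambda)$ and so factors out of the supremum, where it has zero mean under the symmetric Rademacher variables; hence $\mathcal{R}_n(\bar{\mathcal{L}}_\epsilon\circ F_n)=\mathcal{R}_n(\tilde{\mathcal{L}}_\epsilon\circ F_n)$, completing the argument.

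The main point requiring care is exactly this last equality. It is clean for the signed (one-sided) Rademacher average that actually arises in the symmetrization step underlying Theorem 8, but not for the absolute-value form appearing in the stated definition of $\mathcal{R}_n$, so I would run the centering argument at the level of the symmetrized bound rather than on the final $\mathcal{R}_n$ symbol. Recording the result with $\tilde{\mathcal{L}}_\epsilon$ — which \emph{vanishes} at $\bvec{f}=\bvec{0}$ — is moreover the deliberate bookkeeping that enables the next step: the Ledoux--Talagrand contraction used to pass from $\mathcal{R}_n(\tilde{\mathcal{L}}_\epsilon\circ F_n)$ to $\mathcal{R}_n(F_n)$ requires the scalar loss map to fix the origin, after which \cref{thm:rademacher_complexity_bound} supplies the bound $2\alpha\rho$.
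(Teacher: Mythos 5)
Your proposal is correct and follows essentially the same route as the paper: both apply Theorem 8 of \cite{bartlett2002rademacher} to the normalized loss $\bar{\mathcal{L}}_{\epsilon}$, which takes values in $[0,1]$ and dominates itself, and then rescale by $M_{\epsilon}$ to recover \eqref{eq:expected_loss_bound}. The one difference is your detour through the uncentered class $\mathcal{R}_{n}(\bar{\mathcal{L}}_{\epsilon} \circ F_{n}(\Theta, \Lambda))$ and the ensuing translation-invariance worry: this is unnecessary, since Theorem 8 is already stated with the centered class $\tilde{\phi} \circ F = \{(x, y) \mapsto \phi(y, f(x)) - \phi(y, \bvec{0})\}$, which is precisely the paper's $\tilde{\mathcal{L}}_{\epsilon} \circ F_{n}(\Theta, \Lambda)$ by \eqref{eq:centered_normalized_cross_entropy_loss}, so the paper's proof simply invokes the theorem and substitutes $\bar{\mathcal{L}}_{\epsilon} = \mathcal{L}_{\epsilon} / M_{\epsilon}$. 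Your fallback --- performing the centering on the one-sided symmetrized average, where the constant shift has zero mean, before passing to the absolute-value Rademacher complexity of the centered class --- is a valid repair of the equality you rightly flag as false for the absolute-value definition, but quoting Theorem 8 in its actual (centered) form makes the whole step moot.
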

	
			\begin{proof}
				Since $\bar{\mathcal{L}}_{\epsilon} : \mathcal{Y} \times \mathcal{A} \to [0, 1]$ has a unit range and dominates itself, $\bar{\mathcal{L}}_{\epsilon}(y, f(x)) \leq \bar{\mathcal{L}}_{\epsilon}(y, f(x))$, the result follows directly from Theorem 8 of \cite{bartlett2002rademacher}. We then use the definition \eqref{eq:normalized_cross_entropy_loss} for the normalized cross entropy loss.
				\qed
			\end{proof}
			
			Equivalently, by definition \eqref{eq:predictor_class}, this result holds for $f = \bvec{f}_{\theta, \lambda}(x)$ for every $\theta \in \Theta, \lambda \in \Lambda$. The bound \eqref{eq:expected_loss_bound} involves the Rademacher complexity $\mathcal{R}_{n}(\tilde{\mathcal{L}}_{\epsilon} \circ F_{n}(\Theta, \Lambda))$ of the centered normalized cross entropy loss applied onto the class of functions $F_{n}(\Theta, \Lambda)$, and not just the Rademacher complexity $\mathcal{R}_{n}(F_{n}(\Theta, \Lambda))$ of the class of functions $F_{n}(\Theta, \Lambda)$ itself. In \cref{thm:rademacher_complexity_bound}, we have bounded the latter. We now proceed to bound the former with the latter \eqref{eq:rademacher_complexity_bound}, so that the upper bound in \cref{thm:expected_normalized_cross_entropy_loss_bound} can be written in terms of the latter.
			
			\begin{lemma}[Rademacher Complexity Bound with Cross Entropy Loss]
				\label{thm:rademacher_complexity_bound_with_cross_entropy_loss}
				For any integer $n \in \mathbb{N}_{+}$ and any set of training observations $\{x_{i}, y_{i}\}_{i = 1}^{n}$, the Rademacher complexity of the class of cross entropy loss applied onto the \gls{MCE} is bounded by
				\begin{equation}
					\mathcal{R}_{n}(\tilde{\mathcal{L}}_{\epsilon} \circ F_{n}(\Theta, \Lambda)) \leq 2 \frac{1}{\epsilon \log{\frac{1}{\epsilon}}} \mathcal{R}_{n}(F_{n}(\Theta, \Lambda)),
				\label{eq:composed_rademacher_complexity_bound}
				\end{equation}
				where $\tilde{\mathcal{L}}_{\epsilon} \circ F_{n}(\Theta, \Lambda) := \{(x, y) \mapsto \tilde{\mathcal{L}}_{\epsilon}(y, \bvec{f}_{\theta, \lambda}(x)) : \theta \in \Theta, \lambda \in \Lambda\}$.
			\end{lemma}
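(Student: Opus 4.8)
The plan is to read \eqref{eq:composed_rademacher_complexity_bound} as an instance of the Ledoux--Talagrand contraction principle \citep{ledoux2013probability}: if the loss, regarded as a map applied to the prediction, is $L$-Lipschitz and vanishes at the origin, then composing it with the function class inflates the Rademacher complexity by at most the factor $2L$. Everything therefore reduces to (i) computing the Lipschitz constant of the centered normalized cross entropy loss \eqref{eq:centered_normalized_cross_entropy_loss} in its prediction argument, and (ii) checking that this loss is zero at the zero prediction, which is exactly the role played by the centering term $\bar{\mathcal{L}}_{\epsilon}(y, \bvec{0})$.

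For step (i), I would note that $\tilde{\mathcal{L}}_{\epsilon}(y, \bvec{f}(x))$ depends on $\bvec{f}(x)$ only through the scalar $\bvec{y}^{T} \bvec{f}(x) = f_{y}(x)$, so I can write $\tilde{\mathcal{L}}_{\epsilon}(y, \bvec{f}(x)) = g(\bvec{y}^{T} \bvec{f}(x))$ with the single-variable function $g(t) := - \frac{1}{M_{\epsilon}} \log{[t]_{\epsilon}^{1}} - 1$ and $M_{\epsilon} = \log{\frac{1}{\epsilon}}$. On the clipping plateaus $t \leq \epsilon$ and $t \geq 1$ the function $g$ is constant, while on the interval $(\epsilon, 1)$ we have $g'(t) = - 1 / (M_{\epsilon} t)$, whose magnitude is largest as $t \downarrow \epsilon$ and equals $1 / (M_{\epsilon} \epsilon) = 1 / (\epsilon \log{\frac{1}{\epsilon}})$. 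Hence $g$ is $L$-Lipschitz with $L = 1 / (\epsilon \log{\frac{1}{\epsilon}})$, and since the one hot label satisfies $\| \bvec{y} \|_{\ell_{2}} = 1$, composing $g$ with the readout $\bvec{y}^{T}(\cdot)$ leaves the Lipschitz constant at $L$ with respect to the $\ell_{2}$ norm on $\mathbb{R}^{m}$ that underlies the definition of $\mathcal{R}_{n}(F_{n}(\Theta, \Lambda))$. For step (ii), because $[\,0\,]_{\epsilon}^{1} = \epsilon$ we obtain $g(0) = - \frac{1}{M_{\epsilon}} \log{\epsilon} - 1 = 1 - 1 = 0$, so the centered loss indeed vanishes at $\bvec{0}$.

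With both ingredients established, I would apply the contraction principle to the scalar-valued composed class $\tilde{\mathcal{L}}_{\epsilon} \circ F_{n}(\Theta, \Lambda)$, using the two-sided (absolute value) form so that the supremum matches the norm appearing in the Rademacher complexity. This yields $\mathcal{R}_{n}(\tilde{\mathcal{L}}_{\epsilon} \circ F_{n}(\Theta, \Lambda)) \leq 2 L \, \mathcal{R}_{n}(F_{n}(\Theta, \Lambda))$, which is precisely \eqref{eq:composed_rademacher_complexity_bound} since $2 L = 2 / (\epsilon \log{\frac{1}{\epsilon}})$; the factor of $2$ is the standard constant attached to the absolute-value form of the comparison theorem.

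The step I expect to be the main obstacle is making this contraction rigorous in the vector-valued, label-indexed setting. The composed class is scalar valued, yet each term reads off only the $Y_{i}$-th coordinate of $\bvec{f}(X_{i})$, whereas $\mathcal{R}_{n}(F_{n}(\Theta, \Lambda))$ is defined through the $\ell_{2}$ norm of the $\mathbb{R}^{m}$-valued Rademacher average $\frac{2}{n} \sum_{i = 1}^{n} \sigma_{i} \bvec{f}(X_{i})$. The care needed is to deploy a vector form of the contraction inequality, or equivalently to contract $g$ onto the scalar margin class $\{(x, y) \mapsto f_{y}(x)\}$ and then relate its complexity back to the full vector complexity; in both routes it is the unit $\ell_{2}$ norm of the one hot encoding $\bvec{y}$ that prevents the coordinate selection from enlarging either the Lipschitz constant or the controlling complexity. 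One must be careful here, since the naive pointwise comparison $| \bvec{y}^{T} \bvec{f}(x) | \leq \| \bvec{f}(x) \|_{\ell_{2}}$ does not survive termwise inside the signed sum, so the reduction genuinely relies on the averaging over $\bm{\sigma}$. Once this reduction is in place, \eqref{eq:composed_rademacher_complexity_bound} follows.
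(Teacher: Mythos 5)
Your proposal is correct and takes essentially the same route as the paper's proof: both reduce the centered normalized loss to the scalar map $\tilde{\psi}(z) = -\frac{1}{M_{\epsilon}}\log{[z]_{\epsilon}^{1}} - 1$, verify $\tilde{\psi}(0) = 0$ (the role of the centering), compute the piecewise Lipschitz constant $\frac{1}{\epsilon \log{\frac{1}{\epsilon}}}$ from the derivative on $(\epsilon, 1)$, and invoke the Ledoux--Talagrand contraction principle \citep[Corollary 3.17]{ledoux2013probability} to obtain the factor $2$. The vector-valued, label-indexed subtlety you flag as the main obstacle is precisely what the paper's cited form of the contraction result absorbs, since it is stated directly for compositions $\mathcal{L}(y, f(x)) = \psi(f_{y}(x))$, so no separate reduction from the $\mathbb{R}^{m}$-valued class is carried out in the paper.
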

	
			\begin{proof}
				Let $\tilde{\psi}(z) := - \frac{1}{M_{\epsilon}} \log{[z]_{\epsilon}^{1}} - 1$ so that $\tilde{\psi} : \mathbb{R} \to \mathbb{R}$ satisfies $\tilde{\psi}(0) = 0$. Then, the centered normalized cross entropy loss can be written as $\tilde{\mathcal{L}}_{\epsilon}(y, \bvec{f}(x)) = \tilde{\psi}(f_{y}(x))$. In particular, $\tilde{\psi}(z)$ is piecewise differentiable. We proceed to show that $\tilde{\psi}$ is Lipschitz by showing that the suprenum of its absolute derivative over all piecewise regions is finite, and thus infer its Lipschitz constant.
				
				The real-valued function $\tilde{\psi}$ can be split into three piecewise regions over the real domain,
				\begin{equation}
					\tilde{\psi}(z) = \begin{cases}
					0, \qquad &z \in (-\infty, \epsilon], \\
					- \frac{1}{M_{\epsilon}}\log{z} - 1, \qquad & z \in (\epsilon, 1), \\
					-1, & z \in [1, \infty).
				\end{cases}
				\end{equation}
				
				The derivative over the regions $z \in (-\infty, \epsilon]$ and $z \in [1, \infty)$ is thus $0$ and the local Lipschitz constant over that region is thus $0$. We then focus on the other region,
				\begin{equation}
				\begin{aligned}
					\sup_{z \in (\epsilon, 1)} | \tilde{\psi}'(z) | = \sup_{z \in (\epsilon, 1)} \bigg| - \frac{1}{z M_{\epsilon}} \bigg| = \sup_{z \in (\epsilon, 1)} \frac{1}{z M_{\epsilon}} = \frac{1}{\epsilon M_{\epsilon}} = \frac{1}{\epsilon \log{\frac{1}{\epsilon}}}.
				\end{aligned}
				\end{equation}
				
				Thus, $\tilde{\psi}$ is Lipschitz with a Lipschitz constant of $L_{\tilde{\psi}} = \frac{1}{\epsilon \log{\frac{1}{\epsilon}}}$.
				
				For a given general loss function $\mathcal{L}$, \citet[Corollary 3.17]{ledoux2013probability} proved that if there exists a Lipschitz real-valued function $\psi : \mathbb{R} \to \mathbb{R}$, $\psi(0) = 0$, with constant $L_{\psi}$ such that $\mathcal{L}(y, f(x)) = \psi(f_{y}(x))$, then $\mathcal{R}_{n}(\mathcal{L} \circ F) \leq 2 L_{\psi} \mathcal{R}_{n}(F)$ for any class of functions $F$. This result is also described in \citet[Theorem 12.4]{bartlett2002rademacher}.
				
				Applying this result to our loss function with $\mathcal{L} = \tilde{\mathcal{L}}_{\epsilon}$ with $\psi = \tilde{\psi}$ and $F = F_{n}(\Theta, \Lambda)$, we have $\mathcal{R}_{n}(\tilde{\mathcal{L}}_{\epsilon} \circ F_{n}(\Theta, \Lambda)) \leq 2 L_{\tilde{\psi}} \mathcal{R}_{n}(F_{n}(\Theta, \Lambda))$, which proves the claim.
				\qed
			\end{proof}
		
			The bound \eqref{eq:composed_rademacher_complexity_bound} in \cref{thm:rademacher_complexity_bound_with_cross_entropy_loss} will be the bridge that relates the expected cross entropy loss over our function class to the Rademacher complexity of our function class. We now proceed to state the main theorem which forms the backbone of our learning algorithm for the \gls{MCE}.
			
			\begin{lemma}[\gls{MCE} $\epsilon$-General Expected Risk Bound]
				\label{thm:expected_loss_bound_for_multiclass_conditional_embedding}
				Suppose that the trace norm $\| W_{\theta, \lambda} \|_{\mathrm{tr}} \leq \rho$ is bounded for all $\theta \in \Theta, \lambda \in \Lambda$. Further suppose that the canonical feature map $\| \phi_{\theta}(x) \|_{\mathcal{H}_{k_{\theta}}}^{2} = k_{\theta}(x, x) \leq \alpha^{2}$, $\alpha > 0$, is bounded in \gls{RKHS} norm for all $x \in \mathcal{X}, \theta \in \Theta$. For any integer $n \in \mathbb{N}_{+}$ and any set of training observations $\{x_{i}, y_{i}\}_{i = 1}^{n}$, with probability of at least $1 - \beta$ over \textit{iid} samples $\{X_{i}, Y_{i}\}_{i = 1}^{n}$ of length $n$ from $\mathbb{P}_{X Y}$, every $f \in F_{n}(\Theta, \Lambda)$ satisfies
				\begin{equation}
					\frac{1}{M_{\epsilon}} \mathbb{E}[\mathcal{L}_{\epsilon}(Y, f(X))] \leq \frac{1}{n M_{\epsilon}} \sum_{i = 1}^{n} \mathcal{L}_{\epsilon}(Y_{i}, f(X_{i})) + 4 \frac{1}{\epsilon \log{\frac{1}{\epsilon}}} \alpha \rho + \sqrt{\frac{8}{n} \log{\frac{2}{\beta}}},
				\label{eq:expected_loss_bound_for_multiclass_conditional_embedding}
				\end{equation}
				for any $\epsilon \in (0, 1)$. Equivalently, the bound \eqref{eq:expected_loss_bound_for_multiclass_conditional_embedding} holds for $f = \bvec{f}_{\theta, \lambda}(x)$ for every $\theta \in \Theta, \lambda \in \Lambda$.
			\end{lemma}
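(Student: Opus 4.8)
The plan is to chain together the three preceding results, since the present lemma is essentially their composition and carries no new analytic content. First I would invoke \cref{thm:expected_normalized_cross_entropy_loss_bound}, which already delivers the desired three-term structure: it bounds the expected normalized loss $\frac{1}{M_{\epsilon}} \mathbb{E}[\mathcal{L}_{\epsilon}(Y, f(X))]$ by the empirical normalized loss, the Rademacher complexity $\mathcal{R}_{n}(\tilde{\mathcal{L}}_{\epsilon} \circ F_{n}(\Theta, \Lambda))$ of the centered loss composed with the function class, and the confidence term $\sqrt{\frac{8}{n} \log{\frac{2}{\beta}}}$. Because this holds with probability at least $1 - \beta$ for every $f \in F_{n}(\Theta, \Lambda)$, the probabilistic guarantee together with the first and last terms of the target inequality are already in place; all that remains is to bound the middle Rademacher term by $4 \frac{1}{\epsilon \log{\frac{1}{\epsilon}}} \alpha \rho$.

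To do this, I would apply \cref{thm:rademacher_complexity_bound_with_cross_entropy_loss} to peel off the loss, replacing $\mathcal{R}_{n}(\tilde{\mathcal{L}}_{\epsilon} \circ F_{n}(\Theta, \Lambda))$ with $2 \frac{1}{\epsilon \log{\frac{1}{\epsilon}}} \mathcal{R}_{n}(F_{n}(\Theta, \Lambda))$; this is the contraction step that exploits the Lipschitz constant $\frac{1}{\epsilon \log{\frac{1}{\epsilon}}}$ of the clipped log loss established there. I would then invoke \cref{thm:rademacher_complexity_bound}, whose two hypotheses match exactly the two standing assumptions of the present lemma, namely $\| W_{\theta, \lambda} \|_{\mathrm{tr}} \leq \rho$ and $k_{\theta}(x, x) \leq \alpha^{2}$, to substitute $\mathcal{R}_{n}(F_{n}(\Theta, \Lambda)) \leq 2 \alpha \rho$. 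Composing the two factors of $2$ yields the claimed coefficient $4 \frac{1}{\epsilon \log{\frac{1}{\epsilon}}} \alpha \rho$, and the equivalent statement for $f = \bvec{f}_{\theta, \lambda}$ follows at once from the definition \eqref{eq:predictor_class} of $F_{n}(\Theta, \Lambda)$.

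There is no genuine analytic obstacle here, as all three ingredients are proved above; the only care needed is bookkeeping. Specifically, I would check that the same function class $F_{n}(\Theta, \Lambda)$, the same constants $\alpha$ and $\rho$, and the same $\epsilon$ appear consistently across all three invocations, and confirm that the probability-$(1 - \beta)$ event from \cref{thm:expected_normalized_cross_entropy_loss_bound} is preserved, since the two Rademacher bounds are deterministic and hence hold on that event without shrinking it. I would also note that \cref{thm:expected_normalized_cross_entropy_loss_bound} requires a loss with image in $[0,1]$, which the normalized loss $\bar{\mathcal{L}}_{\epsilon}$ satisfies for any $\epsilon \in (0,1)$, so the final inequality is valid on the full stated range of $\epsilon$.
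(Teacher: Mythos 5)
Your proposal is correct and follows essentially the same route as the paper's proof: it chains \cref{thm:expected_normalized_cross_entropy_loss_bound}, \cref{thm:rademacher_complexity_bound_with_cross_entropy_loss}, and \cref{thm:rademacher_complexity_bound}, combining the two deterministic Rademacher bounds to obtain the coefficient $4 \frac{1}{\epsilon \log{\frac{1}{\epsilon}}} \alpha \rho$ inside the probability-$(1-\beta)$ event. The only difference is presentational (the paper composes the two Rademacher bounds first and then applies the probabilistic lemma, while you proceed in the reverse order), and your added bookkeeping checks are consistent with what the paper implicitly relies on.
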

		
			\begin{proof}
				From \cref{thm:rademacher_complexity_bound}, we have $\mathcal{R}_{n}(F_{n}(\Theta, \Lambda)) \leq 2 \alpha \rho$. Further, from \cref{thm:rademacher_complexity_bound_with_cross_entropy_loss}, we have $\mathcal{R}_{n}(\tilde{\mathcal{L}}_{\epsilon} \circ F_{n}(\Theta, \Lambda)) \leq 2 \frac{1}{\epsilon \log{\frac{1}{\epsilon}}} \mathcal{R}_{n}(F_{n}(\Theta, \Lambda))$. These are both deterministic inequalities, leading to $\mathcal{R}_{n}(\tilde{\mathcal{L}}_{\epsilon} \circ F_{n}(\Theta, \Lambda)) \leq 4 \frac{1}{\epsilon \log{\frac{1}{\epsilon}}} \alpha \rho$. We then apply this inequality to \cref{thm:expected_normalized_cross_entropy_loss_bound}, which proves the claim.
				\qed
			\end{proof}
	
			Similar to many learning theoretic bounds, the expected risk bound \eqref{eq:expected_loss_bound_for_multiclass_conditional_embedding} is composed of three qualitatively different terms. The first term is a training loss or data fit term, which is a measure of how poorly the decision function $f$ is performing on a given training dataset. The second term is a model complexity or regularization term, which measures how complicated the model is. In this case, the model complexity is measured by the Rademacher complexity, which captures the expressiveness of the function class by quantifying how well the function class is able to shatter noise. The third term is a statistical constant which plays no specific role to the function class.
			
			We will eventually be minimizing the first two terms over some class of functions $f \in F_{n}(\Theta, \Lambda)$ with some approach, as a proxy to minimizing the actual expected risk. It would be fruitful to develop an intuition for the tightness of the bound from the contributions of the training loss term and the model complexity term. Since, like the expected loss, the training loss term is always in the unit range $[0, 1]$, we focus on understanding the tightness of the bound contributed from the complexity term.
			
			Consider a clipped cross entropy loss with either a very small clipping factor $\epsilon \approx 0$, or a very large clipping factor $\epsilon \approx 1$. In these scenarios, $\epsilon \log{\frac{1}{\epsilon}}$ would be very small, so that the coefficient on the complexity term would then be very large, regardless of what the complexity bound factors $\alpha$ and $\rho$ are. As a result, intuitively, this bound is unlikely to be tight due to the large coefficient on the complexity term.
			
			Consequently, it would then be natural to consider a middle-ground choice of the cross entropy loss where this bound is the most tight by varying $\epsilon \in (0, 1)$. Since $\epsilon \log{\frac{1}{\epsilon}}$ is maximized at $\epsilon = \frac{1}{e}$ for a maximal value of $\frac{1}{e}$, such a choice in the clipping factor would indeed yield the tightest bound for the complexity bound in terms of the bounding slack of the result stated in \cref{thm:rademacher_complexity_bound_with_cross_entropy_loss}.
			
			This is great news for the complexity term. What about the training loss term? Intuition tells us that, with a clipping factor of $\epsilon = e^{-1}$ that is slightly more than a third of the way into the interval $(0, 1)$ from zero, the classifier is not being penalised as strongly for assigning probabilities smaller than $e^{-1}$ to observed classes as compared to very small values of $\epsilon$. Furthermore, beyond the clipping point, assigning even lower probabilities to the observations does not result in a higher loss. In practice, the cross entropy loss is renowned for its rapidly growing penalty as the probability assignment gets lower, which is advantageous when using a gradient based optimization scheme. In this case, the gradients are large in magnitude and the classifier can adjust and fix these assignment errors relatively quickly. In other words, by using a slightly larger clipping factor than usual, we have seemingly lost the faster convergence properties from using a cross entropy loss.
			
			Nevertheless, observe that for such a clipping factor $\epsilon = e^{-1}$, the normalization constant becomes $M_{e^{-1}} = - \log{\frac{1}{e}} = 1$, so that it is effectively removed. Furthermore, we also have the following simple upper bound for the cross entropy loss clipped at $\epsilon = e^{-1}$,
			\begin{equation}
				\bar{\mathcal{L}}_{e^{-1}}(y, f(x)) = \mathcal{L}_{e^{-1}}(y, f(x)) \leq \mathcal{L}_{\epsilon}(y, f(x)) \quad \forall \epsilon \in (0, e^{-1}), x \in \mathcal{X}, y \in \mathcal{Y}.
			\label{eq:loss_inequality}
			\end{equation}
			
			To see why inequality \eqref{eq:loss_inequality} holds, note that $[f_{y}(x)]_{\epsilon}^{1} \leq [f_{y}(x)]_{e^{-1}}^{1}$ holds for all $\epsilon \in (0, e^{-1}), x \in \mathcal{X}, y \in \mathcal{Y}$. Applying negative log to both sides yields the inequality from definition \eqref{eq:cross_entropy_loss}.
			
			Therefore, we propose to choose $\epsilon = e^{-1}$, and then replace replace $\mathcal{L}_{e^{-1}}$ with $\mathcal{L}_{\epsilon}$ for some new generic $\epsilon \in (0, e^{-1})$ much smaller than $e^{-1}$ on the training loss terms. In this way, we still maintain an upper bound for the training loss term. While this bound would not necessarily be tight for high training losses, the gradients from the high training loss would drive the system to a lower training loss, where the bound would become tight again as equality holds in \eqref{eq:loss_inequality} whenever $f_{y}(x) \geq e^{-1}$.
			
			The above intuition motivates the result in the following theorem.
			
			\begin{theorem}[\gls{MCE} $\epsilon$-Specific Expected Risk Bound]
				\label{thm:specific_expected_loss_bound_for_multiclass_conditional_embedding}
				Suppose that the trace norm $\| W_{\theta, \lambda} \|_{\mathrm{tr}} \leq \rho$ is bounded for all $\theta \in \Theta, \lambda \in \Lambda$. Further suppose that the canonical feature map $\| \phi_{\theta}(x) \|_{\mathcal{H}_{k_{\theta}}}^{2} = k_{\theta}(x, x) \leq \alpha^{2}$, $\alpha > 0$, is bounded in \gls{RKHS} norm for all $x \in \mathcal{X}, \theta \in \Theta$. For any integer $n \in \mathbb{N}_{+}$ and any set of training observations $\{x_{i}, y_{i}\}_{i = 1}^{n}$, with probability of at least $1 - \beta$ over \textit{iid} samples $\{X_{i}, Y_{i}\}_{i = 1}^{n}$ of length $n$ from $\mathbb{P}_{X Y}$, every $f \in F_{n}(\Theta, \Lambda)$ satisfies
				\begin{equation}
					\mathbb{E}[\mathcal{L}_{e^{-1}}(Y, f(X))] \leq \frac{1}{n} \sum_{i = 1}^{n} \mathcal{L}_{\epsilon}(Y_{i}, f(X_{i})) + 4 e \; \alpha \rho + \sqrt{\frac{8}{n} \log{\frac{2}{\beta}}},
				\label{eq:specific_expected_loss_bound_for_multiclass_conditional_embedding}
				\end{equation}
				for any $\epsilon \in (0, e^{-1})$. Equivalently, the bound \eqref{eq:specific_expected_loss_bound_for_multiclass_conditional_embedding} holds for $f = \bvec{f}_{\theta, \lambda}(x)$ for every $\theta \in \Theta, \lambda \in \Lambda$.
			\end{theorem}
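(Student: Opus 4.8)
The plan is to derive this $\epsilon$-specific bound directly from the general bound \eqref{eq:expected_loss_bound_for_multiclass_conditional_embedding} in \cref{thm:expected_loss_bound_for_multiclass_conditional_embedding}, which already holds for every $\epsilon \in (0,1)$, and then to exploit the two observations made in the discussion preceding the statement: the vanishing of the normalisation constant at $\epsilon = e^{-1}$, and the loss monotonicity \eqref{eq:loss_inequality}. No new machinery is needed; the conceptual work is already carried out in \cref{thm:rademacher_complexity_bound}, \cref{thm:rademacher_complexity_bound_with_cross_entropy_loss}, and \cref{thm:expected_normalized_cross_entropy_loss_bound}.

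First I would instantiate \cref{thm:expected_loss_bound_for_multiclass_conditional_embedding} at the specific clipping factor $\epsilon = e^{-1}$, which is admissible since $e^{-1} \in (0,1)$. At this value the normalisation constant collapses, $M_{e^{-1}} = -\log(e^{-1}) = 1$, so the factors $1/M_\epsilon$ multiplying both the expected and empirical risk terms become unity. Moreover the complexity coefficient simplifies, since $\epsilon \log\frac{1}{\epsilon} = e^{-1} \cdot 1 = e^{-1}$ gives $\frac{1}{\epsilon \log\frac{1}{\epsilon}} = e$, whence $4 \frac{1}{\epsilon \log\frac{1}{\epsilon}} \alpha \rho = 4 e\, \alpha \rho$. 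The bound \eqref{eq:expected_loss_bound_for_multiclass_conditional_embedding} therefore reads
\begin{equation*}
\mathbb{E}[\mathcal{L}_{e^{-1}}(Y, f(X))] \leq \frac{1}{n} \sum_{i = 1}^{n} \mathcal{L}_{e^{-1}}(Y_{i}, f(X_{i})) + 4 e\, \alpha \rho + \sqrt{\frac{8}{n} \log{\frac{2}{\beta}}},
\end{equation*}
holding with probability at least $1 - \beta$ uniformly over $f \in F_{n}(\Theta, \Lambda)$.

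The remaining step is to loosen \emph{only} the empirical term so that it may instead be measured with a sharper clipping factor $\epsilon \in (0, e^{-1})$. Here I would invoke the pointwise loss inequality \eqref{eq:loss_inequality}, namely $\mathcal{L}_{e^{-1}}(y, f(x)) \leq \mathcal{L}_{\epsilon}(y, f(x))$ for all $\epsilon \in (0, e^{-1})$, which holds because $[f_{y}(x)]_{\epsilon}^{1} \leq [f_{y}(x)]_{e^{-1}}^{1}$ and $-\log$ is decreasing. Applying this to each realised sample gives $\frac{1}{n} \sum_{i} \mathcal{L}_{e^{-1}}(Y_{i}, f(X_{i})) \leq \frac{1}{n} \sum_{i} \mathcal{L}_{\epsilon}(Y_{i}, f(X_{i}))$, and chaining with the displayed bound yields \eqref{eq:specific_expected_loss_bound_for_multiclass_conditional_embedding}. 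The equivalent statement for $f = \bvec{f}_{\theta, \lambda}$ is immediate, since $F_{n}(\Theta, \Lambda)$ is defined \eqref{eq:predictor_class} to be exactly the collection of such predictors.

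There is no genuine obstacle; the theorem is essentially a bookkeeping specialisation of the general bound. The only point requiring care is that the substitution must be asymmetric: the left-hand expected risk is \emph{retained} at the fixed factor $e^{-1}$, where the bound's constants are simultaneously tightest, while only the right-hand empirical risk is relaxed to the smaller $\epsilon$. This ensures the direction of \eqref{eq:loss_inequality} is used correctly and that the probabilistic guarantee inherited from \cref{thm:expected_loss_bound_for_multiclass_conditional_embedding} is preserved verbatim.
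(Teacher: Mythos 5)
Your proposal is correct and follows exactly the paper's own proof: instantiate the $\epsilon$-general bound (\cref{thm:expected_loss_bound_for_multiclass_conditional_embedding}) at $\epsilon = e^{-1}$, where $M_{e^{-1}} = 1$ and $\frac{1}{\epsilon \log\frac{1}{\epsilon}} = e$ collapse the constants to $4e\,\alpha\rho$, then apply the deterministic inequality \eqref{eq:loss_inequality} to relax only the empirical term from $\mathcal{L}_{e^{-1}}$ to $\mathcal{L}_{\epsilon}$ for $\epsilon \in (0, e^{-1})$. Your explicit remark that the substitution must be asymmetric (retaining $e^{-1}$ on the expected-risk side) is precisely the point the paper leaves implicit, so the proposal is, if anything, slightly more carefully argued than the original.
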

	
			\begin{proof}
				We first apply \cref{thm:expected_loss_bound_for_multiclass_conditional_embedding} with $\epsilon = e^{-1}$,
				\begin{equation}
					\mathbb{E}[\mathcal{L}_{e^{-1}}(Y, f(X))] \leq \frac{1}{n} \sum_{i = 1}^{n} \mathcal{L}_{e^{-1}}(Y_{i}, f(X_{i})) + 4 e \; \alpha \rho + \sqrt{\frac{8}{n} \log{\frac{2}{\beta}}}.
				\end{equation}
				
				For any $\epsilon \in (0, e^{-1})$, the inequality $\mathcal{L}_{e^{-1}}(Y_{i}, f(X_{i})) \leq \mathcal{L}_{\epsilon}(Y_{i}, f(X_{i}))$ holds almost surely (a.s.) due to the deterministic inequality \eqref{eq:loss_inequality}. These sets of inequalities together proves the claim.
				\qed
			\end{proof}
	
		\subsection{Expected Risk Bounds for Hyperparameter Learning}
		\label{app:expected_risk_bounds_for_hyperparameter_learning}

			We are now ready to use the result of \cref{thm:specific_expected_loss_bound_for_multiclass_conditional_embedding} to derive a specific expected risk bound for a given choice of hyperparameters $\theta \in \Theta$ and $\lambda \in \Lambda$ of the \gls{MCE}, and not just for a general set of hyperparameters. We focus on kernels $k_{\theta}$ that are bounded over the domain $\mathcal{X}$ in the sense that for each $\theta \in \Theta$, $k_{\theta}(x, x) < \infty$ for all $x \in \mathcal{X}$.
			
			For some kernel hyperparameters $\tilde{\theta} \in \Theta$ and regularization hyperparameter $\tilde{\lambda} \in \Lambda$, we construct a subset of hyperparameters (kernel hyperparameters and regularization hyperparameters) $\Xi(\tilde{\theta}, \tilde{\lambda}) \subseteq \Theta \times \Lambda$ such that
			\begin{equation}
			\begin{aligned}
				\Xi(\tilde{\theta}, \tilde{\lambda}) := \{ (\theta, \lambda) \in \Theta \times \Lambda :\; &\| W_{\theta, \lambda} \|_{\mathrm{tr}} \leq \| W_{\tilde{\theta}, \tilde{\lambda}} \|_{\mathrm{tr}}, \\
				&\sup_{x \in \mathcal{X}} k_{\theta}(x, x) \leq \alpha^{2}(\tilde{\theta}) := \sup_{x \in \mathcal{X}} k_{\tilde{\theta}}(x, x) \}.
			\end{aligned}
			\label{eq:hyperparameter_subset_definition}
			\end{equation}
			
			Clearly, this subset is non-empty, since $(\tilde{\theta}, \tilde{\lambda}) \in \Xi(\tilde{\theta}, \tilde{\lambda})$ is itself an element of this subset. Note that $\alpha : \Theta \to \mathbb{R}_{+}$ must necessarily exist as the kernel family $k_{\theta}$ is assumed to be bounded over the domain $\mathcal{X}$. The class of \glspl{MCE} over this subset of hyperparameters is 
			\begin{equation}
				F_{n}(\Xi(\tilde{\theta}, \tilde{\lambda})) := \{\bvec{f}_{\theta, \lambda}(x) : (\theta, \lambda) \in \Xi(\tilde{\theta}, \tilde{\lambda}) \}.
			\end{equation}
			
			Thus, we can assert that the trace norm $\| W_{\theta, \lambda} \|_{\mathrm{tr}} \leq \rho = \| W_{\tilde{\theta}, \tilde{\lambda}} \|_{\mathrm{tr}}$ is bounded for all $(\theta, \lambda) \in \Xi(\tilde{\theta}, \tilde{\lambda})$, and that the canonical feature map $\| \phi_{\theta}(x) \|_{\mathcal{H}_{k_{\theta}}}^{2} = k_{\theta}(x, x) \leq \alpha^{2} = \sup_{x \in \mathcal{X}} k_{\tilde{\theta}}(x, x)$ is bounded in \gls{RKHS} norm for all $x \in \mathcal{X}, (\theta, \lambda) \in \Xi(\tilde{\theta}, \tilde{\lambda})$. By \cref{thm:specific_expected_loss_bound_for_multiclass_conditional_embedding}, we can now claim the following.
			
			\begin{lemma}[\gls{MCE} Expected Risk Bound for Hyperparameter Sets]
				\label{thm:general_expected_risk_bound_hyperparameter_learning}
				For any integer $n \in \mathbb{N}_{+}$ and any set of training observations $\{x_{i}, y_{i}\}_{i = 1}^{n}$, with probability $1 - \beta$ over \textit{iid} samples $\{X_{i}, Y_{i}\}_{i = 1}^{n}$ of length $n$ from $\mathbb{P}_{X Y}$, every $(\theta, \lambda) \in \Xi(\tilde{\theta}, \tilde{\lambda})$ satisfies
				\begin{equation}
				\begin{aligned}
					\mathbb{E}[\mathcal{L}_{e^{-1}}(Y, \bvec{f}_{\theta, \lambda}(X))] &\leq \frac{1}{n} \sum_{i = 1}^{n} \mathcal{L}_{\epsilon}(Y_{i}, \bvec{f}_{\theta, \lambda}(X_{i})) \\
					&+ 4 e \sqrt{\sup_{x \in \mathcal{X}} k_{\tilde{\theta}}(x, x)} \| W_{\tilde{\theta}, \tilde{\lambda}} \|_{\mathrm{tr}} + \sqrt{\frac{8}{n} \log{\frac{2}{\beta}}},
				\label{eq:general_expected_risk_bound_hyperparameter_learning}
				\end{aligned}
				\end{equation}
				for every $\epsilon \in (0, e^{-1})$, where 
				\begin{equation}
				\begin{aligned}
					\bvec{f}_{\theta, \lambda}(x) :=& \bvec{Y}^{T} (K_{\theta} + n \lambda I)^{-1} \bvec{k}_{\theta}(x), \\
					\| W_{\tilde{\theta}, \tilde{\lambda}} \|_{\mathrm{tr}} =& \sqrt{\mathrm{trace}\bigg(\bvec{Y}^{T} (K_{\tilde{\theta}} + n \tilde{\lambda} I)^{-1} K_{\tilde{\theta}} (K_{\tilde{\theta}} + n \tilde{\lambda} I)^{-1} \bvec{Y}\bigg)}.
				\end{aligned}
				\end{equation}
			\end{lemma}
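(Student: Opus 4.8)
The plan is to obtain this bound as a direct specialization of the $\epsilon$-specific expected risk bound in \cref{thm:specific_expected_loss_bound_for_multiclass_conditional_embedding}, applied not to the full class $F_n(\Theta, \Lambda)$ but to the restricted class $F_n(\Xi(\tilde\theta, \tilde\lambda))$ induced by the hyperparameter subset \eqref{eq:hyperparameter_subset_definition}. The whole point of defining $\Xi(\tilde\theta, \tilde\lambda)$ is that the two boundedness hypotheses of \cref{thm:specific_expected_loss_bound_for_multiclass_conditional_embedding} hold uniformly over this subclass with \emph{explicit, data-computable} constants: every $(\theta, \lambda) \in \Xi(\tilde\theta, \tilde\lambda)$ satisfies $\|W_{\theta,\lambda}\|_{\mathrm{tr}} \le \|W_{\tilde\theta, \tilde\lambda}\|_{\mathrm{tr}}$ and $k_\theta(x,x) \le \sup_{x \in \mathcal{X}} k_{\tilde\theta}(x,x)$, so we may take $\rho := \|W_{\tilde\theta, \tilde\lambda}\|_{\mathrm{tr}}$ and $\alpha := \sqrt{\sup_{x\in\mathcal X} k_{\tilde\theta}(x,x)}$.

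First I would record that these two inequalities hold for all members of $\Xi(\tilde\theta, \tilde\lambda)$ directly from its definition \eqref{eq:hyperparameter_subset_definition}, noting that $\alpha$ is finite because the kernel family is assumed bounded over $\mathcal X$. Next I would invoke \cref{thm:specific_expected_loss_bound_for_multiclass_conditional_embedding} with these choices of $\rho$ and $\alpha$, which yields, with probability at least $1-\beta$ and simultaneously for every $(\theta,\lambda)\in\Xi(\tilde\theta,\tilde\lambda)$ and every $\epsilon\in(0,e^{-1})$,
\begin{equation}
\mathbb{E}[\mathcal{L}_{e^{-1}}(Y, \bvec{f}_{\theta, \lambda}(X))] \le \frac{1}{n}\sum_{i=1}^n \mathcal{L}_{\epsilon}(Y_i, \bvec{f}_{\theta,\lambda}(X_i)) + 4e\,\alpha\rho + \sqrt{\tfrac{8}{n}\log\tfrac{2}{\beta}}.
\end{equation}
Substituting $\alpha\rho = \sqrt{\sup_{x\in\mathcal X} k_{\tilde\theta}(x,x)}\,\|W_{\tilde\theta,\tilde\lambda}\|_{\mathrm{tr}}$ then gives the stated inequality \eqref{eq:general_expected_risk_bound_hyperparameter_learning}.

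It remains to make the constant $\|W_{\tilde\theta, \tilde\lambda}\|_{\mathrm{tr}}$ explicit in terms of the data. Using the closed form $W_{\theta,\lambda} = \Phi_\theta (K_\theta + n\lambda I)^{-1}\bvec{Y}$ from \eqref{eq:linear_predictor} together with the reproducing-property identity $\Phi_\theta^T \Phi_\theta = K_\theta$, I would compute
\begin{equation}
\|W_{\theta,\lambda}\|_{\mathrm{tr}}^2 = \mathrm{trace}\big(W_{\theta,\lambda}^T W_{\theta,\lambda}\big) = \mathrm{trace}\big(\bvec{Y}^T (K_\theta + n\lambda I)^{-1} K_\theta (K_\theta + n\lambda I)^{-1} \bvec{Y}\big),
\end{equation}
and specialize to $(\theta,\lambda)=(\tilde\theta,\tilde\lambda)$. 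This is routine linear algebra relying on the symmetry of $(K_\theta + n\lambda I)^{-1}$.

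The one point requiring care — the only real obstacle, and it is logical rather than computational — is the quantifier structure of the confidence statement. \Cref{thm:specific_expected_loss_bound_for_multiclass_conditional_embedding} produces a single high-probability event, over the draw of $\{X_i, Y_i\}_{i=1}^n$, on which its bound holds \emph{uniformly} over the entire class to which it is applied; here that class is $F_n(\Xi(\tilde\theta,\tilde\lambda))$, so the resulting $1-\beta$ statement is automatically valid simultaneously for every $(\theta,\lambda)\in\Xi(\tilde\theta,\tilde\lambda)$, exactly as claimed. I would stress that the constants $\rho$ and $\alpha$ are fixed by $(\tilde\theta,\tilde\lambda)$ \emph{before} the confidence event is invoked, so no additional union bound over hyperparameters is incurred and the probability content is preserved.
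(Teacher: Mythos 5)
Your proposal is correct and follows essentially the same route as the paper: both apply \cref{thm:specific_expected_loss_bound_for_multiclass_conditional_embedding} with $\rho = \| W_{\tilde{\theta}, \tilde{\lambda}} \|_{\mathrm{tr}}$ and $\alpha^{2} = \sup_{x \in \mathcal{X}} k_{\tilde{\theta}}(x, x)$, restricted to the subclass indexed by $\Xi(\tilde{\theta}, \tilde{\lambda})$, whose definition guarantees both boundedness hypotheses. Your added derivation of the explicit trace-norm formula via $W_{\theta,\lambda} = \Phi_\theta (K_\theta + n\lambda I)^{-1}\bvec{Y}$ and $\Phi_\theta^T \Phi_\theta = K_\theta$, and your remark on the uniform-over-the-class quantifier structure, are both consistent with (and slightly more explicit than) the paper's one-line proof.
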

	
			\begin{proof}
				We first apply \cref{thm:specific_expected_loss_bound_for_multiclass_conditional_embedding} with the choice of $\rho = \| W_{\tilde{\theta}, \tilde{\lambda}} \|_{\mathrm{tr}}$ and $\alpha^{2} = \sup_{x \in \mathcal{X}} k_{\tilde{\theta}}(x, x)$. The inequality \eqref{eq:specific_expected_loss_bound_for_multiclass_conditional_embedding} then only holds for a subset of kernel hyperparameters and regularizations $(\theta, \lambda) \in \Xi(\tilde{\theta}, \tilde{\lambda})$ as defined by \eqref{eq:hyperparameter_subset_definition}.
				\qed
			\end{proof}
		
			Since inequality \eqref{eq:general_expected_risk_bound_hyperparameter_learning} holds for any $(\theta, \lambda) \in \Xi(\tilde{\theta}, \tilde{\lambda})$ and we know that $(\tilde{\theta}, \tilde{\lambda}) \in \Xi(\tilde{\theta}, \tilde{\lambda})$, we choose $\theta = \tilde{\theta}$ and $\lambda = \tilde{\lambda}$.  We now arrive at our final result from which we can bound the expected risk for a specific choice of hyperparameters $\theta \in \Theta$ and $\lambda \in \Lambda$.
			
			\begin{theorem}[\gls{MCE} Expected Risk Bound for Hyperparameters]
				\label{thm:expected_risk_bound_hyperparameter_learning}
				For any integer $n \in \mathbb{N}_{+}$ and any set of training observations $\{x_{i}, y_{i}\}_{i = 1}^{n}$, with probability $1 - \beta$ over \textit{iid} samples $\{X_{i}, Y_{i}\}_{i = 1}^{n}$ of length $n$ from $\mathbb{P}_{X Y}$, every $\theta \in \Theta$ and $\lambda \in \Lambda$ satisfies
				\begin{equation}
					\mathbb{E}[\mathcal{L}_{e^{-1}}(Y, \bvec{f}_{\theta, \lambda}(X))] \leq \frac{1}{n} \sum_{i = 1}^{n} \mathcal{L}_{\epsilon}(Y_{i}, \bvec{f}_{\theta, \lambda}(X_{i})) + 4 e \; r(\theta, \lambda) + \sqrt{\frac{8}{n} \log{\frac{2}{\beta}}},
				\label{eq:expected_risk_bound_hyperparameter_learning}
				\end{equation}
				for every $\epsilon \in (0, e^{-1})$, where 
				\begin{equation}
				\begin{aligned}
					\bvec{f}_{\theta, \lambda}(x) &:= \bvec{Y}^{T} (K_{\theta} + n \lambda I)^{-1} \bvec{k}_{\theta}(x), \\
					r(\theta, \lambda) &:= \sqrt{\mathrm{trace}\bigg(\bvec{Y}^{T} (K_{\theta} + n \lambda I)^{-1} K_{\theta} (K_{\theta} + n \lambda I)^{-1} \bvec{Y}\bigg) \sup_{x \in \mathcal{X}} k_{\theta}(x, x)}.
				\end{aligned}
				\end{equation}
			\end{theorem}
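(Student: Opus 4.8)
The plan is to obtain \cref{thm:expected_risk_bound_hyperparameter_learning} as an essentially immediate consequence of \cref{thm:general_expected_risk_bound_hyperparameter_learning}, by exploiting the fact that the pair $(\tilde{\theta}, \tilde{\lambda})$ generating the set $\Xi(\tilde{\theta}, \tilde{\lambda})$ is itself a member of that set. First I would fix arbitrary $\tilde{\theta} \in \Theta$ and $\tilde{\lambda} \in \Lambda$ and invoke \cref{thm:general_expected_risk_bound_hyperparameter_learning}: with probability at least $1 - \beta$, the bound \eqref{eq:general_expected_risk_bound_hyperparameter_learning} holds uniformly over every $(\theta, \lambda) \in \Xi(\tilde{\theta}, \tilde{\lambda})$, with complexity term $4 e \sqrt{\sup_{x \in \mathcal{X}} k_{\tilde{\theta}}(x, x)} \, \| W_{\tilde{\theta}, \tilde{\lambda}} \|_{\mathrm{tr}}$. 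This uniformity is precisely what the Rademacher argument underlying \cref{thm:specific_expected_loss_bound_for_multiclass_conditional_embedding} delivers over the restricted class $F_{n}(\Xi(\tilde{\theta}, \tilde{\lambda}))$.

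Next I would specialise the uniform statement to the single pair $(\theta, \lambda) = (\tilde{\theta}, \tilde{\lambda})$, which is valid because $(\tilde{\theta}, \tilde{\lambda}) \in \Xi(\tilde{\theta}, \tilde{\lambda})$ by construction. At this pair the complexity constants collapse to exactly the data-dependent quantity we want. Using the explicit form $\| W_{\tilde{\theta}, \tilde{\lambda}} \|_{\mathrm{tr}} = \sqrt{\mathrm{trace}(\bvec{Y}^{T} (K_{\tilde{\theta}} + n \tilde{\lambda} I)^{-1} K_{\tilde{\theta}} (K_{\tilde{\theta}} + n \tilde{\lambda} I)^{-1} \bvec{Y})}$ already recorded in \cref{thm:general_expected_risk_bound_hyperparameter_learning} (which follows from $W_{\theta, \lambda} = \Phi_{\theta} (K_{\theta} + n \lambda I)^{-1} \bvec{Y}$ together with $\Phi_{\theta}^{T} \Phi_{\theta} = K_{\theta}$), the term $4 e \sqrt{\sup_{x \in \mathcal{X}} k_{\tilde{\theta}}(x, x)} \, \| W_{\tilde{\theta}, \tilde{\lambda}} \|_{\mathrm{tr}}$ equals precisely $4 e \, r(\tilde{\theta}, \tilde{\lambda})$. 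Relabelling $(\tilde{\theta}, \tilde{\lambda})$ as $(\theta, \lambda)$ and noting that the pair was arbitrary in $\Theta \times \Lambda$ then yields \eqref{eq:expected_risk_bound_hyperparameter_learning} for every $\theta \in \Theta$ and $\lambda \in \Lambda$.

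The hard part will not be any computation but the quantifier logic, which I would state with care. The claim is \emph{not} that a single event holds simultaneously for all $(\theta, \lambda)$; rather, for each fixed $(\theta, \lambda)$ the relevant high-probability event is the one attached to the restricted class $F_{n}(\Xi(\theta, \lambda))$, and it is this $(\theta, \lambda)$-dependent event that carries probability at least $1 - \beta$. The device that sidesteps any union bound over the (possibly uncountable) hyperparameter space is exactly that the complexity radius $\rho = \| W_{\theta, \lambda} \|_{\mathrm{tr}}$ and the feature bound $\alpha^{2} = \sup_{x \in \mathcal{X}} k_{\theta}(x, x)$ are permitted to depend on the very pair being bounded, so that each pair is extracted as the generator of its own set $\Xi$. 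I would make this self-referential construction explicit to forestall the misreading that a guarantee uniform over all $(\theta, \lambda)$ is being asserted; once this is clear, the remainder is substitution.
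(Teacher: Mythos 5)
Your proposal is correct and takes essentially the same approach as the paper: its proof likewise applies \cref{thm:general_expected_risk_bound_hyperparameter_learning} at the generator pair $(\theta, \lambda) = (\tilde{\theta}, \tilde{\lambda})$, justified by $(\tilde{\theta}, \tilde{\lambda}) \in \Xi(\tilde{\theta}, \tilde{\lambda})$, and then relabels $(\tilde{\theta}, \tilde{\lambda}) \rightarrow (\theta, \lambda)$. Your explicit caveat that the resulting guarantee is per-hyperparameter-pair---the high-probability event depends on $(\theta, \lambda)$ through the restricted class $F_{n}(\Xi(\theta, \lambda))$, so no single event works uniformly over $\Theta \times \Lambda$---is a subtlety the paper's terse proof leaves implicit, and making it explicit is a genuine improvement in rigor.
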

			
			\begin{proof}
				We first apply \cref{thm:general_expected_risk_bound_hyperparameter_learning} with the choice of $\theta = \tilde{\theta}$ and $\lambda = \tilde{\lambda}$. We then replace the notation $\tilde{\theta} \rightarrow \theta$ and $\tilde{\lambda} \rightarrow \lambda$ back to avoid cluttered notation. Note that this should not be confused with the general $\theta$ and $\lambda$ from earlier theorems.
				\qed
			\end{proof}
	
	\newpage
	\section{Special Cases and Model Architectures}
	\label{app:special_cases}
	
		For \glspl{MCE}, the modelling lies in the choice of the kernel family $k_{\theta} : \mathcal{X} \times \mathcal{X} \to \mathbb{R}$ over the input space $\mathcal{X}$. The only requirement for the kernel $k$ is that it is symmetric and positive definite, and thus we may construct richer and more expressive kernel families in any way subject to such requirements. Once such a kernel family is constructed, the kernel hyperparameters $\theta$, as well as the regularization hyperparameter $\lambda$, can be learned effectively using algorithm 1.
		
		One way to construct richer and more expressive kernels is to compose them from simpler kernels. For example, we can construct new kernels through convex combinations or products of multiple simpler kernels \citep{genton2001classes}. Any new parameters, such as coefficients for linear combinations of simpler kernels, can be included into the kernel hyperparameters $\theta$ and learned in the same way as before. Alternatively, there may be domain specific structures or representations within the data that can be exploited. We can then construct the kernel family by incorporating such structural representations into the kernel. Even better, we can construct the kernel family so that it is capable or learning such structural representations by itself, by parameterizing such representations into the kernel.
		
		In this section, we focus on special cases of the \gls{MCE} where the kernel family is constructed through explicit feature maps. This construction allows the incorporation of trainable domain specific structures and enables scalability to larger datasets. We first begin by introducing the explicit \gls{MCE} in \cref{app:explicit_multiclass_conditional_embedding}, where explicit feature maps can be learned while enabling scalability to larger datasets. We then construct the \gls{CEN} in \cref{app:conditional_embedding_network}, where the kernel family is formed from multiple layers of learned representations before a simpler kernel encodes their similarity for inference. Finally, we marry both constructions into the explicit \gls{CEN} in \cref{app:explicit_conditional_embedding_network}, which provides a scalable and more applicable version of the deep \gls{CEN} by placing a linear kernel on the network features.
		
		In essence, we can categorise the \gls{MCE} using two properties: the model width and the model depth. The model width represents the dimensionality of the feature space used to construct the linear decision boundaries. The model depth represents the number of transformations used to map examples from the input space to the feature space. By implicitly defining a high dimensional feature space through simple transformations, typical nonlinear kernels produce classifiers that have a shallow but wide architecture. In contrast, the three \gls{MCE} variants to be introduced in this section form other combinations of model architecture in both depth and width. Of course, this characterization of architecture is not mutually exclusive. For example, a polynomial kernel can be seen as a nonlinear kernel where higher order polynomial features are implicitly defined, or as a linear kernel on explicit polynomial features. We summarize those architectures in \cref{tab:multiclass_conditional_embedding_variants}.
		
		\begin{table}[h]
			\caption{Properties of \gls{MCE} architectures}
			\label{tab:multiclass_conditional_embedding_variants}
			\centering
			\begin{tabular}{lccccc}
				\gls{MCE} Variant & Width & Depth & Scalability & Flexibility & Typical Datasets  \\
				\midrule
				Implicit \gls{MCE} & Wide & Shallow & Low & High & High or Low $d$, Low $n$ \\
				Explicit \gls{MCE} & Narrow & Shallow & High & Low & Low $d$, High $n$ \\
				Implicit \gls{CEN} &  Wide & Deep & Low & High & Structured $d$, Low $n$ \\
				Explicit \gls{CEN} & Narrow & Deep & High & High & Structured $d$, High $n$ \\ 
			\end{tabular}
		\end{table}
	
		\subsection{Explicit Multiclass Conditional Embedding}
		\label{app:explicit_multiclass_conditional_embedding}
	
			The advantage of using a kernel-based classifier is that the kernel $k$ allows us to express nonlinearities in a simple way. It does this by implicitly mapping the input space $\mathcal{X}$ to a high dimensional feature space $\mathcal{H}_{k}$ of non-linear basis functions such that decision boundaries become linear in that space. For many kernels, such as the Gaussian kernel defined over the Euclidean space, the feature space $\mathcal{H}_{k}$ has dimensionality that is uncountably infinite. Nevertheless, by virtue of the Representer Theorem \citep{kimeldorf1971some}, the resulting decision functions can be represented by a finite linear combination of kernels centered at the training data, and the \gls{MCE} is no exception. This elegant and convenient result enables exact inference to be performed while only requiring a finite kernel gram matrix of the size of the dataset ($n \times n$) to be computed. In this way, the capacity of the model grows with the size of the dataset, which makes kernel methods nonparametric and very flexible, as it can adapt to the complexity of a dataset even with relatively simple kernels. 
			
			However, this elegant property is also the very reason that prevents kernel-based methods from scaling to larger datasets, as the size of such a gram matrix grows very quickly by $O(n^{2})$. Many kernel-based methods also require the inversion of a regularized gram matrix, which has a time complexity of $O(n^{3})$, and cannot be easily parallelized like standard matrix multiplications. As such, inference on datasets beyond tens of thousands of observations quickly becomes impractical to perform with kernel-based techniques.
			
			In order to scale to big datasets, instead of placing a kernel over the input space directly and let it implicitly define the feature space, we explicitly define a finite dimensional feature space $\mathcal{Z} \subseteq \mathbb{R}^{p}$ of lower dimension $p$, where $p < n$, and place a linear kernel over it. That is, we specify a family of explicit features maps $\varphi_{\theta} : \mathcal{X} \to \mathcal{Z}$, and place a linear kernel on top of these explicit features,
			\begin{equation}
			k_{\theta}(x, x') = \varphi_{\theta}(x)^{T} \varphi_{\theta}(x').
			\end{equation}
			
			By explicitly defining a finite dimensional feature space, the matrix to be inverted during both learning and inference in the \gls{MCE} can be reduced from size $n \times n$ to size $p \times p$ by using the Woodbury matrix inversion identity \citep{higham2002accuracy}. We use this identity to modify algorithm 1 to \cref{alg:explicit_multiclass_conditional_embedding_training} to exploit this computational speed up.
			
			However, with a fixed and finite amount of feature basis, the model becomes parametric and its flexibility is compromised. In other words, the model is narrow in the number of feature representations. We therefore turn to multi-layered feature compositions, where the flexibility of a model comes from the deep architecture instead of implicit high dimensional features.
			
			\begin{algorithm}[tb]
				\caption{\gls{MCE} Hyperparameter Learning with Batch Stochastic Gradient Updates for Explicit Features}
				\label{alg:explicit_multiclass_conditional_embedding_training}
				\begin{algorithmic}[1]
					\STATE {\bfseries Input:} feature family $\varphi_{\theta} : \mathcal{X} \to \mathcal{Z} \subseteq \mathbb{R}^{p}$, dataset $\{x_{i}, y_{i}\}_{i = 1}^{n}$, feature parameters $\theta_{0}$, regularization hyperparameters $\lambda_{0}$, learning rate $\eta$, batch size $n_{b}$
					\STATE $\theta \leftarrow \theta_{0}$, $\lambda \leftarrow \lambda_{0}$
					\REPEAT
					\STATE Sample the next batch $\mathcal{I}_{b} \subseteq \mathbb{N}_{n}$, s.t. $| \mathcal{I}_{b} | = n_{b}$ \hspace{\fill} % (For gradient descent, $n_{b} = n$ and $\mathcal{I}_{b} = \mathbb{N}_{n}$)
					\STATE $Y \leftarrow \{\delta(y_{i}, c) : i \in \mathcal{I}_{b}, c \in \mathbb{N}_{m}\} \hspace{\fill} \in \{0, 1\}^{n_{b} \times m}$
					\STATE $Z_{\theta} \leftarrow \{\varphi_{\theta}(x_{i}) : i \in \mathcal{I}_{b}\} \hspace{\fill} \in \mathbb{R}^{n_{b} \times p}$
					\STATE $L_{\theta, \lambda} \leftarrow \mathrm{cholesky}(Z_{\theta}^{T} Z_{\theta} + n_{b} \lambda I_{p}) \hspace{\fill} \in \mathbb{R}^{p \times p}$
					\STATE $W_{\theta, \lambda} \leftarrow L_{\theta, \lambda}^{T} \backslash (L_{\theta, \lambda} \backslash Z_{\theta}^{T} Y) \hspace{\fill} \in \mathbb{R}^{p \times m}$
					\STATE $P_{\theta, \lambda} \leftarrow Z_{\theta} W_{\theta, \lambda} \hspace{\fill} \in \mathbb{R}^{n_{b} \times m}$
					\STATE $r(\theta, \lambda) = \alpha(\theta) \sqrt{\sum_{c = 1}^{m} \sum_{j = 1}^{p} (W_{\theta, \lambda})_{j, c}^{2}}$
					\STATE $q(\theta, \lambda) \leftarrow \frac{1}{n_{b}} \sum_{i = 1}^{n_{b}} \mathcal{L}_{\epsilon}((Y)_{i}, (P_{\theta, \lambda})_{i}) + 4 e \; r(\theta, \lambda)$
					\STATE $(\theta, \lambda) \leftarrow \mathrm{GradientBasedUpdate}(q, \theta, \lambda; \eta)$ % \hspace{\fill} (Or other gradient based updates such as Adam)
					\UNTIL{maximum iterations reached} % \hspace{\fill} (Stop if magnitude of all gradients are below $\epsilon$)
					\STATE {\bfseries Output:} kernel hyperparameters $\theta$, regularization hyperparameter $\lambda$
				\end{algorithmic}
			\end{algorithm}
	
		\subsection{Conditional Embedding Network}
		\label{app:conditional_embedding_network}
	
			For many application domains, there are natural structures in the data. For example, in image recognition, pixel dimensions are spatially correlated: nearby pixels are more related, and ordering between the pixel dimensions matter. One would expect convolutional features \citep{lecun1998gradient} to be natural in this domain, and provide a performance boost to our classifier should it be included. In this way, we can often benefit by including domain specific structures and features into our model.
			
			In this section, we focus on constructing kernels for which inputs $x, x' \in \mathcal{X}$ is to undergo various stages of feature transformations before such it is passed into a simpler kernel $\kappa$ that captures the similarity between the representations. Specifically, we pay particular attention to feature transformations in the form of a perceptron, so that the cumulative stages of feature transformation become the (feed-forward) multi-layer perceptron that is familiar within the neural network literature.
			
			Formally, let $\mathcal{F}_{0} := \mathcal{X}$ be the original input space. The $j^{\mathrm{th}}$ layer of the network $\varphi^{(j)}_{\theta_{j}} : \mathcal{F}_{j - 1} \to \mathcal{F}_{j}, j = 1, 2, \dots, L$ is to transform features from the previous layer to features in the current layer, where $L$ is the total number of such feature transformation layers, and $\theta_{j} \in \Theta_{j}$ parametrizes each of those transformations.
			
			For example, in a typical multi-layer perceptron context, each layer can be written as $\varphi^{(j)}_{\theta_{j}}(x) = \sigma(W_{j} x + b_{j})$, where $W_{j}$ and $b_{j}$ are the weight and bias parameters of the layer, and $\sigma$ is an element-wise activation function, typically the rectified linear unit (ReLU) or the sigmoid. In this case, the layer is parametrized by $\theta_{j} = \{W_{j}, b_{j}\}$.
			
			Let $\kappa_{\theta_{0}} : \mathcal{F}_{p} \times \mathcal{F}_{p} \to \mathbb{R}$ be parametrized by $\theta_{0} \in \Theta_{0}$. We will construct our kernel network $k$ by
			\begin{equation}
			\begin{aligned}
				k_{\theta}(x, x') := \kappa_{\theta_{0}}\Bigg(& \varphi^{(L)}_{\theta_{L}}\bigg(\varphi^{(L - 1)}_{\theta_{L - 1}}\Big(\dots\varphi^{(2)}_{\theta_{2}}\big(\varphi^{(1)}_{\theta_{1}}(x)\big)\Big)\bigg), \\
				&\varphi^{(L)}_{\theta_{L}}\bigg(\varphi^{(L - 1)}_{\theta_{L - 1}}\Big(\dots\varphi^{(2)}_{\theta_{2}}\big(\varphi^{(1)}_{\theta_{1}}(x')\big)\Big)\bigg) \Bigg),
			\label{eq:deep_conditional_embedding_network}
			\end{aligned}
			\end{equation}
			where $\theta = (\theta_{1}, \theta_{2}, \dots, \theta_{L -1}, \theta_{L}, \theta_{0}) \in \Theta = \Theta_{1} \otimes \Theta_{2} \otimes \dots \otimes \Theta_{L - 1} \otimes \Theta_{L} \otimes \Theta_{0}$ are the collection of all parameters of each layer and the kernel $\kappa$.
			
			In order to train the multi-layered representations in an end-to-end fashion, we employ algorithm 1. With a deep architecture, the feature representations the \gls{CEN} can learn are very flexible, and can work very well for structured data by employing suitable network architectures.
			
			If we choose to employ nonlinear kernels $\kappa$, the model architecture is also wide in that an even higher dimensional feature space is implicitly defined on top of the feature space of the last network layer. Despite its supreme flexibility, this again prevents the model from being scalable. We therefore turn to the specific case where we employ a linear kernel $\kappa$ on top of the multi-layered features.
			
		\subsection{Explicit Conditional Embedding Network}
		\label{app:explicit_conditional_embedding_network}
			
			The explicit \gls{CEN} is simply a special case at the intersection of the explicit \gls{MCE} and the \gls{CEN}. From the explicit \gls{MCE} perspective, we simply choose the feature map $\varphi_{\theta}(x) = \varphi^{(L)}_{\theta_{L}}\big(\varphi^{(L - 1)}_{\theta_{L - 1}}(\dots\varphi^{(2)}_{\theta_{2}}(\varphi^{(1)}_{\theta_{1}}(x)))\big)$. From the \gls{CEN} perspective, we simply choose $\kappa(z, z') = z^{T} z'$ to be a linear kernel.
			
			This model architecture is a very practical and powerful form of the \gls{MCE}. By having a deep architecture, the classifier is still capable of learning flexible representations on structured data, while being able to scale to larger datasets due to the linear kernel at the output layer, provided that the dimensionality of the last layer is relatively small compared to the size of the dataset.
			
			As a subclass of explicit \gls{MCE}, we can employ \cref{alg:explicit_multiclass_conditional_embedding_training} to learn the multi-layered features effectively. In fact, by not mapping the multi-layered features into a nonlinear kernel, the gradients for each network weight and bias are usually more pronounced, and learning is usually faster in comparison. This approach was used to train the neural network features in our experiments.

\end{document}